\documentclass{article}

\usepackage{microtype}
\usepackage{graphicx}
\usepackage{subfigure}
\usepackage{booktabs} 

\usepackage{hyperref}
\usepackage{multirow}
\usepackage{xcolor}

\usepackage[accepted]{icml2026}

\usepackage{amsmath}
\usepackage{amssymb}
\usepackage{mathtools}
\usepackage{amsthm}

\usepackage[capitalize,noabbrev]{cleveref}

\theoremstyle{plain}
\newtheorem{theorem}{Theorem}[section]
\newtheorem{proposition}[theorem]{Proposition}
\newtheorem{lemma}[theorem]{Lemma}
\newtheorem{corollary}[theorem]{Corollary}
\theoremstyle{definition}

\newtheorem{assumption}[theorem]{Assumption}
\theoremstyle{remark}
\newtheorem{remark}[theorem]{Remark}

\usepackage[textsize=tiny]{todonotes}

\usepackage{tikz}
\usetikzlibrary{arrows.meta, positioning, calc, fit, shapes.geometric, decorations.pathmorphing}

\icmltitlerunning{Pontryagin-Guided Non-Exponential Discounting}

\begin{document}

\twocolumn[
\icmltitle{Beyond the Bellman Recursion: A Pontryagin-Guided Framework for Non-Exponential Discounting}

\begin{icmlauthorlist}
\icmlauthor{Hojin Ko}{skku}
\icmlauthor{Jeonggyu Huh}{skku}
\end{icmlauthorlist}

\icmlaffiliation{skku}{Department of Mathematics, Sungkyunkwan University, Suwon, Republic of Korea}

\icmlcorrespondingauthor{Jeonggyu Huh}{jghuh@skku.edu}

\icmlkeywords{Non-exponential discounting, Pontryagin maximum principle, continuous-time control}

\vskip 0.3in
]

\printAffiliationsAndNotice{}

\begin{abstract}
Most value-based and actor--critic reinforcement learning methods rely on Bellman-style recursions, yet these recursions collapse under non-exponential discounting common in human preferences and survival processes. We show the breakdown is structural: exponential discounting sits at a fragile intersection of multiplicativity and time homogeneity, and violating either property breaks standard dynamic programming.
To overcome this, we propose \textbf{Pontryagin-Guided Direct Policy Optimization (PG-DPO)}, a variational framework that abandons recursion and couples the Pontryagin Maximum Principle with Monte Carlo rollouts via an \textit{Adjoint-MC projection} enforcing pointwise Hamiltonian maximization.
Across multi-dimensional hyperbolic and survival-discount benchmarks, PG-DPO improves accuracy and stability where equation-driven solvers and critic-based baselines diverge.
\end{abstract}

\section{Introduction}

Reinforcement learning (RL) and continuous-time stochastic control hinge on a single principle:\emph{Bellman recursion}. It encodes time consistency by requiring that every continuation problem preserves the same objective form. In continuous time, this aligns with \emph{exponential discounting}, which underpins classical HJB equations
and many modern deep solvers based on dynamic programming \citep{bellman1957,sutton2018,han2018deepbsde}. However, non-exponential discounting is essential to capture realistic persistent risk in physical systems\citep{schultheis2022nonexp} and behavioral present bias \citep{laibson1997}. Indeed, empirical and theoretical work documents widespread departures from exponential discounting, including hyperbolic and survival-based patterns \citep{strotz1955,laibson1997,frederick2002,schultheis2022nonexp}.


\begin{figure}[t]
\centering
\definecolor{c1color}{RGB}{174, 214, 241} 
\definecolor{c2color}{RGB}{245, 176, 65}  
\definecolor{c3color}{RGB}{171, 235, 198} 
\definecolor{expcolor}{RGB}{213, 219, 219} 

\begin{tikzpicture}[font=\small]
    \coordinate (C1) at (-1.3, 0);
    \coordinate (C2) at (1.3, 0);
    \def\R{2.2}

    \draw[fill=c3color!40, thick, rounded corners=5pt] (-4.0, -2.9) rectangle (4.0, 2.9);

    \def\firstcircle{(C1) circle (\R)}
    \def\secondcircle{(C2) circle (\R)}

    \begin{scope}
        \clip \firstcircle;
        \fill[c1color] \firstcircle;
        \fill[white] \secondcircle;
    \end{scope}

    \begin{scope}
        \clip \secondcircle;
        \fill[c2color] \secondcircle;
        \fill[white] \firstcircle;
    \end{scope}

    \begin{scope}
        \clip \firstcircle;
        \fill[expcolor] \secondcircle;
    \end{scope}

    \draw[thick] \firstcircle;
    \draw[thick] \secondcircle;

    \node[anchor=south] at (-1.6, 2.1) {\textbf{Multiplicative}};
    \node[anchor=south] at (1.6, 2.1) {\textbf{Time-Homogeneous}};

    \node[align=center] at (-2.0, 0.05) {
        \textbf{Case 1}\\[-1pt]
        Bellman OK\\[-1pt]
        non-stationary
    };
    \node[align=center] at (2.0, 0.05) {
        \textbf{Case 2}\\[-1pt]
        time-inconsistent\\[-1pt]
        equilibrium
    };
    \node[align=center, text=red!70] at (0, 0) {\textbf{Exponential}};
    \node[align=center, font=\bfseries] at (0, -2.5) {
        Case 3\\[-1pt]
        general $D(s,t)$
    };
\end{tikzpicture}
\vspace{-0.5em}
\caption{\textbf{Discount-kernel taxonomy.} Exponential discounting lies at the intersection of multiplicativity \eqref{eq:intro_multiplicative} and time homogeneity \eqref{eq:intro_timehom}. Violating either property invalidates recursion-based methods.}
\label{fig:discount_taxonomy}
\vspace{-1.0em}
\end{figure}

To pinpoint the failure, let $D(s,t)$ denote the discount factor applied at evaluation time $s$ to a payoff realized at time $t\ge s$. Bellman recursion corresponds to a \emph{multiplicativity} of $D$, i.e.,\begin{equation}D(s,t)=D(s,u)D(u,t),\qquad \forall s\le u\le t.\label{eq:intro_multiplicative}\end{equation}Separately, \emph{time homogeneity} assumes discounting determined only by the delay:\begin{equation}D(s,t)=D(s+h,t+h) \qquad \forall s,t,h>0.\label{eq:intro_timehom}\end{equation}Under mild regularity, \eqref{eq:intro_multiplicative} and \eqref{eq:intro_timehom} together imply the exponential form\begin{equation}D(s,t)=e^{-\delta(t-s)}\qquad \text{for some }\delta\ge 0.\label{eq:intro_exponential}\end{equation}Thus, ``exponential'' discounting represents a narrow intersection of these two independent properties, as visualized in Figure~\ref{fig:discount_taxonomy}. 

Departing from this exponential corner does not merely generalize the problem; it fundamentally invalidates the core assumptions of the standard pipeline. If $D$ is multiplicative only (Case~1), while recursion technically survives, the failure of stationarity renders standard steady-state solvers obsolete, enforcing the computational burden of time-dependent policies~\citep{schultheis2022nonexp}. Conversely, if $D$ is time-homogeneous only (Case~2), the Bellman principle itself collapses, shattering the recursive structure required for dynamic programming. This breakdown forces a shift from optimality to time-consistent equilibrium, a problem typically patched via extended HJB systems~\citep{ekeland2006,bjork2014,yong2012} or heuristic value pipelines like UGAE~\citep{kwiatkowski2023ugae}, despite recent efforts to clarify well-posedness~\citep{lei2023wellposed,bayraktar2025relaxed}. In Case~3, the pipeline suffers a complete structural failure where both recursion and stationarity are simultaneously lost.

Such failures are intrinsic to recursion-based methods. Existing remedies either try to reinstate recursion through state augmentation or relax optimality to time-consistent equilibrium notions \citep{ekeland2006,bjork2014,yong2012}. However, both routes introduce additional time indices and/or nonlocal couplings, which quickly become brittle in high dimensions and undermine the scalability of global equation-driven solvers, including PDE-residual minimization methods such as PINNs \citep{raissi2019pinn}. Deep BSDE approaches face a closely related obstacle: they rely on global backward representations that are tailored to Bellman-consistent recursions and therefore mismatch non-multiplicative discounting \citep{han2018deepbsde}. Indeed, recent progress on deep solvers for genuinely nonlocal backward objects (e.g., BSVIEs) \citep{andersson2025deepbsvie} further underscores the need for recursion-free alternatives.

To address this, we adopt a variational approach based on adjoint sensitivity and Pontryagin’s maximum principle (PMP) \citep{chen2018neuralode,pontryagin1962}, eschewing value-function decomposition entirely \citep{pontryagin1962,yongzhou1999}. While PMP has been revisited for continuous-time RL \citep{archibald2023smp,eberhard2025pontryagin}, we propose a unified framework: \emph{Pontryagin-Guided Direct Policy Optimization (PG-DPO)} \citep{huh2025pgdpo}.


Our novelty and contributions are: (i) a structural decomposition of recursion failures in non-exponential settings (Figure~\ref{fig:discount_taxonomy}); (ii) a theoretical reformulation of PG-DPO for general discounting; and (iii) empirical gains over \emph{(TD/Bellman-error) actor--critic} baselines and \emph{global equation-driven (surrogate-fitting)} baselines across all non-exponential discounting cases (Cases~1--3).


\section{Beyond Bellman Optimality: Pontryagin Optimality and PG-DPO}
\label{sec:method}

\subsection{Pontryagin Maximum Principle (PMP)}
\label{sec:pmp}

Non-exponential discounting can invalidate Bellman recursion and thus obstruct a single Markovian value-function characterization.
We therefore base our methodology on \emph{Pontryagin optimality} \citep{pontryagin1962,yongzhou1999}.
Our viewpoint is \emph{decision-time anchoring}: at each decision time $t_0$, we treat the remaining-horizon problem on $[t_0,T]$
with the explicit kernel $D(t_0,\cdot)$ and enforce a pointwise Pontryagin condition. We record the PMP ingredients that we will
enforce algorithmically in PG-DPO.

\textbf{General discounted stochastic control.}
Fix a finite horizon $[t_0,T]$. Let $(\Omega,\mathcal F,(\mathcal F_t)_{t\in[t_0,T]},\mathbb P)$ support a $d$-dimensional Brownian motion $W_t \in \mathbb R^d$.
Consider the controlled diffusion
\begin{equation}
\label{eq:state_general_Dst}
\begin{aligned}
  dX_t = b(t,X_t,u_t)\,dt + \sigma(t,X_t,u_t)\,dW_t,
  \quad X_{t_0}=x_0,
\end{aligned}
\end{equation}
where $X_t\in\mathbb R^d$ and $u_t\in\mathcal U$ is an admissible control.
Throughout, admissibility means $(u_t)$ is progressively measurable and satisfies the usual integrability conditions that
ensure \eqref{eq:state_general_Dst} is well-posed. 

Let $D:[t_0,T]\times[t_0,T]\to\mathbb R_+$ be a discount kernel with $D(s,s)=1$.
For a running reward $\ell$ and terminal reward $g$, define the anchored objective
\begin{equation}
\label{eq:obj_general_Dst}
\begin{split}
J(t_0,x_0;u)
:=\; \mathbb{E}_{t_0,x_0}\!\Bigg[
\int_{t_0}^{T} D(t_0,t)\,\ell(t,X_t,u_t)\,dt \\
\qquad\qquad + D(t_0,T)\,g(x_T)
\Bigg].
\end{split}
\end{equation}
where $\mathbb{E}_{t_0,x_0}[\cdot] := \mathbb{E}[\cdot \mid X_{t_0}=x_0]$.

Under standard smoothness and integrability assumptions, if $u^\star$ denotes the relevant time-consistent solution: when $D$ is multiplicative, we use the term \emph{optimal} $u^* \in \mathop{\mathrm{argmax}}_{u \in \mathcal{U}} J(t_0, x_0;u)$, otherwise \emph{equilibrium},
characterized by $\liminf_{\epsilon \downarrow 0} \epsilon^{-1} (J(t, s; u^\star) - J(t, s; u^\epsilon_{v})) \ge 0$ against any spike variation $u^\epsilon_{v}$ \citep{ekelandlazrak2006,bjorkkhapkomurgoci2017,yong2012}.)

\textbf{Anchored Hamiltonian and adjoint BSDE.}
For an anchor time $t_0$, define the (anchored) Hamiltonian
\begin{equation}
\label{eq:ham_general_Dst}
\begin{aligned}
H(t_0,t,x,u,\lambda,Z)
:=\;& D(t_0,t)\,\ell(t,x,u)
+ \langle \lambda, b(t,x,u)\rangle \\
&\quad + \mathrm{Tr}\!\big(Z^\top \sigma(t,x,u)\big).
\end{aligned}
\end{equation}
where $\lambda_t\in\mathbb R^d$ is the adjoint process and
$Z_t\in\mathbb R^{d\times d}$ is the diffusion coefficient in the backward equation.

Let $X^\star$ be the induced state trajectory from $u^\star$. Then, there exist adapted processes $(\lambda^\star,Z^\star)$ satisfying the adjoint equation:
\begin{equation}
\label{eq:bsde_general_Dst}
\begin{aligned}
-d\lambda^\star_t
=\;& \partial_x H\!\big(t_0,t,X^\star_t,u^\star_t,\lambda^\star_t,Z^\star_t\big)\,dt
- Z^\star_t\,dW_t, \\
\lambda^\star_T
=\;& D(t_0,T)\,\nabla g\!\big(X^\star_T\big).
\end{aligned}
\end{equation}

\paragraph{Maximum condition.}
PMP enforces a pointwise maximization of the Hamiltonian \citep{pontryagin1962,yongzhou1999}:
\begin{equation}
\label{eq:maxcond_general_Dst}
u^\star_t \in \arg\max_{u\in\mathcal U}
H\big(t_0,t,X^\star_t,u,\lambda^\star_t,Z^\star_t\big).
\end{equation}
This condition holds for a.e.\ $t\in[t_0,T]$, $\mathbb{P}$-a.s.

\textbf{Implications of PMP.}
The Maximum Condition \eqref{eq:maxcond_general_Dst} implies that the optimal/equilibrium control $u^\star$ is directly determined by the adjoint $\lambda^\star$, meaning that estimating $\lambda^\star$ is sufficient to synthesize $u^\star$. 

Existing deep solvers typically do auxiliary function approximation to estimate $\lambda^\star$, based on Bellman recursion (HJB, BSDE). Even under time-inconsistency, this paradigm is rigidly maintained via Extended HJB systems or BSVIEs, despite the loss of standard recursion.

However, as is well known, global function approximation is often sensitive to heuristics and environmental stochasticity. Furthermore, the inherent recursive structure suffers from error propagation, where initial approximation errors inevitably accumulate step-by-step.

We propose a fundamentally different algorithmic paradigm: we interpret Backpropagation Through Time (BPTT) as a \emph{stochastic adjoint estimator}\footnote{We assume access to a stochastic simulator (physics-based or statistically estimated), or a differentiable learned world model, that supports pathwise differentiation and thus enables BPTT-based adjoint estimation.}.
In our framework, (8) plays the role of a Bellman-free \emph{time-consistent} condition and will be enforced by
the Adjoint-MC projection step in Section~\ref{subsec:pgdpo}.

\subsection{Pontryagin-Guided Direct Policy Optimization}
\label{subsec:pgdpo}

PG-DPO is a two-stage, Bellman-free procedure that couples Monte Carlo rollouts with a Pontryagin projection.
Beyond prior PG-DPO\citep{huh2025pgdpo}, tailored to time-consistent objectives, we enforce the decision-time--anchored (diagonal) Pontryagin condition to obtain a time-consistent equilibrium under non-exponential discounting.
Stage~1 warm-starts a differentiable policy by direct rollout optimization; Stage~2 estimates $\lambda$ via BPTT and performs action-space Hamiltonian maximization.

\textbf{Policy class.}
We parameterize the control as a neural network $u_\theta(t,x)$ with explicit time input. This network does not serve as a final output of policy, but only intermediate proxy to guide state exploration.

\textbf{Discretization and Monte Carlo objective.}
For computing integrals by finite sums, we discretize $[t_0,T]$ over short time steps with $t_k=t_0+k\Delta t$, $\Delta t=(T-t_0)/N$.
Using Euler--Maruyama,
\begin{equation}
\label{eq:em_rollout}
\begin{split}
X_{k+1}
&=
X_k
+
b\!\big(t_k,x_k,u_\theta(t_k,x_k)\big)\,\Delta t \\
&\quad +
\sigma\!\big(t_k,x_k,u_\theta(t_k,X_k)\big)\,\Delta W_k ,
\end{split}
\end{equation}
with $\Delta W_k \sim \mathcal N(0, \Delta t)$.
For each simulated path, we approximate \eqref{eq:obj_general_Dst} by
\begin{equation}
\label{eq:discrete_return_Dst}
\begin{split}
\widehat J(t_0,x_0;\theta)
:=
&\sum_{k=0}^{N-1}
D(t_0,t_k)\,
\ell\big(t_k,X_k,u_\theta(t_k,X_k)\big)\,\Delta t \\
&\quad + D(t_0,T)\,g(X_N).
\end{split}
\end{equation}

\begin{algorithm}[t]
\caption{PG-DPO}
\label{alg:pgdpo_concise}
\begin{algorithmic}[1]
\STATE \textbf{Input:} policy $u_\theta$; anchor dist.\ $\nu$; rollout steps $N$; batch $M$; iters $K_0$; stepsizes $\{\alpha_j\}$.
\STATE \hspace{1.2em} projection params $(M_{\mathrm{MC}},N')$; query set $\mathcal Q$ (each $(t,x)$).
\STATE \textbf{Stage 1 (rollout warm-start):} initialize $\theta$
\FOR{$j=0,\dots,K_0-1$}
  \STATE Sample $\{(t_0^{(i)},x_0^{(i)})\}_{i=1}^M\sim\nu$ and simulate $M$ rollouts via \eqref{eq:em_rollout}
  \STATE $\widehat g \leftarrow \frac{1}{M}\sum_{i=1}^M \nabla_\theta \widehat J(t_0^{(i)},x_0^{(i)};\theta)$ \hfill (BPTT)
  \STATE $\theta \leftarrow \theta + \alpha_j\,\widehat g$
\ENDFOR
\STATE $\theta^\star \leftarrow \theta$
\STATE \textbf{Stage 2 (Adjoint-MC projection):}
\FOR{each $(t,x)\in\mathcal Q$}
  \STATE Simulate $M_{\mathrm{MC}}$ rollouts from $(t,x)$ under $u_{\theta^\star}$ (horizon $N'$; anchor at $t$)
  \STATE $\widehat\lambda(t,x)\leftarrow \frac{1}{M_{\mathrm{MC}}}\sum_{m=1}^{M_{\mathrm{MC}}}
          \frac{\partial \widehat J^{(m)}(t,x;\theta^\star)}{\partial x}$ \hfill (BPTT)
  \STATE Compute $\hat{u}(t,x)$ via \eqref{eq:pmp_argmax} (few Newton/barrier steps; warm-start at $u_{\theta^\star}(t,x)$)
\ENDFOR
\STATE \textbf{Return:} $\theta^\star$ and $\hat{u}$ on $\mathcal Q$
\end{algorithmic}
\end{algorithm}

\textbf{Stage 1: rollout warm-start.}
We warm-start $\theta$ by maximizing $\widehat J(t_0,x_0;\theta)$ via BPTT on the rollout graph, optionally randomizing
anchors $(t_0,x_0)\sim\nu$.

\textbf{Stage 2: Adjoint-MC projection (control synthesis).}
Given a query $(t,x)$ and the frozen warm-start policy $u_{\theta^\star}$, we discretize $[t,T]$ by
$t^{(t)}_k=t+k\Delta t'$, $\Delta t'=(T-t)/N'$, simulate $M_{\mathrm{MC}}$ rollouts from $(t,x)$, and evaluate an objective
\emph{anchored at $t$}:
\[
\begin{aligned}
\widehat J^{(j)}\!(t,x;\theta^\star)
=
&\sum_{k=0}^{N'-1}\!
D\!\big(t,t^{(t)}_k\big)
\ell\big(t^{(t)}_k\!,X_k^{(j)}\!,u_{\theta^\star}\!(t^{(t)}_k\!,X_k^{(j)})\big)\Delta t' \\
&\quad + D(t,T)\,g\big(X_{N'}^{(j)}\big).
\end{aligned}
\]
For each rollout $j$, we compute a pathwise costate by BPTT,
$\lambda^{(j)}(t,x):=\partial \widehat J^{(j)}(t,x;\theta^\star)/\partial x$, and average
\begin{equation}
\label{eq:mc_costate_avg}
\widehat \lambda(t,x)
:=
\frac{1}{M_{\mathrm{MC}}}\sum_{j=1}^{M_{\mathrm{MC}}}\lambda^{(j)}(t,x).
\end{equation}
Anchoring at $t$ yields the diagonal Pontryagin condition (anchor $=$ decision time) used for equilibrium control when
multiplicativity fails; see Section~\ref{subsec:case_theory}.

We then finally compute the pointwise Hamiltonian maximizer \eqref{eq:maxcond_general_Dst}
\begin{equation}
\label{eq:pmp_argmax}
\hat{u}(t,x)
\in
\arg\max_{u\in \mathcal U(x)}
H\big(t,t,x,u,\widehat\lambda(t,x),\widehat Z(t,x)\big),
\end{equation}

In many problems the maximizer depends only on $\widehat\lambda$ (so $\widehat Z$ is not needed); when $Z$ is required,
it can be estimated by a standard one-step $L^2$ projection/regression on $\Delta W$.

\begin{figure*}[t]
\centering
\begin{tikzpicture}[x=1cm,y=1cm, font=\small, >=Stealth,
                    line cap=round, line join=round]

\definecolor{myblue}{RGB}{174, 214, 241}
\definecolor{myred}{RGB}{241, 148, 138}

\tikzset{
  panel/.style={rounded corners=2pt, fill=black!2, draw=black!10, line width=0.4pt},
  title/.style={font=\bfseries\small, align=center},
  state/.style={circle, fill=black, inner sep=1.3pt},
  fwd/.style={->, draw=black!35, line width=0.9pt},
  noisy/.style={->, draw=myred!85!black, line width=0.9pt, decorate,
                decoration={snake, amplitude=0.35mm, segment length=2.2mm, post length=1.5mm}},
  costate/.style={->, draw=myblue!80!black, line width=1.1pt},
  flow/.style={->, double, draw=black!70, line width=0.9pt, double distance=1.5pt},
  smalllbl/.style={font=\scriptsize, align=center}
}

\path[use as bounding box] (-0.2,-1.5) rectangle (16.8,3.8);

\begin{scope}[shift={(0,0)}]
  \node[panel, minimum width=5.6cm, minimum height=4.8cm, anchor=south west] at (0,-1.2) {};
  \node[title] at (2.8,3.1) {(a) Stochastic adjoint estimation\\(BPTT)};

  \node[state, label=left:{$X_0$}] (S0) at (0.8,1.2) {};

  \coordinate (R1) at (4.3,2.3);
  \coordinate (R2) at (4.8,1.2);
  \coordinate (RM) at (4.3,0.1);

  \draw[fwd] (S0) to[out=25,in=180] (R1);
  \draw[fwd] (S0) to[out=0,in=180] (R2);
  \draw[fwd] (S0) to[out=-25,in=180] (RM);

  \node[font=\scriptsize, text=black!55, anchor=west] at (R1) {Rollout 1};
  \node[font=\scriptsize, text=black!55, anchor=west] at (RM) {Rollout $M$};

  \draw[noisy] (R1) to[bend right=12]
    node[midway, above, sloped, font=\scriptsize, text=myred!85!black, yshift=1pt]
    {$\lambda^{\mathrm{pw}}_1$}
    ($(S0)+(0.15,0.15)$);

  \draw[noisy] (RM) to[bend left=12]
    node[midway, below, sloped, font=\scriptsize, text=myred!85!black, yshift=-1pt]
    {$\lambda^{\mathrm{pw}}_M$}
    ($(S0)+(0.15,-0.15)$);

  \node[smalllbl, text=myred!85!black] at (4.0,-0.6) {Noisy pathwise\\gradients};
\end{scope}

\draw[flow] (5.75,1.2) -- (6.35,1.2);
\node[font=\scriptsize] at (6.05,1.5) {Average};

\begin{scope}[shift={(6.5,0)}]
  \node[panel, minimum width=3.4cm, minimum height=4.8cm, anchor=south west] at (0,-1.2) {};
  \node[title] at (1.7,3.1) {(b) Stabilization};

  \node[state] (S0b) at (1.7,0.5) {}; 
  \draw[costate] (S0b) -- (1.7,2.4)
    node[midway, right, font=\small, text=myblue!80!black] {$\widehat{\lambda}(t,x)$};

  \node[smalllbl, text=myblue!80!black] at (1.7,-0.4) {Stabilized\\costate};
\end{scope}

\draw[flow] (10.05,1.2) -- (10.65,1.2);
\node[font=\scriptsize] at (10.35,1.5) {Plug-in};

\begin{scope}[shift={(10.8,0)}] 
  \node[panel, minimum width=5.8cm, minimum height=4.8cm, anchor=south west] at (0,-1.2) {};
  \node[title] at (2.9,3.1) {(c) Action synthesis\\(Stage 2)};

  \draw[->, line width=0.8pt] (0.5,-0.2) -- (4.8,-0.2) node[right] {$u\in\mathcal{U}$};
  \draw[->, line width=0.8pt] (0.8,-0.4) -- (0.8,2.5) node[above] {$H(u,\widehat{\lambda})$};

  \draw[draw=myblue!70!black, line width=1.0pt] plot[smooth, tension=0.7]
    coordinates {(1.0,0.3) (2.8,2.1) (5.0,0.5)};

  \coordinate (umax) at (2.8,2.1);
  \draw[densely dashed, draw=black!55, line width=0.7pt] (umax) -- (2.8,-0.2);
  \node[circle, fill=myred!85!black, inner sep=1.5pt] at (umax) {};
  \node[font=\bfseries, below] at (2.8,-0.25) {$\hat{u}$};

  \node[font=\scriptsize, align=left, anchor=west] at (3.4,2.1)
    {Maximize Hamiltonian\\$\partial_u H \approx 0$};
\end{scope}

\end{tikzpicture}
\caption{\textbf{Mechanism of Adjoint-MC Projection.}
(a) BPTT computes noisy pathwise state-gradients ($\lambda^{\mathrm{pw}}$) from anchored rollouts.
(b) Monte Carlo averaging stabilizes these gradients into a robust costate estimate $\widehat{\lambda}(t,x)$.
(c) This estimate defines the local Hamiltonian $H(\cdot,\widehat{\lambda})$, which is maximized in action space to synthesize $u^{\mathrm{proj}}$, enforcing the Pontryagin condition directly.}
\label{fig:pgdpo_mechanism}
\end{figure*}

\textbf{Computing the projection (Newton / log-barrier).}
Equation~\eqref{eq:pmp_argmax} does not require a closed form in constraint problem: we solve the action-space maximization by a few warm-started Newton (or quasi-Newton) iterations. With inequality constraints $\mathcal U(x)=\{u: g_i(u,x)\le 0\}$, we use the
interior-point objective
\begin{equation}
\max_{u}\; H\big(t,t,x,u,\widehat\lambda(t,x),\widehat Z(t,x)\big)
+ \mu \sum_i \log\!\big(-g_i(u,x)\big),
\end{equation}
together with backtracking line search to maintain feasibility. Optionally, this projection step can be amortized via offline distillation of $\hat u(t,x)$ into a student policy $\pi_\phi(t,x)$ for faster deployment.


\subsection{BPTT as a stochastic adjoint estimator}
\label{subsec:case_theory}

A key reason PG-DPO remains effective under non-exponential discounting is that it does \emph{not} learn a value function (critic) or rely on Bellman recursion.
Instead, it computes \emph{value-gradient information on-the-fly} from differentiable Monte Carlo rollouts, and then turns this gradient into an action by enforcing a Pontryagin (Hamiltonian) condition.

\textbf{(1) BPTT gives \emph{marginal value} with respect to the state.}
Fix an anchor time $t_0$ and consider the anchored rollout return $\widehat J(t_0,s_0;\theta)$ in \eqref{eq:discrete_return_Dst}.
Reverse-mode differentiation through the rollout graph produces
\[
\lambda^{\mathrm{pw}}_k := \frac{\partial \widehat J(t_0,x_0;\theta)}{\partial X_k},
\]
which measures how much the anchored objective would change under an infinitesimal perturbation of the state at time $t_k$.
In continuous-time PMP language, this is exactly the \emph{costate} interpretation: $\lambda$ is the marginal value of the state
\citep{pontryagin1962,yongzhou1999}.

\textbf{(2) Why averaging is the point (variance reduction \& adaptedization).}
A single pathwise gradient $\lambda^{\mathrm{pw}}$ is noisy because it depends on one Brownian realization.
Stage~2 therefore computes \emph{stabilized} costate estimates by Monte Carlo averaging of BPTT state-gradients,
\begin{equation}
\label{eq:case_theory_mc_avg_recall}
\widehat \lambda(t,x)
:=
\frac{1}{M_{\mathrm{MC}}}\sum_{j=1}^{M_{\mathrm{MC}}}\lambda^{(j)}(t,x),
\end{equation}
as already defined in \eqref{eq:mc_costate_avg}.
Intuitively, $\widehat\lambda(t,x)$ plays the role of a \emph{value-gradient critic} evaluated at $(t,x)$,
but obtained by autodiff through the simulator rather than by training a separate network.

\textbf{(3) How averaged BPTT sensitivities ($\widehat\lambda$) become anchored/diagonal adjoint proxies.}
Because BPTT differentiates through the feedback dependence $u_k=u_\theta(t_k,X_k)$, the exact state-sensitivity recursion
contains an extra closed-loop term:
\[
\lambda^{\mathrm{pw}}_{k}
=
\partial_{X_k} r_k
+
(\partial_{X_k}F_k)^\top \lambda^{\mathrm{pw}}_{k+1}
+
(\partial_{X_k}u_k)^\top G_k,
\]
where $G_k$ is a discrete stationarity residual (a discrete analogue of $\partial_u H$; Appendix~\ref{app:pgdpo_bptt_projection}).
The key point is that this recursion should be matched not to an unknown \emph{optimal} costate, but to the
\emph{exact anchored adjoint of the warm-up policy itself}. After taking the predictable projection and separating the
policy-sensitivity term, the adapted BPTT sensitivity satisfies the discretized anchored adjoint drift identity up to
(i) an explicit predictable Hamiltonian-residual correction and (ii) the standard one-step Euler remainder.
Specializing the anchor to the decision time gives the diagonal costate proxy used by Stage~2.
The following theorem makes this correspondence precise.

\begin{theorem}[Anchored costate--BPTT correspondence]
\label{thm:nonexp_bptt_bridge_main}
Fix an anchor $t_0$ and the Stage~1 warm-up policy $u_{\theta^\star}$.
Let $X_k$ be the Euler rollout and set
$u_k:=u_{\theta^\star}(t_k,X_k)$.  Define
\begin{equation}
\label{eq:bptt_costate_defs_compact}
\begin{aligned}
\lambda_k^{\rm pw}
&:=\frac{\partial \widehat J(t_0,x_0;\theta^\star)}
        {\partial X_k},
&
\lambda_k
&:=\mathbb E_k[\lambda_k^{\rm pw}],                                      \\
\lambda_{k+1|k}
&:=\mathbb E_k[\lambda_{k+1}^{\rm pw}],
&
Z_k
&:=\frac{1}{\Delta t}\,
   \mathbb E_k[\lambda_{k+1}^{\rm pw}\Delta W_k^\top].
\end{aligned}
\end{equation}
For $p$ and $Z$ of compatible dimensions, write
\begin{equation}
\label{eq:anchored_H_compact}
\begin{aligned}
H_k^{t_0}(x,u;p,Z)
:={}&D(t_0,t_k)\ell(t_k,x,u)
     +p^\top b(t_k,x,u)  \\
&\quad+\langle Z,\sigma(t_k,x,u)\rangle_F .
\end{aligned}
\end{equation}
Along the Stage~1 rollout, define
\begin{equation}
\label{eq:anchored_residual_defs_compact}
\begin{aligned}
D_k^x
&:=\partial_x H_k^{t_0}
   (X_k,u_k;\lambda_{k+1|k},Z_k),                                      \\
r_k^{\rm FOC}
&:=\partial_u H_k^{t_0}
   (X_k,u_k;\lambda_{k+1|k},Z_k),                                      \\
\mathcal C_k^{\rm FOC}
&:=(r_k^{\rm FOC})^\top
   \partial_xu_{\theta^\star}(t_k,X_k).
\end{aligned}
\end{equation}
Under the standing assumptions in
Appendix~\ref{app:pgdpo_bptt_projection}, there exists an
$\mathcal F_{t_k}$-measurable remainder $\rho_k^{\rm disc}$ satisfying
$\|\rho_k^{\rm disc}\|_{L^2}=o(\Delta t)$ such that
\begin{equation}
\label{eq:anchored_bptt_bridge_compact}
\lambda_k
=
\lambda_{k+1|k}
+
\bigl(D_k^x+\mathcal C_k^{\rm FOC}\bigr)\Delta t
+
\rho_k^{\rm disc}.
\end{equation}
Thus, the gap between adapted BPTT sensitivities and the Euler discretization
of the anchored adjoint is given by the predictable Hamiltonian residual
$\mathcal C_k^{\rm FOC}\Delta t$, up to the usual one-step discretization error.

Moreover, if
\begin{equation}
\label{eq:foc_assumption_compact}
\|\partial_xu_{\theta^\star}(t_k,X_k)\|_{L^\infty}\le C_u,
\qquad
\|r_k^{\rm FOC}\|_{L^2}\le \varepsilon_k^{\rm FOC},
\end{equation}
then
\begin{equation}
\label{eq:foc_bound_compact}
\begin{aligned}
\|\mathcal C_k^{\rm FOC}\|_{L^2}
&\le C_u\,\varepsilon_k^{\rm FOC},                                      \\
\|\lambda_k-\lambda_{k+1|k}-D_k^x\Delta t\|_{L^2}
&\le C_u\,\varepsilon_k^{\rm FOC}\Delta t+o(\Delta t).
\end{aligned}
\end{equation}
In particular, exact predictable stationarity makes the adapted BPTT
sensitivities coincide with the discretized anchored adjoint up to the standard
Euler error.  The martingale one-step representation used in the proof is given
in Appendix~\ref{app:anchored_bptt_bridge}.  Specializing the anchor to the
decision time, $t_0=t_k$, yields the diagonal costate proxy used by Stage~2.
\end{theorem}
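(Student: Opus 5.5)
The plan is to begin from the exact reverse-mode recursion produced by BPTT on the Euler graph, take conditional expectations $\mathbb E_k$, and then identify each resulting term with a piece of the anchored Hamiltonian $H_k^{t_0}$.

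\textbf{Step 1: the exact pathwise recursion.} Write the one-step map as $X_{k+1}=F_k(X_k,u_k,\Delta W_k)$ with
\[
F_k = X_k + b(t_k,X_k,u_k)\Delta t + \sigma(t_k,X_k,u_k)\Delta W_k ,
\]
and the running reward as $r_k=D(t_0,t_k)\ell(t_k,X_k,u_k)\Delta t$. Here $u_k=u_{\theta^\star}(t_k,X_k)$, so it depends on $X_k$. The chain rule on the rollout graph then gives
\[
\lambda^{\rm pw}_k=\nabla_x r_k+(\partial_x F_k)^\top\lambda^{\rm pw}_{k+1}+(\partial_x u_k)^\top\bigl[\nabla_u r_k+(\partial_u F_k)^\top\lambda^{\rm pw}_{k+1}\bigr],
\]
with terminal value $\lambda^{\rm pw}_N=D(t_0,T)\nabla g(X_N)$. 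The Jacobians split naturally into a drift part and a noise part. For example,
\[
(\partial_x F_k)^\top\lambda^{\rm pw}_{k+1}=\lambda^{\rm pw}_{k+1}+\Delta t\,(\partial_x b)^\top\lambda^{\rm pw}_{k+1}+\textstyle\sum_j(\partial_x\sigma^{\cdot j})^\top\lambda^{\rm pw}_{k+1}\Delta W^j_k ,
\]
and $\partial_u F_k$ splits in the same way.

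\textbf{Step 2: conditional expectation term by term.} Apply $\mathbb E_k$. All Jacobians of $b,\sigma,\ell,u_{\theta^\star}$ are evaluated at the $\mathcal F_{t_k}$-measurable pair $(X_k,u_k)$, so they come out of the expectation. The identity term gives $\lambda_{k+1|k}$, and the drift term gives $\Delta t\,(\partial_x b)^\top\lambda_{k+1|k}$. The noise term gives $\sum_j(\partial_x\sigma^{\cdot j})^\top\mathbb E_k[\lambda^{\rm pw}_{k+1}\Delta W_k^j]$. By the definition of $Z_k$ this equals $\Delta t\,\partial_x\langle Z_k,\sigma\rangle_F$. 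Collecting these pieces, the $x$-part equals exactly $\lambda_{k+1|k}+D^x_k\Delta t$. The same computation applied to the bracket multiplying $(\partial_x u_k)^\top$ gives exactly $r^{\rm FOC}_k\Delta t$. Its transpose-product with $\partial_x u_{\theta^\star}$ is therefore $\mathcal C^{\rm FOC}_k\Delta t$.

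Taken literally, this gives \eqref{eq:anchored_bptt_bridge_compact} with $\rho^{\rm disc}_k=0$. The remainder $\rho_k^{\rm disc}$ is reserved for the gap between this discrete identity and the continuous-time adjoint. That gap comes from three sources: replacing $\mathbb E_k[\lambda_{k+1}^{\rm pw}\Delta W_k^\top]/\Delta t$ by the continuous $Z$ via the martingale representation; Itô cross-variation corrections of order $\Delta t$ times moduli of continuity; and evaluating coefficients at $t_k$ rather than along $[t_k,t_{k+1}]$. Under the standing Lipschitz and moment assumptions, each of these is $o(\Delta t)$ in $L^2$. The appendix's martingale one-step representation, $\lambda^{\rm pw}_{k+1}=\lambda_{k+1|k}+Z_k\Delta W_k+\text{orthogonal part}$, is what I would invoke for this.

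\textbf{Step 3: the bounds.} The estimate \eqref{eq:foc_bound_compact} follows from Hölder's inequality: $\|\mathcal C_k\|_{L^2}\le\|\partial_x u\|_{L^\infty}\|r_k\|_{L^2}$. Rearranging \eqref{eq:anchored_bptt_bridge_compact} and using the triangle inequality gives the second bound. Specializing to $t_0=t_k$ is then a direct substitution.

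\textbf{Main obstacle.} The hardest step is controlling integrability so that $\mathbb E_k$ can be taken term by term and the noise term can be identified with $Z_k$. This requires $\lambda^{\rm pw}_{k+1}\in L^2$ uniformly in $\Delta t$, which I would obtain by a discrete Grönwall argument on the backward recursion using bounded Jacobians. It also requires $\lambda^{\rm pw}_{k+1}\Delta W_k\in L^1$, which follows by Cauchy–Schwarz once that $L^2$ bound is in hand. Stating precisely which error is $o(\Delta t)$ depends on the appendix assumptions, and I would mirror those assumptions.
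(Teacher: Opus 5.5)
Your proposal is correct and follows essentially the paper's route: the exact closed-loop BPTT recursion of Proposition~\ref{prop:app_bptt_recursion}, then conditional expectation with the $\mathcal F_{t_k}$-measurable Jacobians pulled out, identification of the noise term with $Z_k$ to recover $\partial_x H_k^{t_0}$ and $\partial_u H_k^{t_0}$, and finally H\"older plus the triangle inequality for the bounds. Your observation that the identity holds with $\rho_k^{\rm disc}=0$ is right: the $o_{L^2}(\Delta t)$ terms carried in Steps~4--5 of the proof of Proposition~\ref{prop:app_bsde_limit} vanish identically under these definitions, so the continuous-time error sources and uniform-in-$\Delta t$ bounds you discuss are not needed for the statement as posed, and essentially only integrability of $\lambda_{k+1}^{\rm pw}$ and $\lambda_{k+1}^{\rm pw}\Delta W_k^\top$ at fixed $\Delta t$ is required.
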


\textbf{(4) Stage~2: action synthesis by (approximately) killing the Hamiltonian residual.}
Stage~2 is an action-synthesis step: it constructs a \emph{new} control $u^{\mathrm{proj}}(t,x)$ by maximizing the
Hamiltonian anchored at the decision time $t$,
\[
u^{\mathrm{proj}}(t,x)\in \arg\max_{u\in\mathcal U(x)} H\big(t,t,x,u,\widehat\lambda(t,x),\widehat Z(t,x)\big).
\]

In this sense, Stage~2 enforces a near-zero Hamiltonian stationarity residual \emph{with respect to the plug-in costate estimate}
at the query point,
\[
\partial_u H\big(t,t,x,u^{\mathrm{proj}}(t,x),\widehat\lambda(t,x),\widehat Z(t,x)\big)\approx 0.
\]
This can be viewed as a Pontryagin-style \emph{policy-improvement} step: given a costate estimate, we synthesize an action that is
stationary for the corresponding anchored Hamiltonian.

A rigorous local control-proximity guarantee for this projection step is stated in Theorem~\ref{thm:diag_projection_near_eq_main} below and proved in Appendix~\ref{app:stage2_near_equilibrium}.


\subsection{Technical Advantage}
\textbf{Rigorous stability of Stage~2.}
The diagonal Pontryagin projection is not merely a heuristic post-processing step. The next result summarizes the rigorous Stage~2 guarantee proved in Appendix~\ref{app:stage2_near_equilibrium}.

\begin{theorem}[Rigorous local control-proximity guarantee for Stage~2]
\label{thm:diag_projection_near_eq_main}
Let $u^{\mathrm{proj}}$ denote the Stage~2 projected control and let $u^\star$ denote the exact diagonal Pontryagin control, i.e., the classical optimal control when $D$ is multiplicative and the diagonal equilibrium control otherwise. Write $\|\cdot\|_{\ell,2}$, $r^{\mathrm{warm}}$, $\delta_\ell^{\mathrm{diag}}$, and $\eta_\ell^{\mathrm{num}}$ for the slab norms and residual/error terms defined in Appendix~\ref{app:stage2_near_equilibrium}. Under the local projection regularity, short-slab propagation, residual-to-solution transfer, and finite patching conditions stated there, there exist constants $\tau_0,C_{\mathrm{diag}}>0$ such that for any slab partition with length $\tau\le \tau_0$,
\begin{equation}
\label{eq:main_diag_control_gap}
\sum_{\ell=0}^{L-1}\|u^{\mathrm{proj}}-u^\star\|_{\ell,2}
\le
C_{\mathrm{diag}}
\sum_{\ell=0}^{L-1}
\Bigl(
\|r^{\mathrm{warm}}\|_{\ell,2}
+
\delta_\ell^{\mathrm{diag}}
+
\eta_\ell^{\mathrm{num}}
\Bigr),
\end{equation}
where $\delta_\ell^{\mathrm{diag}}$ is the diagonal adjoint-estimation error and $\eta_\ell^{\mathrm{num}}$ is the numerical projection error on slab $I_\ell$. Hence, if the warm-up residual is small, the diagonal BPTT adjoint estimate is accurate, and the pointwise Hamiltonian solve is numerically tight, then Stage~2 remains close to the exact diagonal control. Appendix~\ref{app:stage2_near_equilibrium} gives the full quantitative statement, including the multiplicative value-gap consequence and the non-multiplicative equilibrium-defect consequence.
\end{theorem}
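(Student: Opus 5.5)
The plan is a local stability argument for the Hamiltonian maximizer, followed by short-time propagation on each slab, then summation over the slabs. The conditions named in the statement (local projection regularity, short-slab propagation, residual-to-solution transfer, finite patching) are treated as hypotheses delivering specific Lipschitz or contraction constants. The proof combines them via a triangle inequality and a Grönwall/absorption step.

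\emph{Step 1 (pointwise stability of the argmax).} Local projection regularity will be taken to mean uniform strong concavity of $u\mapsto H(t,t,x,u,p,Z)$ near $u^\star(t,x)$, with modulus $\kappa>0$, together with Lipschitz dependence of $\partial_u H$ on $(p,Z)$. Under these, the implicit-function theorem (or simply strong monotonicity of $-\partial_u H$) makes the exact maximizer map $\Phi(t,x;p,Z)$ Lipschitz in $(p,Z)$ with constant $L_\Phi\lesssim \kappa^{-1}$. For the computed control, $u^{\mathrm{proj}}=\Phi(t,x;\widehat\lambda,\widehat Z)+e^{\mathrm{num}}$, where the numerical error $e^{\mathrm{num}}$ is controlled by the Newton/barrier stationarity residual divided by $\kappa$. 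Since $u^\star=\Phi(t,x;\lambda^\star,Z^\star)$ with the diagonal (anchor $=$ decision time) adjoint of $u^\star$, we get pointwise
\[
|u^{\mathrm{proj}}-u^\star|\le L_\Phi\bigl(|\widehat\lambda-\lambda^{\mathrm{warm}}|+|\widehat Z-Z^{\mathrm{warm}}|\bigr)+L_\Phi\bigl(|\lambda^{\mathrm{warm}}-\lambda^\star|+|Z^{\mathrm{warm}}-Z^\star|\bigr)+|e^{\mathrm{num}}|.
\]
Here $(\lambda^{\mathrm{warm}},Z^{\mathrm{warm}})$ is the exact diagonal adjoint of the warm-up policy. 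Taking slab norms, the first term is $\delta_\ell^{\mathrm{diag}}$ and the last is $\eta_\ell^{\mathrm{num}}$.

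\emph{Step 2 (warm-up adjoint versus equilibrium adjoint).} By Theorem~\ref{thm:nonexp_bptt_bridge_main}, the adapted BPTT sensitivity of $u_{\theta^\star}$ solves the anchored adjoint equation up to the predictable residual $\mathcal C^{\mathrm{FOC}}$. It remains to compare $(\lambda^{\mathrm{warm}},Z^{\mathrm{warm}})$ with $(\lambda^\star,Z^\star)$. The residual-to-solution transfer hypothesis will be used in the following form: the gap between $u_{\theta^\star}$ and $u^\star$ on a slab is bounded by the warm-up Hamiltonian residual $r^{\mathrm{warm}}$ (again by strong concavity) plus the adjoint discrepancy. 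Standard BSDE stability on a slab of length $\tau$ (Lipschitz drivers, $L^2$ a priori estimates) then bounds the adjoint discrepancy by $C\tau^{1/2}$ times the control gap in the slab norm, plus the boundary discrepancy propagated from the next slab. This coupled system is solved by absorption: choosing $\tau\le\tau_0$ with $C L_\Phi\tau_0^{1/2}<1/2$ closes the estimate on each slab.

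\emph{Step 3 (patching).} The slab inequalities are chained backward from $T$ using the finite patching condition, so that each slab's terminal adjoint error is controlled by the errors on later slabs with geometric or bounded multiplicity. Summing over $\ell$ and using a discrete Grönwall inequality with a bounded number $L$ of slabs then yields \eqref{eq:main_diag_control_gap} with $C_{\mathrm{diag}}$ depending on $\kappa$, the Lipschitz constants, and the patching constants.

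\emph{Main obstacle.} The hard step is Step 2 in the non-multiplicative case. There the diagonal adjoint is not a single BSDE: the anchor moves with the decision time, so $\lambda^\star(t,\cdot)$ comes from a family of BSDEs indexed by the anchor, effectively a BSVIE. The propagation estimate must therefore be uniform in the anchor, and the diagonal restriction must inherit it. My first attempt would bound $\sup_{t_0\in I_\ell}$ of the anchored BSDE errors using continuity of $D(t_0,\cdot)$ in $t_0$, and then restrict to the diagonal. If that fails, I would invoke the short-slab propagation hypothesis directly as a contraction statement for the diagonal map.
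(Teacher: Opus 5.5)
There is a genuine gap in Step~1. You compare $u^{\mathrm{proj}}$, the warm-up adjoint and $u^\star$ as feedback maps at one common point $(t,x)$. The theorem's error terms are not defined that way. The residual $r^{\mathrm{warm}}$ and the reference adjoint $\bar\vartheta_k$, hence also $\delta_\ell^{\mathrm{diag}}=\|\widehat\vartheta-\bar\vartheta\|_{\ell,2}$, are evaluated along the warm-up trajectory $\bar S$. By contrast, $u^{\mathrm{proj}}$ is applied along the Stage~2 closed-loop state $S^{\mathrm{proj}}$.

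Your pointwise bound therefore misses a term of the form $(L_s/m_H)|S^{\mathrm{proj}}_k-\bar S_k|$. Here $m_H$ is the strong-concavity modulus and $L_s$ the state-Lipschitz constant of $\nabla_u H$. This mismatch is itself driven by $u^{\mathrm{proj}}-\bar u$, i.e.\ by the very quantity you are bounding. Closing that loop is what the paper uses the short-slab hypotheses for:
\begin{itemize}
\item It introduces the ideal warm-up projection $u^\dagger_k:=\mathcal P_k(\bar S_k,\bar\vartheta_k)$ and the plug-in maximizer $u^\sharp_k:=\mathcal P_k(S^{\mathrm{proj}}_k,\widehat\vartheta_k)$, where $\mathcal P_k$ is the diagonal Hamiltonian argmax.
\item Strong concavity gives $\|u^{\mathrm{proj}}-u^\sharp\|_{\ell,2}\le m_H^{-1}\eta^{\mathrm{num}}_\ell$ and $\|u^\dagger-\bar u\|_{\ell,2}\le m_H^{-1}\|r^{\mathrm{warm}}\|_{\ell,2}$.
\item Lipschitz dependence gives $\|u^\sharp-u^\dagger\|_{\ell,2}\le m_H^{-1}(L_s\|S^{\mathrm{proj}}-\bar S\|_{\ell,2}+L_\vartheta\delta^{\mathrm{diag}}_\ell)$.
\item Short-slab propagation gives $\|S^{\mathrm{proj}}-\bar S\|_{\ell,2}\le C_S\Gamma^{\mathrm{in}}_\ell+C_S\tau^{1/2}\|u^{\mathrm{proj}}-\bar u\|_{\ell,2}$.
\item Splitting $u^{\mathrm{proj}}-\bar u=(u^{\mathrm{proj}}-u^\dagger)+(u^\dagger-\bar u)$ produces a term $\alpha_\tau\|u^{\mathrm{proj}}-u^\dagger\|_{\ell,2}$ with $\alpha_\tau=L_sC_S\tau^{1/2}/m_H<1$, which is absorbed into the left-hand side.
\item The incoming mismatch $\Gamma^{\mathrm{in}}_\ell$ is then patched across slabs by the next-slab propagation estimate and a discrete Gr\"onwall step.
\end{itemize}

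Conversely, the paper never proves the step you flag as the main obstacle. That step is comparing the warm-up diagonal adjoint with the equilibrium one, which in the non-multiplicative case is a BSVIE-type comparison you leave unresolved. The residual-to-solution transfer condition in the statement is precisely the hypothesis $\sum_\ell\|u^\dagger-u^\star\|_{\ell,2}\le C_{\mathrm{trans}}\sum_\ell\|r^{\mathrm{warm}}\|_{\ell,2}$. You weakened it to a bound ``plus the adjoint discrepancy''. That created the need for anchored BSDE stability and uniformity in the anchor, which is exactly where your argument stalls.

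With the hypothesis in its intended form, the theorem is just the triangle inequality through $u^\dagger$, with $C_{\mathrm{diag}}=C_{\mathrm{warm}}+C_{\mathrm{trans}}$. So your proposal spends its effort on a step the statement assumes. Meanwhile it omits the state-mismatch absorption that the statement's short-slab and patching conditions are there to supply.
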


\textbf{Bypassing the Global Approximation Bottleneck.}

Existing deep RL/solvers generally struggle with a trade-off between computational cost and control precision.
Purely end-to-end actor--critic training is often brittle because it uses a global averaged objective to enforce a \emph{pointwise} stationarity condition ($\partial_u H \approx 0$), so satisfying this local condition uniformly over the state space can require excessive iterations.
Recent \emph{model-based} trends partially mitigate sample inefficiency by learning a stochastic ``world model'' and planning through it; however, the difficulty of turning global training signals into uniformly accurate pointwise optimality conditions remains.

\emph{Global equation-driven} approaches
(e.g., PINNs, Deep BSDEs) face a different bottleneck: they rely on global function approximation.
A small error in the surrogate does not guarantee accurate recovery of its gradients, so expensive training can still yield unreliable controls.

PG-DPO fundamentally breaks this dependency. Even if the Stage~1 policy provides only a coarse approximation (weak optimality), the Stage~2 projection effectively minimizes the Hamiltonian residual via direct optimization (Appendix~\ref{app:hamiltonian_residual}). Theorem~\ref{thm:diag_projection_near_eq_main} and Appendix~\ref{app:stage2_near_equilibrium} make the resulting control-proximity guarantee explicit, so PG-DPO achieves high-confidence control synthesis with low training cost without requiring perfect global approximation.

\textbf{Computational Efficiency.}
For a \emph{single} query $(t,x)$, the dominant cost of Stage~2 is linear in the total number of simulated steps,
\begin{equation}
T(t,x)=
\label{eq:stage2_cost_per_query}
\mathcal O\!\Big(
M_{\mathrm{MC}}\,N'\,C_{\mathrm{step}}
\;+\;
I_{\mathrm{proj}}\,C_{\mathrm{proj}}
\Big),
\end{equation}
where $C_{\mathrm{step}}$ is the per-step cost of evaluating the frozen warm-start policy $u_{\theta^\star}$ and propagating the
Euler--Maruyama dynamics. $C_{\mathrm{proj}}$ is the cost per optimization iteration (e.g., evaluating the Hamiltonian gradient and Hessian),
and $I_{\mathrm{proj}}$ is the number of Newton/quasi-Newton
(or barrier) iterations used to solve \eqref{eq:pmp_argmax}.
In typical applications $I_{\mathrm{proj}}$ is a small constant (e.g., a handful of warm-started steps), so the overall runtime is
dominated by costate estimation (BPTT). Our empirical experiments in Appendix~\ref{app:runtime_analysis} validate this linear scaling, demonstrating that our method achieves sub-second inference latency ($<0.02$s) for our base configuration even on commodity CPUs. This efficiency implies that PG-DPO is not merely an offline solver but is computationally viable for online real-time control synthesis, where rapid re-planning is critical. 

If even lower latency is desired, the per-query Stage~2 projection can be amortized offline by distilling the synthesized actions $\hat u(t,x)$ over a representative query set into a lightweight student policy, reducing deployment to a single forward pass.


\begin{figure*}[t]
\centering
\subfigure[Survival discount $D(t,t+\tau)$ for varying $\beta_0$.]{%
  \includegraphics[width=0.5\textwidth]{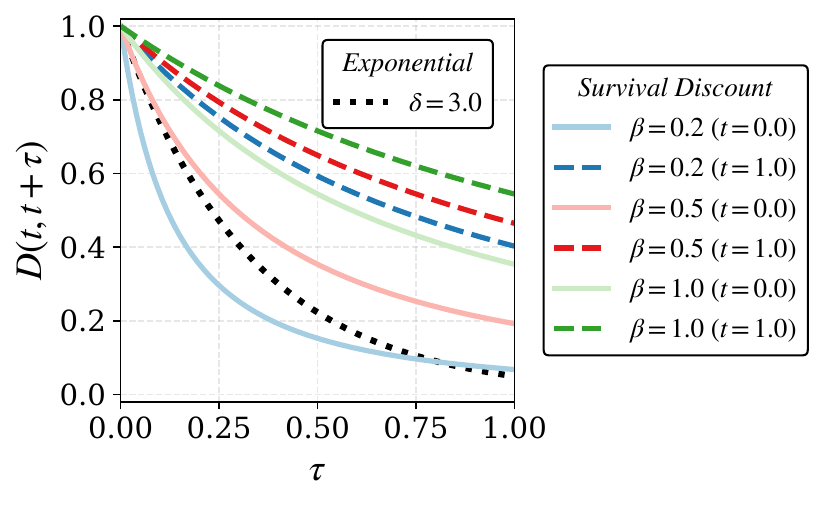}}%
\hspace{0.4em}
\subfigure[Policy $u_1(t)$ for $\beta_0=1.0$.]{%
\includegraphics[width=0.35\textwidth]{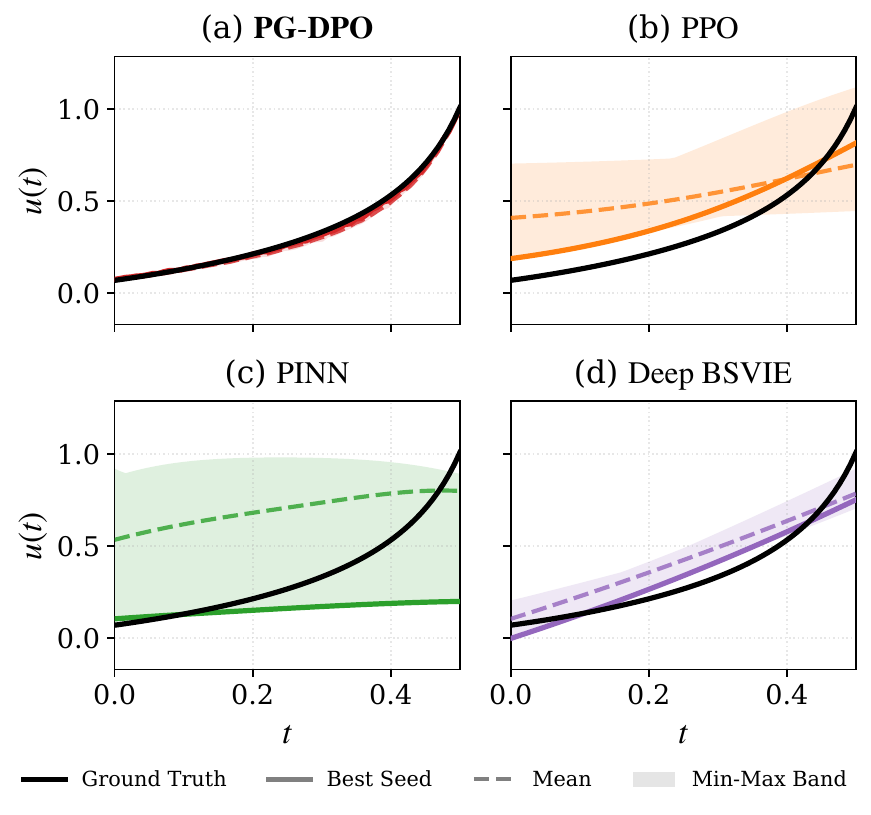}}%
\caption{\textbf{Case 1 (survival discounting).} (a) The survival-based kernel is multiplicative but time-inhomogeneous.
(b) Learned controls compared to the analytic policy along a representative trajectory.}
\label{fig:case1_kernel_policy}
\end{figure*}

\begin{table*}[t]
\caption{\textbf{Quantitative comparison of control policy errors.} Values are reported as Mean $\pm$ Std over 10 seeds.}
\centering
\footnotesize
\setlength{\tabcolsep}{6pt} 
\renewcommand{\arraystretch}{1.2} 

\resizebox{0.8\textwidth}{!}{%
\begin{tabular}{@{}lccc@{}}
\toprule
& \multicolumn{3}{c}{\textbf{Global $L_1$ Error} ($\iint|u^{\star}-\hat{u}|dxdt$)} \\
\cmidrule(lr){2-4}
\textbf{Method}
& $\beta_0=0.2$ & $\beta_0=0.5$ & $\beta_0=1.0$ \\
\midrule
PPO
& $2.51\mathrm{e}{-1} \pm 4.48\mathrm{e}{-2}$
& $2.18\mathrm{e}{-1} \pm 5.06\mathrm{e}{-2}$
& $2.23\mathrm{e}{-1} \pm 4.90\mathrm{e}{-2}$ \\

PINN
& $1.11\mathrm{e}{0} \pm 2.07\mathrm{e}{-1}$
& $1.16\mathrm{e}{0} \pm 1.20\mathrm{e}{-1}$
& $1.65\mathrm{e}{0} \pm 2.53\mathrm{e}{-1}$ \\

Deep BSDE
& $1.95\mathrm{e}{-1} \pm 4.59\mathrm{e}{-2}$
& $3.71\mathrm{e}{-1} \pm 2.90\mathrm{e}{-2}$
& $4.17\mathrm{e}{-1} \pm 1.52\mathrm{e}{-1}$ \\

DPO
& $3.82\mathrm{e}{-2} \pm 1.14\mathrm{e}{-2}$
& $6.15\mathrm{e}{-2} \pm 1.87\mathrm{e}{-2}$
& $4.42\mathrm{e}{-2} \pm 1.36\mathrm{e}{-2}$ \\

\midrule
\textbf{PGDPO (Ours)}
& $\mathbf{1.45\mathrm{e}{-2} \pm 5.14\mathrm{e}{-3}}$
& $\mathbf{2.97\mathrm{e}{-2} \pm 1.29\mathrm{e}{-2}}$
& $\mathbf{1.80\mathrm{e}{-2} \pm 8.10\mathrm{e}{-3}}$ \\
\bottomrule
\end{tabular}
}
\label{tab:control_errors_1}
\end{table*}

\section{Numerical Results}
\label{sec:numerical_results}

Our experiments cover the entire spectrum of discount-kernel regimes depicted in Figure~\ref{fig:discount_taxonomy}. This confirms that \textbf{PG-DPO} generalizes effectively across diverse non-exponential discounting landscapes, verifying its comprehensive applicability. Throughout, we include multi-dimensional setting ($d=5$) and visualize only the first control coordinate ($u_1$ or $\pi_1$) for clarity. For scalability, high-dimensional settings ($10\le d\le100$) are reported in the Appendix~\ref{app:dimension_scalability}.

Baselines are chosen to span increasing levels of model information: PPO~\citep{schulman2017proximal} as a model-free actor--critic baseline, DPO (Stage~I) as a simulator-level policy-optimization baseline, and PINN~\citep{raissi2019pinn} and Deep BSDE/BSVIE~\citep{han2018deepbsde} as global equation-driven baselines; detailed definitions and interpretation guidelines are deferred to Appendix~\ref{app:baseline_guide}.\footnote{Grid-based DP is omitted because its computational cost scales exponentially with dimension. \emph{Ground Truth} denotes reference policies from the standard HJB in the multiplicative regime (Case~1) or from the extended/equilibrium HJB in the time-inconsistent regimes (Cases~2--3).}

We demonstrate the high accuracy and robustness of \textbf{PG-DPO} by reporting the $L_1$ error and
standard deviation across multiple random seeds. To ensure that the observed performance gap is
purely algorithmic, we allocated a larger computational budget and model complexity exclusively to the approximation step of the baselines. \footnote{For additional experimental details, see Appendix~\ref{app:fairness_case1}. For full details and reproducibility, see \url{https://github.com/Non-exponential/Beyond-the-Bellman-Recursion}}


\subsection{Case 1 Benchmark: Survival-Discounted Target Control}
\label{subsec:case1}

\begin{figure*}
    \centering
    \includegraphics[width=0.75\linewidth]{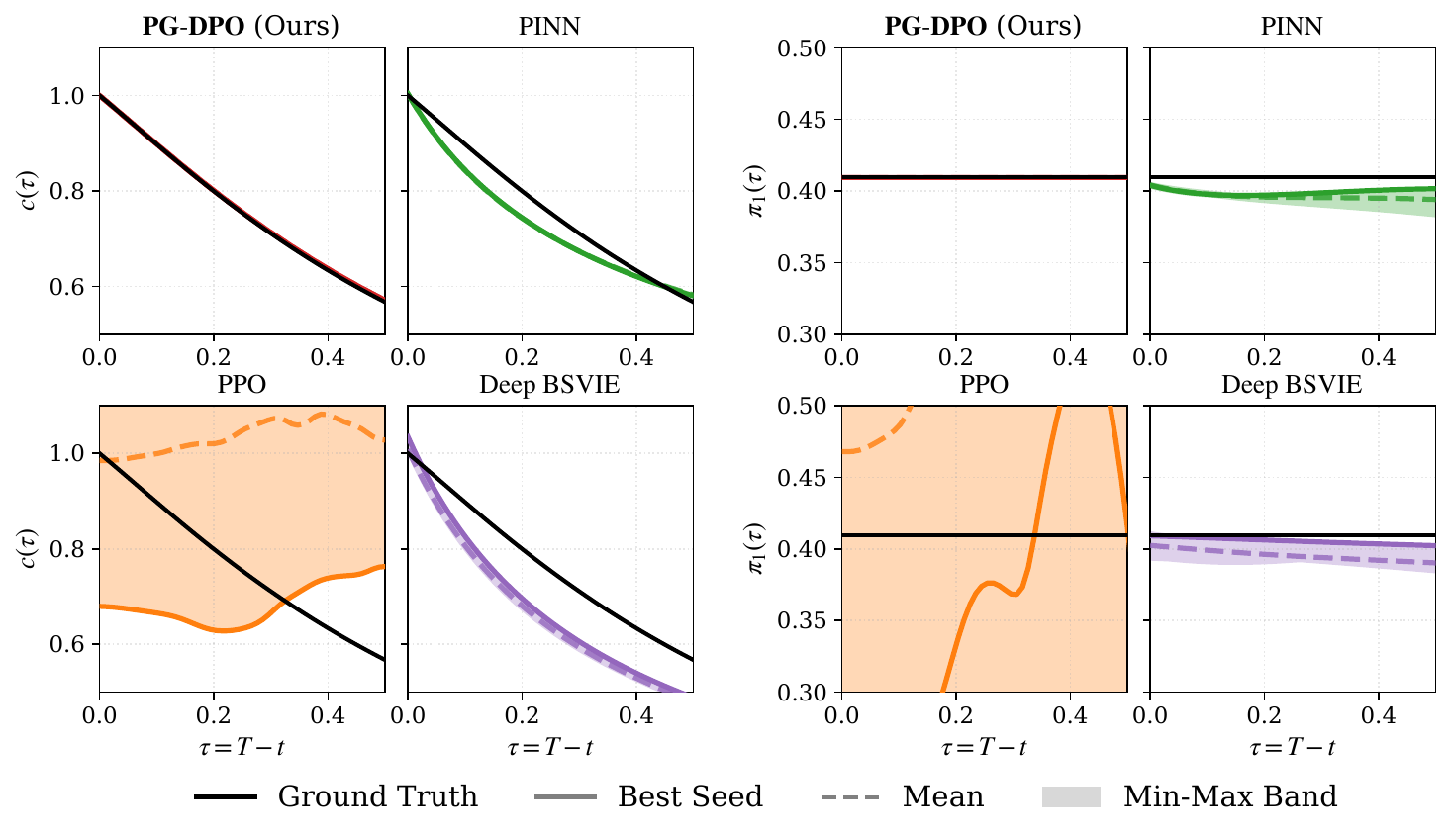}
    \caption{\textbf{Case 2 equilibrium policies.} Semi-analytic (extended-HJB) equilibrium vs. learned controls. (Left 2x2: Consumption Policy / Right 2x2: Investment Policy)}
    \label{fig:Case 2 policy}
\end{figure*}

\begin{table*}[t]
\caption{\textbf{Quantitative comparison of errors}. The table reports Global $L_1, L_{\infty}$ error as Mean $\pm$ Std over 10 seeds.}
\centering
\footnotesize
\setlength{\tabcolsep}{4pt} 
\renewcommand{\arraystretch}{1.2} 

\scalebox{0.99}{
\begin{tabular}{l cc | cc}
\toprule
 & \multicolumn{2}{c|}{\textbf{Consumption ($c$)}} & \multicolumn{2}{c}{\textbf{Investment ($\pi$)}} \\
\cmidrule(r){2-3} \cmidrule(l){4-5}
\textbf{Method} & \textbf{MAE ($L_1$)} & \textbf{Max ($L_\infty$)} & \textbf{MAE ($L_1$)} & \textbf{Max ($L_\infty$)} \\
\midrule
PPO & $4.66\text{e-}01 \pm 1.35\text{e-}01$ & $1.03\text{e+}00 \pm 2.79\text{e-}01$ & $5.39\text{e-}01 \pm 6.34\text{e-}02$ & $2.22\text{e+}00 \pm 4.03\text{e-}01$ \\
PINN & $6.67\text{e-}02 \pm 3.01\text{e-}03$ & $2.17\text{e-}01 \pm 1.83\text{e-}02$ & $4.73\text{e-}02 \pm 5.95\text{e-}03$ & $3.05\text{e-}01 \pm 2.57\text{e-}02$ \\
BSVIE & $6.41\text{e-}02 \pm 4.86\text{e-}03$ & $1.46\text{e-}01 \pm 1.13\text{e-}02$ & $4.08\text{e-}02 \pm 5.01\text{e-}03$ & $2.44\text{e-}01 \pm 1.25\text{e-}02$ \\
DPO & $2.32\text{e-}01 \pm 1.18\text{e-}01$ & $7.85\text{e-}01 \pm 2.45\text{e-}01$ & $3.12\text{e-}01 \pm 5.92\text{e-}02$ & $1.11\text{e+}00 \pm 3.78\text{e-}01$ \\
\midrule
\textbf{PG-DPO} & $\mathbf{3.47\text{e-}03 \pm 2.41\text{e-}10}$ & $\mathbf{6.11\text{e-}03 \pm 1.19\text{e-}08}$ & $\mathbf{6.36\text{e-}08 \pm 3.37\text{e-}10}$ & $\mathbf{1.92\text{e-}06 \pm 1.46\text{e-}07}$ \\
\bottomrule
\end{tabular}%
}
\label{tab:case2_errors}
\end{table*}

\noindent\textbf{Discounting and task.}
Following \citet{schultheis2022nonexp} (see also \citealp{AalenBorganGjessing2008}),
we model discounting via survival under a Gamma prior on the hazard rate:
\[
S(t)=\left(\frac{\beta_0}{\beta_0+t}\right)^{\alpha_0},
\quad
D(s,t)=\frac{S(t)}{S(s)}=\left(\frac{\beta_0+s}{\beta_0+t}\right)^{\alpha_0},
\]
for $0<s<t$. This preserves multiplicativity but destroys stationarity.
In Case~1, the agent drives a controlled diffusion toward a target with a quadratic control-energy penalty.
The survival kernel admits a natural \emph{random termination} interpretation: when survival drops rapidly early on
(Figure~\ref{fig:case1_kernel_policy}(a), small $\beta_0$), the objective becomes urgent and explicitly time-inhomogeneous.
Overall, Case~1 should be viewed as an abstraction aligned with standard continuous-control objectives
(stabilization/goal reaching/tracking under termination), while isolating the effect of non-stationary discounting.

\noindent\textbf{Why PG-DPO succeeds.}
Figure~\ref{fig:case1_kernel_policy}(b) confirms the superiority of our approach. Panel (a) shows that PG-DPO tracks the ground-truth policy almost perfectly, and the very narrow Min--Max band indicates strong robustness across random seeds. In contrast, competing solvers (b--d) exhibit either large variance (PPO, PINN) or systematic bias (Deep BSDE), failing to match the correct curvature. Table~\ref{tab:control_errors_1} reports the quantitative comparison along $\beta_0$. PG-DPO attains the lowest $L_1$ error with the smallest standard deviation. This gap arises because Stage~2 enforces the Pontryagin condition locally in time: decision-time--anchored rollouts yield a costate that captures steep early discounting, and Hamiltonian maximization maps this local marginal value to an action without requiring a globally consistent value-function fit.

\subsection{Case 2 Benchmark: Merton Problem with Hyperbolic Discounting}
\label{subsec:case2}

\begin{figure*}[t]
\centering
\subfigure[Impatience profiles $k(t)$.]{%
  \includegraphics[width=0.44\textwidth]{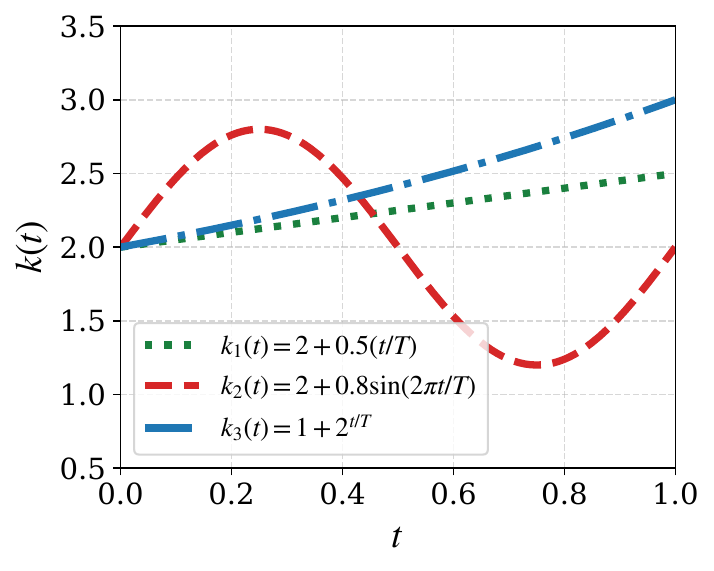}}%
\hspace{0.4em}
\subfigure[Equilibrium policy (analytic vs.\ learned).]{%
  \includegraphics[width=0.46\textwidth]{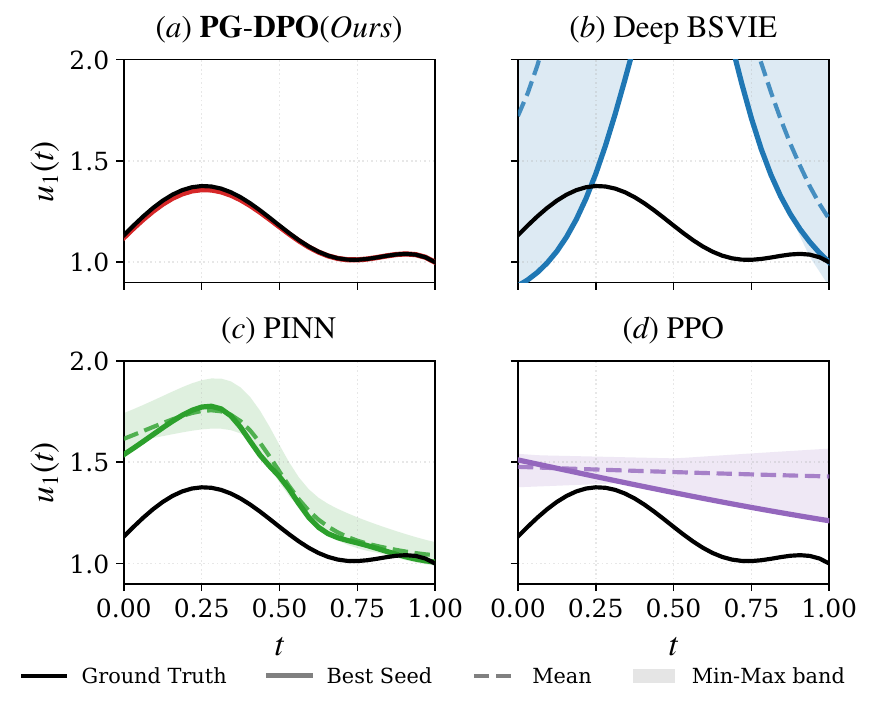}}%
\caption{\textbf{Case 3 (time-varying hyperbolic discounting).} (a) Time-varying impatience profiles $k(t)$ we used.
(b) Equilibrium consumption under non-stationary discounting in case of $k_2(t)$.}
\label{fig:case3_kernel_policy}
\end{figure*}

\begin{table*}[t]
\caption{\textbf{Quantitative comparison of errors (Case 3).} The table reports Global $L_1$ as Mean $\pm$Std under three different time-varying impatience profiles $k(t)$.}
\centering
\footnotesize
\setlength{\tabcolsep}{6pt} 
\renewcommand{\arraystretch}{1.3} 

\scalebox{0.99}{
\begin{tabular}{@{}lccc@{}}
\toprule
\textbf{Method} & \textbf{Linear} & \textbf{Sinusoidal} & \textbf{Exponential} \\
\midrule
Deep BSVIE & $1.29\text{e}0 \pm 3.85\text{e-}01$
     & $1.02\text{e}0 \pm 2.30\text{e-}01$
     & $1.36\text{e}0 \pm 4.11\text{e-}01$ \\

PINN & $2.64\text{e-}01 \pm 3.60\text{e-}02$
     & $2.67\text{e-}01 \pm 3.66\text{e-}02$
     & $2.93\text{e-}01 \pm 5.97\text{e-}02$ \\

PPO  & $2.68\text{e-}01 \pm 5.84\text{e-}02$
     & $2.89\text{e-}01 \pm 5.14\text{e-}02$
     & $2.42\text{e-}01 \pm 5.62\text{e-}02$ \\

DPO  & $1.15\text{e-}01 \pm 4.12\text{e-}02$
     & $1.31\text{e-}01 \pm 3.95\text{e-}02$
     & $1.08\text{e-}01 \pm 4.30\text{e-}02$ \\

\midrule
\textbf{PG-DPO} &
$\mathbf{6.35\text{e-}03 \pm 2.51\text{e-}05}$
& $\mathbf{7.39\text{e-}03 \pm 3.88\text{e-}05}$
& $\mathbf{6.70\text{e-}03 \pm 3.19\text{e-}05}$ \\
\bottomrule
\end{tabular}}
\label{tab:case3_errors}
\end{table*}

\noindent\textbf{Discounting and task.}
We use the classical hyperbolic kernel \citep{strotz1955,laibson1997,frederick2002}
\[
D(s,t)=\frac{1}{1+\kappa(t-s)}\qquad(\tau:=t-s).
\]
This kernel is time-homogeneous but non-multiplicative, hence time-inconsistent.
Accordingly, performance is evaluated against the time-consistent (equilibrium) solution of the extended HJB
\citep{ekeland2006,bjork2014,yong2012}.

\noindent\textbf{Why PG-DPO succeeds.}
In this case, restoring timehomogeneity
simplifies the control problem to a stationary
regime, generally reducing variance across all baselines
compared to the non-stationary setting (Case 1). Despite
this shared benefit, a clear performance hierarchy remains
evident.
As shown in Figure~\ref{fig:Case 2 policy}, \emph{global equation-driven} baselines (PINN, Deep BSVIE) outperform the \emph{critic-based (TD/Bellman-error) actor--critic} baseline (PPO), while PG-DPO achieves near-perfect alignment with the ground truth for both consumption and investment policies, exhibiting overwhelming accuracy and significantly tightest confidence bands.
This superiority is quantitatively confirmed in Table~\ref{tab:case2_errors}, where PG-DPO reduces $L_1$ and $L_\infty$ errors by several orders of magnitude.
This precise alignment stems from Stage~2, which leverages the stationary structure to stabilize local costates via anchored rollouts, allowing Hamiltonian maximization to synthesize the exact equilibrium action.

\subsection{Case 3 Benchmark: Stochastic Resource with Time-Varying Impatience}
\label{subsec:case3}

\noindent\textbf{Discounting and task.}
We generalize Case~2 by allowing the hyperbolic parameter to vary with the decision time \citep{bjork2014,yong2012}:
\[
D(s,t)=\frac{1}{1+k(s)(t-s)}.
\]
This kernel is non-multiplicative and explicitly non-stationary, inducing time inconsistency. Case~3 considers a stochastic resource/consumption problem where impatience fluctuates over time.

\noindent\textbf{Why PG-DPO succeeds.}
Figure~\ref{fig:case3_kernel_policy} shows that equilibrium policy becomes \emph{non-monotone} and co-moves with $k(t)$. PG-DPO tracks these regime changes because Stage~2 is decision-time anchored: when $k(t)$ changes, the anchored kernel $D(t,\cdot)$ and the corresponding Hamiltonian change immediately, and the action is synthesized by local Hamiltonian maximization rather than by relying on a globally trained value function.
Across the whole $k(t)$, Table~\ref{tab:case3_errors} shows uniformly smallest errors, supporting stable recovery of the
time-consistent equilibrium mechanism under explicit non-stationarity.

\noindent\textbf{Takeaway.}
Taken together with Cases~2--3, the results support the central message: when multiplicativity fails, PG-DPO benefits from Stage~2 by enforcing the decision-time Pontryagin condition through local action synthesis.

\section{Conclusion}

We showed that the ubiquity of exponential discounting in RL is structural: it arises from the intersection of multiplicativity and time homogeneity
(Figure~\ref{fig:discount_taxonomy}; \cref{eq:intro_multiplicative,eq:intro_timehom,eq:intro_exponential}).
Violating either property breaks the recursive machinery behind dynamic programming and single-time BSDE formulations, helping explain the empirical
instability of standard methods in non-exponential settings.

To bridge this gap, we introduced Pontryagin-Guided Direct Policy Optimization (PG-DPO; \Cref{subsec:pgdpo}).
PG-DPO is a model-based, simulator-access method tailored to settings with a stochastic simulator (physics-based or statistically estimated).
By interpreting BPTT as a stochastic adjoint estimator, PG-DPO recovers costate information without a Markovian value function and replaces broken Bellman recursion with a variational principle.

Our approach advances the optimization landscape in two views. First, it achieves structural flexibility by liberating the optimization process from the Bellman recursion with mathematical consistency. Second, it offers technical robustness by moving beyond the standard frame of function approximation. This shift significantly reduces the reliance on heuristic tuning and stochastic environment.

\paragraph{Limitations and future work.}
The method requires a differentiable simulator or learned dynamics model, and projection quality depends on accurate BPTT marginal values. Looking forward, promising directions include variance reduction, amortized projection, learned dynamics, empirical calibration, relaxing regularity to accommodate data-driven or non-smooth discount kernels and extending the projection machinery to richer 
constraints and frictions.

\section*{Impact Statement}
This paper introduces PG-DPO, a method for optimizing policies under non-exponential discounting. Our work is primarily methodological, aiming to bridge the gap between theoretical control and practical applications. While effective control algorithms can have broad societal impacts, we believe this specific work does not introduce new ethical concerns beyond those already present in the fields of reinforcement learning and optimal control.

\section*{Acknowledgements}

Jeonggyu Huh received financial support from the National Research Foundation of Korea (No. RS-2025-00562904).




\bibliography{example_paper}

@book{bellman1957,
  author    = {Bellman, Richard Ernest},
  title     = {Dynamic Programming},
  publisher = {Princeton University Press},
  address   = {Princeton, NJ},
  year      = {1957}
}

@book{sutton2018,
  title={Reinforcement Learning: An Introduction},
  author={Sutton, Richard S. and Barto, Andrew G.},
  edition={2},
  year={2018},
  publisher={MIT Press}
}

@article{strotz1955,
  title={Myopia and Inconsistency in Dynamic Utility Maximization},
  author={Strotz, Robert H.},
  journal={The Review of Economic Studies},
  volume={23},
  number={3},
  pages={165--180},
  year={1955}
}

@article{laibson1997,
  title={Golden Eggs and Hyperbolic Discounting},
  author={Laibson, David},
  journal={The Quarterly Journal of Economics},
  volume={112},
  number={2},
  pages={443--478},
  year={1997}
}

@article{frederick2002,
  title={Time Discounting and Time Preference: A Critical Review},
  author={Frederick, Shane and Loewenstein, George and O'Donoghue, Ted},
  journal={Journal of Economic Literature},
  volume={40},
  number={2},
  pages={351--401},
  year={2002}
}

@inproceedings{schultheis2022nonexp,
  title={Reinforcement Learning with Non-Exponential Discounting},
  author={Schultheis, Matthias and Rothkopf, Constantin A. and Koeppl, Heinz},
  booktitle={Advances in Neural Information Processing Systems (NeurIPS)},
  year={2022}
}

@article{han2018deepbsde,
  title={Solving high-dimensional partial differential equations using deep learning},
  author={Han, Jiequn and Jentzen, Arnulf and E, Weinan},
  journal={Proceedings of the National Academy of Sciences},
  volume={115},
  number={34},
  pages={8505--8510},
  year={2018}
}

@article{raissi2019pinn,
  title={Physics-informed neural networks: A deep learning framework for solving forward and inverse problems involving nonlinear partial differential equations},
  author={Raissi, Maziar and Perdikaris, Paris and Karniadakis, George E.},
  journal={Journal of Computational Physics},
  volume={378},
  pages={686--707},
  year={2019}
}

@article{ekeland2006,
  title={Being serious about non-commitment: subgame perfect equilibrium in continuous time},
  author={Ekeland, Ivar and Lazrak, Ali},
  journal={arXiv preprint math/0604264},
  year={2006}
}

@article{bjork2014,
  title={A theory of Markovian time-inconsistent stochastic control in discrete time},
  author={Bj{\"o}rk, Tomas and Murgoci, Agatha},
  journal={Finance and Stochastics},
  volume={18},
  number={3},
  pages={545--592},
  year={2014},
  publisher={Springer}
}

@article{yong2012,
  title={Time-inconsistent optimal control problems and the equilibrium {HJB} equation},
  author={Yong, Jiongmin},
  journal={Mathematical Control and Related Fields},
  volume={2},
  number={3},
  pages={271--329},
  year={2012}
}

@book{pontryagin1962,
  author    = {Pontryagin, Lev Semenovich and Boltyanskii, Vladimir Grigor'evich and Gamkrelidze, Revaz Valerianovich and Mishchenko, Evgenii Frolovich},
  title     = {The Mathematical Theory of Optimal Processes},
  publisher = {Interscience Publishers},
  address   = {New York},
  year      = {1962}
}

@book{yongzhou1999,
  title={Stochastic Controls: Hamiltonian Systems and {HJB} Equations},
  author={Yong, Jiongmin and Zhou, Xun Yu},
  year={1999},
  publisher={Springer}
}

@book{AalenBorganGjessing2008,
  title     = {Survival and Event History Analysis: A Process Point of View},
  author    = {Aalen, Odd and Borgan, Ornulf and Gjessing, H{\aa}kon},
  publisher = {Springer},
  year      = {2008},
  doi       = {10.1007/978-0-387-68560-1}
}

@misc{huh2025pgdpo,
  title        = {Breaking the Dimensional Barrier: A Pontryagin-Guided Direct Policy Optimization for Continuous-Time Multi-Asset Portfolio Choice},
  author       = {Huh, Jeonggyu and Jeon, Jaegi and Koo, Hyeng Keun and Lim, Byung Hwa},
  howpublished = {arXiv preprint arXiv:2504.11116},
  year         = {2025}
}

@article{schulman2017proximal,
  title   = {Proximal Policy Optimization Algorithms},
  author  = {Schulman, John and Wolski, Filip and Dhariwal, Prafulla and Radford, Alec and Klimov, Oleg},
  journal = {arXiv preprint arXiv:1707.06347},
  year    = {2017},
  doi     = {10.48550/arXiv.1707.06347},
}

@article{chen2018neuralode,
  title={Neural ordinary differential equations},
  author={Chen, Ricky TQ and Rubanova, Yulia and Bettencourt, Jesse and Duvenaud, David K},
  journal={Advances in neural information processing systems},
  volume={31},
  year={2018}
}

@article{kwiatkowski2023ugae,
  title        = {UGAE: A Novel Approach to Non-exponential Discounting},
  author       = {Kwiatkowski, Ariel and Kalogeiton, Vicky and Pettr{\'e}, Julien and Cani, Marie-Paule},
  journal      = {arXiv preprint arXiv:2302.05740},
  year         = {2023},
  eprint       = {2302.05740},
  archivePrefix= {arXiv},
  primaryClass = {cs.LG}
}

@article{bayraktar2025relaxed,
  title        = {Relaxed Equilibria for Time-Inconsistent Markov Decision Processes},
  author       = {Bayraktar, Erhan and Huang, Yu-Jui and Wang, Zhenhua and Zhou, Zhou},
  journal      = {Mathematics of Operations Research},
  year         = {2025},
  volume       = {50},
  number       = {4},
  pages        = {2666--2687},
  doi          = {10.1287/moor.2023.0209}
}

@article{lei2023wellposed,
  title        = {On the Well-posedness of Hamilton-Jacobi-Bellman Equations of the Equilibrium Type},
  author       = {Lei, Ka Lung and Pun, Chi Seng},
  journal      = {arXiv preprint arXiv:2307.01986},
  year         = {2023},
  eprint       = {2307.01986},
  archivePrefix= {arXiv},
  primaryClass = {math.OC}
}

@misc{ekelandlazrak2006,
  title         = {Being serious about non-commitment: subgame perfect equilibrium in continuous time},
  author        = {Ekeland, Ivar and Lazrak, Ali},
  year          = {2006},
  month         = apr,
  eprint        = {math/0604264},
  archivePrefix = {arXiv},
  primaryClass  = {math.OC},
  doi           = {10.48550/arXiv.math/0604264},
  note          = {arXiv:math/0604264}
}

@article{andersson2025deepbsvie,
  title        = {Deep Learning for Backward Stochastic Volterra Integral Equations},
  author       = {Andersson, Alexander and Gnoatto, Alessandro and Garc{\'i}a Trillos, Nicol{\'a}s},
  journal      = {arXiv preprint arXiv:2505.18297},
  year         = {2025},
  eprint       = {2505.18297},
  archivePrefix= {arXiv},
  primaryClass = {math.PR}
}

@article{bjorkkhapkomurgoci2017,
  title   = {On time-inconsistent stochastic control in continuous time},
  author  = {Bj{\"o}rk, Tomas and Khapko, Mariana and Murgoci, Agatha},
  journal = {Finance and Stochastics},
  year    = {2017},
  volume  = {21},
  number  = {2},
  pages   = {331--360},
  month   = apr,
  doi     = {10.1007/s00780-017-0327-5},
}

@article{archibald2023smp,
  title        = {A stochastic maximum principle approach for reinforcement learning with parameterized environment},
  author       = {Archibald, Richard and Bao, Feng and Yong, Jiongmin},
  journal      = {Journal of Computational Physics},
  year         = {2023},
  volume       = {488},
  pages        = {112238},
  doi          = {10.1016/j.jcp.2023.112238}
}

@inproceedings{eberhard2025pontryagin,
  title        = {A Pontryagin Perspective on Reinforcement Learning},
  author       = {Eberhard, Onno and Vernade, Claire and Muehlebach, Michael},
  booktitle    = {Proceedings of the Seventh Annual Learning for Dynamics \& Control Conference},
  series       = {Proceedings of Machine Learning Research},
  volume       = {283},
  pages        = {233--244},
  year         = {2025}
}
\bibliographystyle{icml2026}

\newpage
\appendix
\onecolumn

\section{Anchored costate--BPTT correspondence}
\label{app:pgdpo_bptt_projection}

This appendix focuses on what backpropagation through time (BPTT) computes when differentiating anchored Monte Carlo rollouts, and it proves the costate--BPTT bridge used in Theorem~\ref{thm:nonexp_bptt_bridge_main}.
All statements here are formulated for a generic bounded continuous kernel $D(t_0,t)$ (two-time discount), covering:
(i) multiplicative but time-inhomogeneous survival discounting (Case~1),
(ii) time-homogeneous but non-multiplicative hyperbolic discounting (Case~2), and
(iii) time-varying hyperbolic discounting (Case~3).

For each fixed anchor $t_0$, BPTT yields an anchored closed-loop sensitivity recursion.
In the multiplicative regime (Case~1), this anchored recursion is consistent with the classical anchored PMP adjoint (up to the usual envelope conditions).
When multiplicativity fails (Cases~2--3), the same anchored recursion still gives the natural costate proxy, and its diagonal specialization $t_0=t_k$ is the quantity later used by Stage~2.

\subsection{Setup: anchored objective, Hamiltonian, and assumptions}
\label{app:setup_discount_pmp}

We consider the controlled diffusion on $[0,T]$:
\[
 dS_t = b(t,S_t,u_t)\,dt + \sigma(t,S_t,u_t)\,dW_t,\qquad S_{t_0}=s_0,
\]
with an admissible action set $\mathcal U(s)$ and a (possibly non-multiplicative) discount kernel $D(t_0,t)$.
For a fixed anchor $(t_0,s_0)$, the anchored objective is
\[
 J(t_0,s_0;u) := \mathbb E\!\left[\int_{t_0}^{T} D(t_0,t)\,\ell(t,S_t,u_t)\,dt + D(t_0,T)\,g(S_T)\right].
\]
Define the anchored Hamiltonian
\begin{equation}
\label{eq:app_extHam_discount}
H(t_0,t,s,u,y,z)
:= D(t_0,t)\,\ell(t,s,u) + \langle y, b(t,s,u)\rangle + \mathrm{Tr}\!\big(z^\top \sigma(t,s,u)\big).
\end{equation}

\paragraph{Standing assumptions (for discrete-to-continuous statements).}
We assume:
\begin{enumerate}
\item (\textbf{Regularity of $b,\sigma$.})
$b,\sigma$ are globally Lipschitz in $s$ with linear growth, and $\sigma\sigma^\top$ is uniformly non-degenerate.
Moreover, $b,\sigma$ are $C^1$ in $(s,u)$ with derivatives $\partial_s b,\partial_u b,\partial_s \sigma,\partial_u \sigma$ that are globally Lipschitz (or at least of polynomial growth ensuring the $L^2$ expansions below).
\item (\textbf{Regularity of costs.})
$\ell(\cdot,\cdot,\cdot)$ and $g(\cdot)$ are $C^1$ in $s$ (and differentiable in $u$ when needed) with polynomial growth.
\item (\textbf{Discount kernel.})
$D(t_0,t)$ is bounded and continuous on $\{(t_0,t): 0\le t_0\le t\le T\}$ with $D(t,t)=1$.
\item (\textbf{Differentiable policy class.})
The policy class is differentiable in state: $u_\theta(t,s)$ is $C^1$ in $s$ with square-integrable Jacobians, and $u_\theta(t,s)\in \mathcal U(s)$.
\end{enumerate}

\subsection{Discrete rollouts and the exact closed-loop BPTT recursion}
\label{app:bptt_closed_loop}

Fix an anchor $(t_0,s_0)$ and discretize $[t_0,T]$ by $t_k=t_0+k\Delta t$, $k=0,\dots,N$.
Consider the Euler--Maruyama rollout under a differentiable feedback policy $u_\theta(t,s)$:
\begin{equation}
\label{eq:app_em}
S_{k+1} = S_k + b(t_k,S_k,u_k)\,\Delta t + \sigma(t_k,S_k,u_k)\,\Delta W_k,
\qquad
u_k:=u_\theta(t_k,S_k).
\end{equation}
Define the discrete anchored return
\begin{equation}
\label{eq:app_return}
\widehat J(t_0,s_0;\theta)
:= \sum_{k=0}^{N-1} D(t_0,t_k)\,\ell(t_k,S_k,u_k)\,\Delta t + D(t_0,T)\,g(S_N).
\end{equation}
Let the \emph{pathwise BPTT costates} be the state sensitivities
\[
 \lambda^{\mathrm{pw}}_k :=\frac{\partial \widehat J(t_0,s_0;\theta)}{\partial S_k},
 \qquad
 k=0,\dots,N.
\]

\paragraph{One-step notation.}
Define the one-step reward and transition map by
\[
 r_k := D(t_0,t_k)\,\ell(t_k,S_k,u_k)\,\Delta t,
 \qquad
 u_k := u_\theta(t_k,S_k),
\]
\[
 S_{k+1} := F_k(S_k,u_k,\Delta W_k),
\]
where $F_k$ is the Euler update in \eqref{eq:app_em}.

\begin{proposition}[Exact BPTT recursion (closed-loop adjoint with a policy-Jacobian term)]
\label{prop:app_bptt_recursion}
The pathwise BPTT costates satisfy
\begin{equation}
\label{eq:app_lambda_recursion}
\begin{aligned}
\lambda^{\mathrm{pw}}_{N} &= D(t_0,T)\,\nabla g(S_N),\\
\lambda^{\mathrm{pw}}_{k} &= \partial_{S_k} r_k + (\partial_{S_k}F_k)^\top \lambda^{\mathrm{pw}}_{k+1} + (\partial_{S_k}u_k)^\top G_k,
\end{aligned}
\end{equation}
for $k=0,\dots,N-1$, where
\begin{equation}
\label{eq:app_Gk}
G_k := \partial_{u_k} r_k + (\partial_{u_k}F_k)^\top \lambda^{\mathrm{pw}}_{k+1}.
\end{equation}
\end{proposition}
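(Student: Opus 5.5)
The plan is to read the discrete return \eqref{eq:app_return} as a composite function of $S_0$ along the computational graph and apply the multivariate chain rule backward in time. The key is to be explicit about which downstream variables depend on $S_k$. Write $\widehat J_k := \sum_{j\ge k} r_j + D(t_0,T)g(S_N)$ for the tail return, regarded as a function of $S_k$ once the noise increments $\Delta W_k,\dots,\Delta W_{N-1}$ are frozen. Then $\lambda^{\mathrm{pw}}_k=\partial \widehat J/\partial S_k=\partial \widehat J_k/\partial S_k$, because the terms $r_0,\dots,r_{k-1}$ do not depend on $S_k$ when $S_k$ is treated as an independent node and we differentiate with respect to it in reverse mode.

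The argument proceeds in three steps.

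\emph{Step 1: terminal condition.} At $k=N$ the only term involving $S_N$ is $D(t_0,T)g(S_N)$. Since $g$ is $C^1$ by standing assumption 2, this gives $\lambda^{\mathrm{pw}}_N=D(t_0,T)\nabla g(S_N)$.

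\emph{Step 2: dependencies of $S_k$.} For $k<N$, note the identity $\widehat J_k = r_k(S_k,u_k) + \widehat J_{k+1}(S_{k+1})$, with $u_k=u_\theta(t_k,S_k)$ and $S_{k+1}=F_k(S_k,u_k,\Delta W_k)$. So $S_k$ enters $\widehat J_k$ through exactly three channels: directly in $r_k$ and $F_k$, and indirectly through $u_k$ in both. The chain rule (valid by the $C^1$ assumptions on $\ell,b,\sigma,u_\theta$) gives
\[
\lambda^{\mathrm{pw}}_k=\partial_{S_k}r_k+(\partial_{S_k}u_k)^\top\partial_{u_k}r_k+(\partial_{S_k}F_k)^\top\lambda^{\mathrm{pw}}_{k+1}+(\partial_{S_k}u_k)^\top(\partial_{u_k}F_k)^\top\lambda^{\mathrm{pw}}_{k+1}.
\]

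\emph{Step 3: regroup.} Collect the two terms carrying $(\partial_{S_k}u_k)^\top$. This yields \eqref{eq:app_lambda_recursion}, with $G_k$ as in \eqref{eq:app_Gk}. A backward induction on $k$ from $N$ down to $0$ then establishes the recursion for all $k$.

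The main obstacle is not analytic but notational: keeping partial and total derivatives distinct. In $\partial_{S_k}r_k$ and $\partial_{S_k}F_k$ the control argument must be held fixed, while $\lambda^{\mathrm{pw}}_k$ is a total derivative. It also has to be clear that $\partial_{S_k}F_k = I + \partial_s b\,\Delta t + \partial_s\sigma\cdot\Delta W_k$ is understood with $\Delta W_k$ frozen, so it acts as a tensor contraction on $\Delta W_k$. I would fix these conventions at the start. Finally, I would remark that the statement is pathwise and purely algebraic: it holds for every realization of the noise, and no integrability is needed at this stage.
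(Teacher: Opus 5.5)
Your proposal is correct and follows the paper's proof essentially step for step. The paper also fixes the noise, introduces the tail return $\widehat J_k$, reads off the terminal condition, differentiates $\widehat J_k = r_k + \widehat J_{k+1}$ by the chain rule through the closed-loop dependence $u_k=u_\theta(t_k,S_k)$, and regroups the two $(\partial_{S_k}u_k)^\top$ terms into $G_k$. One small point: no genuine induction is needed, since each $k$ follows directly from that one-step identity together with $\partial \widehat J_{k+1}/\partial S_{k+1}=\lambda^{\mathrm{pw}}_{k+1}$.
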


\begin{proof}
Fix a single Monte Carlo rollout, i.e., condition on a realization of the noise increments $\{\Delta W_k\}_{k=0}^{N-1}$.
Then the rollout induces a deterministic computation graph with the closed-loop dependence $u_k=u_\theta(t_k,S_k)$ and the one-step transition $S_{k+1}=F_k(S_k,u_k,\Delta W_k)$.
For $k=0,\dots,N$, define the \emph{tail return} from time index $k$:
\[
 \widehat J_k := \sum_{i=k}^{N-1} r_i + D(t_0,T)\,g(S_N),
 \qquad\text{so that}\qquad
 \widehat J_0=\widehat J(t_0,s_0;\theta).
\]
By construction, $\widehat J_k$ depends on $S_k$ only through the subgraph $S_k \to (u_k,r_k,S_{k+1}) \to \cdots \to S_N$, hence
\[
 \lambda_k^{\mathrm{pw}}=\frac{\partial \widehat J(t_0,s_0;\theta)}{\partial S_k}=\frac{\partial \widehat J_k}{\partial S_k}.
\]

\paragraph{Terminal condition.}
At $k=N$, $\widehat J_N = D(t_0,T)\,g(S_N)$, hence
\[
 \lambda_N^{\mathrm{pw}}=\frac{\partial \widehat J_N}{\partial S_N}=D(t_0,T)\,\nabla g(S_N),
\]
which proves the terminal condition in \eqref{eq:app_lambda_recursion}.

\paragraph{Backward recursion.}
For $k=0,\dots,N-1$, the tail return satisfies
\[
 \widehat J_k = r_k + \widehat J_{k+1},
 \qquad\text{with}\qquad
 S_{k+1}=F_k(S_k,u_k,\Delta W_k),\ \ u_k=u_\theta(t_k,S_k).
\]
Differentiate both sides with respect to $S_k$ and use the chain rule:
\begin{align*}
\lambda_k^{\mathrm{pw}}
= \frac{\partial \widehat J_k}{\partial S_k}
&= \frac{\partial r_k}{\partial S_k} +
\left(\frac{\partial S_{k+1}}{\partial S_k}\right)^{\!\top}
\frac{\partial \widehat J_{k+1}}{\partial S_{k+1}}\\
&= \frac{\partial r_k}{\partial S_k} +
\left(\frac{\partial S_{k+1}}{\partial S_k}\right)^{\!\top}
\lambda_{k+1}^{\mathrm{pw}}.
\end{align*}
Expand the two Jacobians under the closed-loop dependence $u_k=u_\theta(t_k,S_k)$.

\emph{(i) Reward term.}
Viewing $r_k=r_k(S_k,u_k)$,
\[
\frac{\partial r_k}{\partial S_k}
= \partial_{S_k} r_k + (\partial_{S_k}u_k)^\top \partial_{u_k} r_k.
\]

\emph{(ii) Transition term.}
Viewing $S_{k+1}=F_k(S_k,u_k,\Delta W_k)$ (with $\Delta W_k$ fixed),
\[
\frac{\partial S_{k+1}}{\partial S_k}
= \partial_{S_k}F_k + \partial_{u_k}F_k\;\partial_{S_k}u_k.
\]

Substituting (i) and (ii) gives
\begin{align*}
\lambda_k^{\mathrm{pw}}
&= \partial_{S_k} r_k + (\partial_{S_k}u_k)^\top \partial_{u_k} r_k
+ \Big(\partial_{S_k}F_k+\partial_{u_k}F_k\,\partial_{S_k}u_k\Big)^{\!\top}\lambda_{k+1}^{\mathrm{pw}}\\
&= \partial_{S_k} r_k + (\partial_{S_k}F_k)^\top \lambda_{k+1}^{\mathrm{pw}}
+ (\partial_{S_k}u_k)^\top \Big( \partial_{u_k} r_k + (\partial_{u_k}F_k)^\top \lambda_{k+1}^{\mathrm{pw}} \Big),
\end{align*}
and defining $G_k$ as in \eqref{eq:app_Gk} completes the proof.
\end{proof}

\paragraph{Interpretation.}
The additional term $(\partial_{S_k}u_k)^\top G_k$ is the standard policy-Jacobian contribution in closed-loop differentiation.
The quantity $G_k$ is the discrete analogue of a Hamiltonian stationarity residual: it vanishes at an interior Pontryagin-stationary point (discrete analogue of $\partial_u H=0$).

\begin{corollary}[Envelope cancellation at Pontryagin-stationary points]
\label{cor:app_envelope}
If $G_k=0$ along the rollout (e.g., at an interior maximizer in action space), then the BPTT recursion reduces to the standard discrete adjoint recursion without the policy-Jacobian term:
\[
 \lambda^{\mathrm{pw}}_k = \partial_{S_k} r_k + (\partial_{S_k}F_k)^\top \lambda^{\mathrm{pw}}_{k+1}.
\]
\end{corollary}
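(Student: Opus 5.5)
The plan is to read this off directly from Proposition~\ref{prop:app_bptt_recursion}, which already gives the exact pathwise BPTT recursion in closed loop. For each $k=0,\dots,N-1$ that proposition decomposes $\lambda^{\mathrm{pw}}_k$ into three pieces. Two of them form the open-loop adjoint, namely $\partial_{S_k} r_k$ and $(\partial_{S_k}F_k)^\top \lambda^{\mathrm{pw}}_{k+1}$. The third is the policy-Jacobian contribution $(\partial_{S_k}u_k)^\top G_k$, with $G_k=\partial_{u_k} r_k + (\partial_{u_k}F_k)^\top \lambda^{\mathrm{pw}}_{k+1}$ as in \eqref{eq:app_Gk}. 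The argument therefore consists of showing that the third piece vanishes under the hypothesis, while the terminal condition $\lambda^{\mathrm{pw}}_N=D(t_0,T)\nabla g(S_N)$ carries over unchanged.

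First I fix a rollout realization $\{\Delta W_k\}$, exactly as in the proof of Proposition~\ref{prop:app_bptt_recursion}, so that every object is a deterministic function along the computation graph. If $G_k=0$ at that index, then $(\partial_{S_k}u_k)^\top G_k=0$, whatever the policy Jacobian is; Assumption~4 guarantees that this Jacobian is finite. Substituting into \eqref{eq:app_lambda_recursion} leaves exactly the claimed recursion. Because the hypothesis is that $G_k=0$ along the whole rollout, this holds for every $k$, and together with the unchanged terminal condition the whole backward sweep coincides with the standard discrete adjoint recursion.

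The only point needing care, and it is a mild one, is the parenthetical justification that an interior maximizer gives $G_k=0$. To see this, note that $G_k$ is the gradient in $u$ of the one-step quantity $\Phi_k(u):=r_k(S_k,u)+\langle \lambda^{\mathrm{pw}}_{k+1}, F_k(S_k,u,\Delta W_k)\rangle$, evaluated at $u=u_k$. Expanding $F_k$ by its Euler form gives
\[
\Phi_k(u)=\Delta t\,\big[D(t_0,t_k)\ell(t_k,S_k,u)+\langle \lambda^{\mathrm{pw}}_{k+1}, b(t_k,S_k,u)\rangle\big]+\langle \lambda^{\mathrm{pw}}_{k+1},\sigma(t_k,S_k,u)\Delta W_k\rangle ,
\]
up to a term independent of $u$. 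This is $\Delta t$ times a pathwise discrete Hamiltonian of the form \eqref{eq:app_extHam_discount}. If $u_k$ is an interior point of $\mathcal U(S_k)$ at which $\Phi_k$ is maximized, then by differentiability (Assumptions~1--2) the first-order condition gives $\partial_u\Phi_k(u_k)=G_k=0$.

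I do not expect any genuine obstacle, since the statement is an immediate specialization. The only subtlety worth a remark is that the vanishing of $G_k$ is a pathwise hypothesis, tied to the realized $\lambda^{\mathrm{pw}}_{k+1}$ and $\Delta W_k$. It is not the predictable stationarity $r_k^{\mathrm{FOC}}=0$ used in Theorem~\ref{thm:nonexp_bptt_bridge_main}; relating the two requires taking $\mathbb E_k$, which is done in that theorem's proof and is not needed here.
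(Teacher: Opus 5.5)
Your proposal is correct and follows the paper's route. The paper gives no separate proof and treats the corollary as an immediate consequence of Proposition~\ref{prop:app_bptt_recursion}: setting $G_k=0$ removes the policy-Jacobian term $(\partial_{S_k}u_k)^\top G_k$ in \eqref{eq:app_lambda_recursion}, and the terminal condition is unchanged. Your extra justification of the interior-maximizer example, which identifies $G_k$ as the $u$-gradient of the pathwise one-step Hamiltonian $\Phi_k$, is consistent with the paper's reading of $G_k$ as a discrete stationarity residual.
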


\subsection{Adapted projections and a BSDE limit (diagonal/anchored adjoint)}
\label{app:bsde_limit}

BPTT produces \emph{pathwise} costates $\lambda_k^{\mathrm{pw}}$.
For Pontryagin-type synthesis we use their predictable (adapted) projection and the associated martingale coefficient.
Define the one-step predictable projections
\begin{equation}
\label{eq:app_pred_proj}
\lambda_k := \mathbb E\!\left[\lambda_k^{\mathrm{pw}} \mid \mathcal F_{t_k}\right],
\qquad
Z_k := \frac{1}{\Delta t}\,\mathbb E\!\left[\lambda_{k+1}^{\mathrm{pw}}(\Delta W_k)^\top \mid \mathcal F_{t_k}\right],
\end{equation}
where $\Delta W_k:=W_{t_{k+1}}-W_{t_k}$ and $Z_k\in\mathbb R^{d\times q}$.

\begin{proposition}[Continuous-time limit: closed-loop adjoint BSDE (anchored at $t_0$)]
\label{prop:app_bsde_limit}
Under the standing assumptions, as $\Delta t\to 0$ the piecewise-constant interpolations of $(\lambda_k,Z_k)$ converge in $L^2$ (along a refining sequence of grids) to an adapted pair $(\lambda_t,Z_t)$ that satisfies a closed-loop adjoint BSDE for the anchored objective $J(t_0,s_0;u_\theta)$:
\begin{equation}
\label{eq:app_closed_loop_bsde}
\begin{aligned}
d\lambda_t
&= -\Big(
\partial_s H(t_0,t,S_t,u_\theta(t,S_t),\lambda_t,Z_t)
+(\partial_s u_\theta(t,S_t))^\top\,\partial_u H(t_0,t,S_t,u_\theta(t,S_t),\lambda_t,Z_t)
\Big)\,dt\\
&\qquad + Z_t\,dW_t,\\
\lambda_T &= D(t_0,T)\,\nabla g(S_T).
\end{aligned}
\end{equation}
Moreover, if the diagonal Pontryagin stationarity holds (interior case) so that $\partial_u H(t_0,t,S_t,u_\theta(t,S_t),\lambda_t,Z_t)=0$ a.s.\ for a.e.\ $t$, then \eqref{eq:app_closed_loop_bsde} reduces to the standard anchored adjoint BSDE (without the policy-Jacobian term), consistent with the classical PMP \citep{pontryagin1962,yongzhou1999}.
\end{proposition}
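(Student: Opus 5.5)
The plan is to derive the BSDE limit directly from the exact recursion of Proposition~\ref{prop:app_bptt_recursion}. The first step is to take conditional expectations $\mathbb E[\cdot\mid\mathcal F_{t_k}]$ of that recursion. For the Euler map we have $\partial_{S_k}F_k = I + \partial_s b\,\Delta t + \partial_s\sigma\,\Delta W_k$ and $\partial_{u_k}F_k=\partial_u b\,\Delta t+\partial_u\sigma\,\Delta W_k$. The terms linear in $\Delta W_k$ then produce exactly $\Delta t\,\mathrm{Tr}(Z_k^\top\partial\sigma)$-type contributions, using the definition \eqref{eq:app_pred_proj} of $Z_k$ together with $\mathbb E_k[\Delta W_k\Delta W_k^\top]=\Delta t\,I$. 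The expected outcome is
\[
\lambda_k=\lambda_{k+1|k}+\Big(\partial_s H(t_0,t_k,S_k,u_k,\lambda_{k+1|k},Z_k)+(\partial_s u_\theta)^\top\partial_u H(t_0,t_k,S_k,u_k,\lambda_{k+1|k},Z_k)\Big)\Delta t+\rho_k,
\]
where $\rho_k$ collects the cross terms $\partial_s b\,\Delta t$ acting on $\lambda^{\rm pw}_{k+1}-\lambda_{k+1|k}$. By Cauchy--Schwarz these are $O(\Delta t^{3/2})$ in $L^2$, given uniform $L^2$ bounds on $\lambda^{\rm pw}$. Those bounds follow from a discrete Gronwall argument on the backward recursion, using the Lipschitz/polynomial-growth assumptions, the bounded Jacobian of $u_\theta$, and the boundedness of $D$.

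Next, I will introduce the martingale increments $\Delta M_k:=\lambda^{\rm pw}_{k+1}-\lambda_{k+1|k}$ and write their Itô/martingale representation $\Delta M_k = Z_k\Delta W_k + \Delta N_k$, with $\Delta N_k$ orthogonal to $\Delta W_k$. This turns the discrete system into a discrete BSDE with driver $f(t,s,y,z)=\partial_s H+(\partial_s u_\theta)^\top\partial_u H$ and terminal value $D(t_0,T)\nabla g(S_N)$. This driver is Lipschitz in $(y,z)$ because $H$ is affine in $(y,z)$ with coefficients $\partial_s b,\partial_s\sigma,\partial_u b,\partial_u\sigma$, and these are bounded along the path in the required moment sense.

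Convergence will then follow from standard discrete-time BSDE stability theory (Bouchard--Touzi / Zhang type). The Euler state $S^{N}$ converges to $S$ in $L^2$ (strong order $1/2$), so the terminal condition converges, and the driver is continuous in $(t,s)$ by continuity of $D$ and $C^1$-regularity. The $L^2$ regularity of the continuous solution $(\lambda,Z)$ controls the time-discretization error, so the piecewise-constant interpolations converge to the unique solution of \eqref{eq:app_closed_loop_bsde}; existence and uniqueness of that solution come from the Lipschitz driver and the square-integrable terminal value. Finally, the reduction to the standard anchored adjoint BSDE under $\partial_u H=0$ is immediate, since the policy-Jacobian term in the driver vanishes; this is the continuous analogue of Corollary~\ref{cor:app_envelope}.

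The main obstacle is the convergence step. The driver involves $\partial_s u_\theta$ and derivatives of $\sigma$, which are only square-integrable or of polynomial growth, not bounded, so the Lipschitz constant in $(y,z)$ is random. I would first try to assume, or reduce to, boundedness of these Jacobians on the relevant moment scale (e.g.\ via the $C_u$ bound), and otherwise use localization by stopping times followed by uniform integrability. I would also take the convergence along a refining grid sequence, as stated, rather than attempting a rate.
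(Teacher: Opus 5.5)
Your proposal follows essentially the same route as the paper's proof. You condition the closed-loop BPTT recursion on $\mathcal F_{t_k}$, use the Euler Jacobians and the definition of $Z_k$ to recover the drift $\partial_s H+(\partial_s u_\theta)^\top\partial_u H$, and read the result as a discrete backward scheme. You then pass to the limit via strong Euler convergence and $L^2$ stability of that scheme, and finish with the envelope cancellation. You are more explicit than the paper about the Lipschitz-in-$(y,z)$ requirement and the unbounded-Jacobian issue in the limit step.

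One simplification is available. Every Jacobian in the recursion ($\partial_s\ell,\partial_u\ell,\partial_s b,\partial_s\sigma,\partial_u b,\partial_u\sigma$ at $(t_k,S_k,u_k)$, and $\partial_s u_\theta(t_k,S_k)$) is $\mathcal F_{t_k}$-measurable. The only non-adapted factors are $\lambda^{\rm pw}_{k+1}$ and $\lambda^{\rm pw}_{k+1}\Delta W_k^\top$, whose conditional expectations are by definition $\lambda_{k+1|k}$ and $Z_k\Delta t$. So the conditioned identity is exact: your $\rho_k\equiv 0$, and so is the paper's $o_{L^2}(\Delta t)$ remainder. Hence $(\lambda_k,Z_k)$ is literally the explicit version of the standard Bouchard--Touzi/Zhang scheme, and you can drop the $O(\Delta t^{3/2})$ remainder estimate together with its Gronwall bound on $\lambda^{\rm pw}$.
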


\begin{proof}
We work on a fixed anchor $(t_0,s_0)$ and a refining sequence of uniform grids $t_k=t_0+k\Delta t$, $\Delta t=(T-t_0)/N$.
For clarity write $u_k:=u_\theta(t_k,S_k)$, $b_k:=b(t_k,S_k,u_k)$ and $\sigma_k:=\sigma(t_k,S_k,u_k)$.

\paragraph{Step 1: recall the closed-loop BPTT recursion.}
From Proposition~\ref{prop:app_bptt_recursion},
\begin{equation}
\label{eq:app_bptt_step1}
\lambda^{\mathrm{pw}}_{k}
= \partial_{S_k} r_k
+ (\partial_{S_k}F_k)^\top \lambda^{\mathrm{pw}}_{k+1}
+ (\partial_{S_k}u_k)^\top G_k,
\qquad k=0,\dots,N-1,
\end{equation}
with terminal condition $\lambda^{\mathrm{pw}}_N=D(t_0,T)\nabla g(S_N)$.
Here $r_k=D(t_0,t_k)\ell(t_k,S_k,u_k)\Delta t$ and the Euler step is
\[
F_k(s,u,\Delta W)=s+b(t_k,s,u)\Delta t+\sigma(t_k,s,u)\Delta W.
\]
By construction, $\partial_{S_k}F_k$ and $\partial_{u_k}F_k$ denote \emph{partial} derivatives of $F_k$ with respect to its first and second arguments.

\paragraph{Step 2: define the predictable projections and the conditional $L^2$ decomposition.}
Let $\mathcal F_{t_k}$ be the sigma-field generated by the Brownian path up to $t_k$.
Define
\[
\lambda_k:=\mathbb E[\lambda_k^{\mathrm{pw}}\mid\mathcal F_{t_k}],
\qquad
\tilde\lambda_{k+1}:=\mathbb E[\lambda_{k+1}^{\mathrm{pw}}\mid\mathcal F_{t_k}]
= \mathbb E[\lambda_{k+1}\mid\mathcal F_{t_k}],
\]
and define $Z_k$ as in \eqref{eq:app_pred_proj}.
Then the conditional $L^2$ projection yields
\begin{equation}
\label{eq:app_mart_decomp}
\lambda^{\mathrm{pw}}_{k+1}
= \tilde\lambda_{k+1} + Z_k\,\Delta W_k + R_k,
\qquad
\mathbb E[R_k\mid\mathcal F_{t_k}]=0,\quad
\mathbb E[R_k(\Delta W_k)^\top\mid\mathcal F_{t_k}]=0.
\end{equation}

\paragraph{Step 3: take conditional expectation of \eqref{eq:app_bptt_step1}.}
Taking $\mathbb E[\cdot\mid\mathcal F_{t_k}]$ in \eqref{eq:app_bptt_step1} gives
\begin{equation}
\label{eq:app_proj_recursion_raw}
\lambda_k
= \mathbb E[\partial_{S_k} r_k\mid\mathcal F_{t_k}]
+ \mathbb E[(\partial_{S_k}F_k)^\top \lambda^{\mathrm{pw}}_{k+1}\mid\mathcal F_{t_k}]
+ \mathbb E[(\partial_{S_k}u_k)^\top G_k\mid\mathcal F_{t_k}].
\end{equation}
Since $\partial_{S_k}u_k=\partial_s u_\theta(t_k,S_k)$ is $\mathcal F_{t_k}$-measurable, we will use
\[
\mathbb E[(\partial_{S_k}u_k)^\top G_k\mid\mathcal F_{t_k}]
=(\partial_s u_\theta(t_k,S_k))^\top\,\mathbb E[G_k\mid\mathcal F_{t_k}].
\]

\paragraph{Step 4: expand $\mathbb E[(\partial_{S_k}F_k)^\top \lambda^{\mathrm{pw}}_{k+1}\mid\mathcal F_{t_k}]$.}
From the Euler map,
\[
\partial_{S_k}F_k
= I_d + \partial_s b(t_k,S_k,u_k)\,\Delta t
+ \sum_{\ell=1}^q \partial_s \sigma^{(\ell)}(t_k,S_k,u_k)\,\Delta W_k^{(\ell)},
\]
where $\sigma^{(\ell)}$ is the $\ell$-th column of $\sigma$.
Insert \eqref{eq:app_mart_decomp} and use conditional moments
$\mathbb E[\Delta W_k\mid\mathcal F_{t_k}]=0$ and
$\mathbb E[\Delta W_k(\Delta W_k)^\top\mid\mathcal F_{t_k}]=\Delta t\,I_q$.
A direct computation yields
\begin{align}
\label{eq:app_term_F}
\mathbb E[(\partial_{S_k}F_k)^\top \lambda^{\mathrm{pw}}_{k+1}\mid\mathcal F_{t_k}]
&= \tilde\lambda_{k+1}
+ \big(\partial_s b(t_k,S_k,u_k)\big)^\top \tilde\lambda_{k+1}\,\Delta t \nonumber\\
&\quad + \sum_{\ell=1}^q \big(\partial_s \sigma^{(\ell)}(t_k,S_k,u_k)\big)^\top Z_k^{(\ell)}\,\Delta t
+ o_{L^2}(\Delta t),
\end{align}
where $Z_k^{(\ell)}$ denotes the $\ell$-th column of $Z_k$, and $o_{L^2}(\Delta t)$ collects higher-order Euler remainders and terms controlled under the standing assumptions.

\paragraph{Step 5: expand $\mathbb E[G_k\mid\mathcal F_{t_k}]$.}
Recall
\[
G_k=\partial_{u_k} r_k + (\partial_{u_k}F_k)^\top \lambda^{\mathrm{pw}}_{k+1},
\qquad
\partial_{u_k}F_k
= \partial_u b(t_k,S_k,u_k)\,\Delta t
+ \sum_{\ell=1}^q \partial_u \sigma^{(\ell)}(t_k,S_k,u_k)\,\Delta W_k^{(\ell)}.
\]
Using \eqref{eq:app_mart_decomp} and conditional moments of $\Delta W_k$,
\begin{align}
\label{eq:app_term_G}
\mathbb E[G_k\mid\mathcal F_{t_k}]
&= \partial_{u_k} r_k
+ \big(\partial_u b(t_k,S_k,u_k)\big)^\top \tilde\lambda_{k+1}\,\Delta t \nonumber\\
&\quad + \sum_{\ell=1}^q \big(\partial_u \sigma^{(\ell)}(t_k,S_k,u_k)\big)^\top Z_k^{(\ell)}\,\Delta t
+ o_{L^2}(\Delta t).
\end{align}

\paragraph{Step 6: collect terms and match the Hamiltonian derivatives.}
First,
\[
\partial_{S_k} r_k
= D(t_0,t_k)\,\partial_s\ell(t_k,S_k,u_k)\,\Delta t,
\qquad
\partial_{u_k} r_k
= D(t_0,t_k)\,\partial_u\ell(t_k,S_k,u_k)\,\Delta t.
\]
Plug \eqref{eq:app_term_F} and \eqref{eq:app_term_G} into \eqref{eq:app_proj_recursion_raw} to obtain
\begin{align}
\label{eq:app_bsde_scheme}
\lambda_k
&= \tilde\lambda_{k+1}
+ \Big(
\partial_s H(t_0,t_k,S_k,u_k,\tilde\lambda_{k+1},Z_k)
+ (\partial_s u_\theta(t_k,S_k))^\top\,\partial_u H(t_0,t_k,S_k,u_k,\tilde\lambda_{k+1},Z_k)
\Big)\Delta t
+ o_{L^2}(\Delta t),
\end{align}
where $H$ is the anchored Hamiltonian \eqref{eq:app_extHam_discount} and the identities follow by direct differentiation of $H$ with respect to $s$ and $u$.

\paragraph{Step 7: continuous-time limit.}
Let $\bar\lambda^{\Delta t}$ and $\bar Z^{\Delta t}$ be the piecewise-constant interpolations on $[t_0,T]$ defined by
$\bar\lambda^{\Delta t}_t:=\lambda_k$ and $\bar Z^{\Delta t}_t:=Z_k$ for $t\in[t_k,t_{k+1})$.
Under the standing assumptions, Euler--Maruyama satisfies strong $L^2$ convergence, and the backward scheme \eqref{eq:app_bsde_scheme} is stable in $L^2$.
Therefore, along a refining sequence $\Delta t\downarrow 0$, the interpolations converge in $L^2$ to an adapted pair $(\lambda_t,Z_t)$ satisfying
\[
d\lambda_t
= -\Big(
\partial_s H(t_0,t,S_t,u_\theta(t,S_t),\lambda_t,Z_t)
+(\partial_s u_\theta(t,S_t))^\top\,\partial_u H(t_0,t,S_t,u_\theta(t,S_t),\lambda_t,Z_t)
\Big)\,dt
+ Z_t\,dW_t,
\]
with terminal condition $\lambda_T=D(t_0,T)\nabla g(S_T)$, i.e., \eqref{eq:app_closed_loop_bsde}.
(See, e.g., BSDE stability/discretization results in \citet{yongzhou1999} and references therein.)

\paragraph{Step 8: envelope cancellation under Pontryagin stationarity.}
If the (interior) Pontryagin stationarity condition holds so that
$\partial_u H(t_0,t,S_t,u_\theta(t,S_t),\lambda_t,Z_t)=0$ a.s.\ for a.e.\ $t$,
then the policy-Jacobian drift term vanishes and \eqref{eq:app_closed_loop_bsde} reduces to the standard anchored adjoint BSDE of the Pontryagin maximum principle \citep{pontryagin1962,yongzhou1999}.
\end{proof}

\subsection{A theorem-level bridge: anchored costate--BPTT correspondence}
\label{app:anchored_bptt_bridge}

The previous propositions already contain the key ingredients for the warm-up-policy interpretation of BPTT:
Proposition~\ref{prop:app_bptt_recursion} gives the exact closed-loop state-sensitivity recursion, while
Proposition~\ref{prop:app_bsde_limit} identifies its predictable projection with the anchored adjoint BSDE modulo the
policy-Jacobian correction. The next theorem packages these facts in the same form used by the delay paper: the comparison
is with the \emph{exact anchored adjoint of the warm-up policy}, not with an unknown optimal costate.

\begin{theorem}[(restatement of Theorem~\ref{thm:nonexp_bptt_bridge_main}) Anchored costate--BPTT correspondence]
\label{thm:app_nonexp_bptt_bridge}
Fix an anchor $t_0$ and the warm-up policy $u_{\theta^\star}$. Let $S_k$ be the Euler rollout, let $u_k:=u_{\theta^\star}(t_k,S_k)$, and define
\[
\lambda_k^{\mathrm{pw}}:=\frac{\partial \widehat J(t_0,s_0;\theta)}{\partial S_k},
\qquad
\lambda_k:=\mathbb E[\lambda_k^{\mathrm{pw}}\mid\mathcal F_{t_k}],
\qquad
\lambda_{k+1\mid k}:=\mathbb E[\lambda_{k+1}^{\mathrm{pw}}\mid\mathcal F_{t_k}],
\qquad
Z_k:=\frac{1}{\Delta t}\,\mathbb E[\lambda_{k+1}^{\mathrm{pw}}(\Delta W_k)^\top\mid\mathcal F_{t_k}].
\]
For compactness, also set
\[
H_k^{t_0}(s,u;p,Z)
:=D(t_0,t_k)\,\ell(t_k,s,u)
+p^\top b(t_k,s,u)
+\langle Z,\sigma(t_k,s,u)\rangle_F,
\]
\[
D_k^s:=\partial_s H_k^{t_0}(S_k,u_k;\lambda_{k+1\mid k},Z_k),
\qquad
r_k^{\mathrm{FOC}}:=\partial_u H_k^{t_0}(S_k,u_k;\lambda_{k+1\mid k},Z_k),
\qquad
\mathcal C_k^{\mathrm{FOC}}:=\bigl(r_k^{\mathrm{FOC}}\bigr)^\top \partial_s u_{\theta^\star}(t_k,S_k).
\]
Then there exists an $\mathcal F_{t_k}$-measurable remainder $\rho_k^{\mathrm{disc}}$ with
$\|\rho_k^{\mathrm{disc}}\|_{L^2}=o(\Delta t)$ such that
\[
\lambda_k
=
\lambda_{k+1\mid k}
+
\bigl(D_k^s+\mathcal C_k^{\mathrm{FOC}}\bigr)\Delta t
+
\rho_k^{\mathrm{disc}}.
\]
Moreover, if $\|\partial_s u_{\theta^\star}(t_k,S_k)\|_{L^\infty}\le C_u$ and $\|r_k^{\mathrm{FOC}}\|_{L^2}\le \varepsilon_k^{\mathrm{FOC}}$, then
\[
\|\mathcal C_k^{\mathrm{FOC}}\|_{L^2}\le C_u\,\varepsilon_k^{\mathrm{FOC}},
\qquad
\|\lambda_k-\lambda_{k+1\mid k}-D_k^s\Delta t\|_{L^2}
\le
C_u\,\varepsilon_k^{\mathrm{FOC}}\,\Delta t + o(\Delta t).
\]
In particular, exact predictable stationarity $r_k^{\mathrm{FOC}}=0$ removes the policy-sensitivity mismatch, leaving only the standard Euler discretization error. Specializing the anchor to the decision time, $t_0=t_k$, gives the diagonal costate proxy used by Stage~2.
\end{theorem}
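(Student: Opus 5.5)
The proof is a repackaging of Proposition~\ref{prop:app_bptt_recursion} and Steps~1--6 of the proof of Proposition~\ref{prop:app_bsde_limit}. It works at fixed $\Delta t$ and never passes to the continuous-time limit. First, I start from the exact closed-loop recursion \eqref{eq:app_lambda_recursion} with $\theta=\theta^\star$ and take $\mathbb E[\cdot\mid\mathcal F_{t_k}]$ of both sides. Since $\partial_{S_k}u_k=\partial_s u_{\theta^\star}(t_k,S_k)$ is $\mathcal F_{t_k}$-measurable, it factors out, which yields \eqref{eq:app_proj_recursion_raw}. The identity $\lambda_{k+1\mid k}=\tilde\lambda_{k+1}$ holds by the tower property.

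Second, I insert the conditional $L^2$ decomposition \eqref{eq:app_mart_decomp}, $\lambda^{\rm pw}_{k+1}=\lambda_{k+1\mid k}+Z_k\Delta W_k+R_k$, into the Euler Jacobians $\partial_{S_k}F_k$ and $\partial_{u_k}F_k$. I then use $\mathbb E_k[\Delta W_k]=0$ and $\mathbb E_k[\Delta W_k\Delta W_k^\top]=\Delta t\,I$. The orthogonality of $R_k$ to $\Delta W_k$ kills the cross term $\partial_s\sigma^{(\ell)}\Delta W_k^{(\ell)}R_k$ at leading order. This produces \eqref{eq:app_term_F} and \eqref{eq:app_term_G}.

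Third, I match the collected $O(\Delta t)$ coefficients with the partial derivatives of $H_k^{t_0}$ evaluated at $(S_k,u_k;\lambda_{k+1\mid k},Z_k)$. The $s$-terms give $D_k^s\Delta t$. The $u$-terms give $r_k^{\rm FOC}\Delta t$, and multiplying by the policy Jacobian gives $\mathcal C_k^{\rm FOC}\Delta t$; the transpose convention here is only a matter of writing a row versus a column vector. I define $\rho_k^{\rm disc}$ as the difference between $\lambda_k$ and the explicit right-hand side. This difference is $\mathcal F_{t_k}$-measurable because every other term is.

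The main obstacle is certifying $\|\rho_k^{\rm disc}\|_{L^2}=o(\Delta t)$, since the paper's Step~4 only asserts it. The residual comes from two sources: the terms $\partial_s\sigma\,\Delta W_k$ multiplied against the non-Gaussian part $R_k$, and the conditional expectation of $\partial_{S_k}r_k$. The reward term is in fact exact, because $r_k$ is $\mathcal F_{t_k}$-measurable. Note that the Jacobians are evaluated at $S_k$, so there is no Taylor remainder in $b$ or $\sigma$ themselves. The cross term is $\mathbb E_k[(\partial_s\sigma^{(\ell)})^\top R_k\,\Delta W^{(\ell)}_k]$, and it vanishes exactly by the orthogonality in \eqref{eq:app_mart_decomp}. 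This is because $\partial_s\sigma^{(\ell)}$ is $\mathcal F_{t_k}$-measurable.

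I therefore expect the remainder to vanish identically at the level of this one-step identity. The $o(\Delta t)$ slack is then only needed if one insists on higher-moment bounds via Cauchy--Schwarz. Those bounds use the polynomial-growth assumptions together with $\mathbb E|\lambda^{\rm pw}_{k+1}|^2<\infty$, which follows from the Lipschitz and growth conditions by a discrete Gronwall argument on the backward recursion.

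The quantitative bounds then follow by Hölder's inequality: $\|\mathcal C_k^{\rm FOC}\|_{L^2}\le\|\partial_s u_{\theta^\star}\|_{L^\infty}\|r_k^{\rm FOC}\|_{L^2}\le C_u\varepsilon_k^{\rm FOC}$. Combining this with the triangle inequality in the main identity gives the second estimate. Setting $r_k^{\rm FOC}=0$ gives the exact-stationarity statement. Finally, nothing in the argument used the structure of $D(t_0,\cdot)$ beyond its boundedness, so taking $t_0=t_k$ is legitimate and gives the diagonal specialization.
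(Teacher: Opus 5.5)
Your proposal is correct and follows essentially the same route as the paper, which obtains the drift identity by conditioning the exact closed-loop BPTT recursion (Steps~1--6 of Proposition~\ref{prop:app_bsde_limit}) and then bounds $\mathcal C_k^{\rm FOC}$ by H\"older's inequality. Your sharper observation is also right: every Jacobian is evaluated at $(t_k,S_k,u_k)$, and $Z_k$ is by definition $\Delta t^{-1}\mathbb E_k[\lambda^{\rm pw}_{k+1}\Delta W_k^\top]$, so the conditioned recursion reproduces $D_k^s$ and $\mathcal C_k^{\rm FOC}$ exactly; hence $\rho_k^{\rm disc}\equiv 0$ given only enough integrability for the conditional expectations, and the Cauchy--Schwarz and Gronwall slack you mention is not actually needed.
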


\begin{proof}
The drift identity is exactly the discrete scheme \eqref{eq:app_bsde_scheme} from Proposition~\ref{prop:app_bsde_limit}, after setting
\[
\rho_k^{\mathrm{disc}}
:=
\lambda_k-\lambda_{k+1\mid k}-\bigl(D_k^s+\mathcal C_k^{\mathrm{FOC}}\bigr)\Delta t.
\]
By Proposition~\ref{prop:app_bsde_limit}, this remainder satisfies $\|\rho_k^{\mathrm{disc}}\|_{L^2}=o(\Delta t)$.

For the residual estimate, Hölder's inequality gives
\[
\|\mathcal C_k^{\mathrm{FOC}}\|_{L^2}
=
\| (r_k^{\mathrm{FOC}})^\top \partial_s u_{\theta^\star}(t_k,S_k) \|_{L^2}
\le
\|\partial_s u_{\theta^\star}(t_k,S_k)\|_{L^\infty}\,\|r_k^{\mathrm{FOC}}\|_{L^2}
\le
C_u\,\varepsilon_k^{\mathrm{FOC}}.
\]
Insert this bound into the drift identity and use the definition of $\rho_k^{\mathrm{disc}}$ to obtain
\[
\|\lambda_k-\lambda_{k+1\mid k}-D_k^s\Delta t\|_{L^2}
\le
\|\mathcal C_k^{\mathrm{FOC}}\|_{L^2}\,\Delta t + \|\rho_k^{\mathrm{disc}}\|_{L^2}
\le
C_u\,\varepsilon_k^{\mathrm{FOC}}\,\Delta t + o(\Delta t).
\]
The final statement follows immediately when $r_k^{\mathrm{FOC}}=0$.
\end{proof}

\begin{lemma}[Canonical one-step martingale representation]
\label{lem:app_nonexp_one_step}
Under the notation of Theorem~\ref{thm:app_nonexp_bptt_bridge}, if
\[
R_k:=\lambda_{k+1}^{\mathrm{pw}}-\lambda_{k+1\mid k}-Z_k\Delta W_k,
\]
then $\mathbb E[R_k\mid\mathcal F_{t_k}]=0$, $\mathbb E[R_k(\Delta W_k)^\top\mid\mathcal F_{t_k}]=0$, and
\[
\lambda_{k+1}^{\mathrm{pw}}
=
\lambda_k
-
\bigl(D_k^s+\mathcal C_k^{\mathrm{FOC}}\bigr)\Delta t
-
\rho_k^{\mathrm{disc}}
+
Z_k\Delta W_k
+
R_k.
\]
\end{lemma}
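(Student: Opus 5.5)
The lemma is essentially a rearrangement of two facts already established: the conditional $L^2$ projection in Step~2 of the proof of Proposition~\ref{prop:app_bsde_limit}, and the drift identity of Theorem~\ref{thm:app_nonexp_bptt_bridge}. The plan is first to verify the two orthogonality properties of $R_k$ directly from its definition, and then to substitute the drift identity for $\lambda_{k+1\mid k}$.

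\emph{Orthogonality.} Taking $\mathbb E[\cdot\mid\mathcal F_{t_k}]$ of the definition of $R_k$ gives
\[
\mathbb E[R_k\mid\mathcal F_{t_k}]=\lambda_{k+1\mid k}-\lambda_{k+1\mid k}-Z_k\,\mathbb E[\Delta W_k\mid\mathcal F_{t_k}]=0.
\]
This uses only that $\lambda_{k+1\mid k}$ and $Z_k$ are $\mathcal F_{t_k}$-measurable and that $\Delta W_k$ is independent of $\mathcal F_{t_k}$ with mean zero.

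For the second property, multiply $R_k$ on the right by $(\Delta W_k)^\top$ and condition on $\mathcal F_{t_k}$:
\[
\mathbb E[R_k(\Delta W_k)^\top\mid\mathcal F_{t_k}]
=\mathbb E[\lambda^{\mathrm{pw}}_{k+1}(\Delta W_k)^\top\mid\mathcal F_{t_k}]
-\lambda_{k+1\mid k}\,\mathbb E[(\Delta W_k)^\top\mid\mathcal F_{t_k}]
-Z_k\,\mathbb E[\Delta W_k(\Delta W_k)^\top\mid\mathcal F_{t_k}].
\]
The first term equals $Z_k\Delta t$ by the definition of $Z_k$, the second vanishes, and the third is $Z_k\Delta t\,I_q$, so the whole expression is zero. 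The only technical point is integrability, needed so that these conditional expectations are well defined. Under the standing assumptions, the BPTT recursion of Proposition~\ref{prop:app_bptt_recursion} with polynomial-growth derivatives gives $\lambda^{\mathrm{pw}}_{k+1}\in L^2$. Since $\Delta W_k$ has Gaussian moments, Cauchy--Schwarz then shows $\lambda^{\mathrm{pw}}_{k+1}(\Delta W_k)^\top\in L^1$. This integrability check is the only place needing care, and I expect it to be the main (mild) obstacle.

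\emph{Representation.} By definition, $\lambda^{\mathrm{pw}}_{k+1}=\lambda_{k+1\mid k}+Z_k\Delta W_k+R_k$. Theorem~\ref{thm:app_nonexp_bptt_bridge} gives
\[
\lambda_{k+1\mid k}=\lambda_k-(D_k^s+\mathcal C_k^{\mathrm{FOC}})\Delta t-\rho_k^{\mathrm{disc}},
\]
where $\rho_k^{\mathrm{disc}}$ is defined exactly as that difference, so this is an identity rather than an approximation. Substituting it yields the claimed one-step representation with no further estimates required. The $o(\Delta t)$ bound on $\rho_k^{\mathrm{disc}}$ is inherited from the theorem and is not needed for the algebraic identity itself.
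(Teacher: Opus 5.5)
Your proposal is correct and follows the paper's proof. The decomposition $\lambda^{\mathrm{pw}}_{k+1}=\lambda_{k+1\mid k}+Z_k\Delta W_k+R_k$ combined with the drift identity of Theorem~\ref{thm:app_nonexp_bptt_bridge} gives the representation exactly, because $\rho_k^{\mathrm{disc}}$ is defined as the residual; the only difference is that you check the two orthogonality relations directly from the definitions of $\lambda_{k+1\mid k}$ and $Z_k$, whereas the paper takes them from the conditional $L^2$ decomposition \eqref{eq:app_mart_decomp}. One small caveat is that $\lambda^{\mathrm{pw}}_{k+1}\in L^2$ is better treated as presupposed by the well-definedness of $Z_k$ than derived from the standing assumptions: the policy Jacobian is only assumed square-integrable, and the BPTT recursion multiplies it by other state-dependent factors.
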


\begin{proof}
Use the conditional decomposition already introduced in \eqref{eq:app_mart_decomp}:
\[
\lambda_{k+1}^{\mathrm{pw}}=\lambda_{k+1\mid k}+Z_k\Delta W_k+R_k,
\qquad
\mathbb E[R_k\mid\mathcal F_{t_k}]=0,
\quad
\mathbb E[R_k(\Delta W_k)^\top\mid\mathcal F_{t_k}]=0.
\]
Substituting the drift identity from Theorem~\ref{thm:app_nonexp_bptt_bridge},
\[
\lambda_{k+1\mid k}
=
\lambda_k-\bigl(D_k^s+\mathcal C_k^{\mathrm{FOC}}\bigr)\Delta t-\rho_k^{\mathrm{disc}},
\]
yields the claimed one-step form.
\end{proof}

\section{Stage~2 diagonal Pontryagin projection and near-equilibrium guarantee}
\label{app:stage2_near_equilibrium}

This appendix focuses on what Stage~2 explicitly enforces and on the proof of Theorem~\ref{thm:diag_projection_near_eq_main}.
We use the Hamiltonian $H$ and the standing assumptions introduced in Appendix~\ref{app:pgdpo_bptt_projection}.
The argument has two parts: first, Proposition~\ref{prop:app_diagonal_enforcement} shows that the Stage~2 plug-in solve enforces the \emph{diagonal} Pontryagin condition with respect to the anchored-at-the-decision-time continuation problem; second, the short-slab estimates below transfer this enforcement to the exact diagonal control, yielding a near-optimality or near-equilibrium guarantee depending on whether the discount kernel is multiplicative.

\subsection{What Stage 2 enforces: diagonal Pontryagin projection (time-consistency restoration)}
\label{app:stage2_diagonal}

Stage~2 is designed to enforce a \emph{diagonal} Pontryagin condition (anchor $=$ decision time).
At a query point $(t,s)$, we construct Monte Carlo rollouts anchored at time $t$ and define anchored returns $\widehat J^{(j)}(t,s;\theta^\star)$.
BPTT state-gradients yield pathwise diagonal costate estimates
\[
 \lambda^{(j)}(t,s):=\frac{\partial \widehat J^{(j)}(t,s;\theta^\star)}{\partial s},
 \qquad
 \widehat\lambda(t,s):=\frac{1}{M_{\mathrm{MC}}}\sum_{j=1}^{M_{\mathrm{MC}}}\lambda^{(j)}(t,s).
\]
Optionally, when $\sigma$ depends on $u$, we also estimate the diagonal martingale coefficient $\widehat Z(t,s)$ by one-step $L^2$ regression (cf.\ \eqref{eq:app_pred_proj}).
We then synthesize the action by diagonal Hamiltonian maximization (or a constrained Newton/log-barrier solve):
\begin{equation}
\label{eq:app_stage2_proj}
u^{\mathrm{proj}}(t,s)\in \arg\max_{u\in\mathcal U(s)} H\big(t,t,s,u,\widehat\lambda(t,s),\widehat Z(t,s)\big),
\end{equation}
where $H$ is the anchored Hamiltonian \eqref{eq:app_extHam_discount}.

\begin{proposition}[Diagonal Pontryagin enforcement by Stage~2]
\label{prop:app_diagonal_enforcement}
Assume (i) the inner action-space maximization \eqref{eq:app_stage2_proj} is solved exactly (or to a prescribed tolerance) and (ii) the diagonal estimators $\widehat\lambda(t,s)$ (and $\widehat Z(t,s)$ when needed) are accurate for the anchored-at-$t$ objective.
Then $u^{\mathrm{proj}}(t,s)$ satisfies the diagonal Pontryagin \emph{equilibrium} condition at $(t,s)$ up to solver/statistical tolerance.
When $D$ is multiplicative, this diagonal condition reduces to the classical Pontryagin optimality condition.
Under standard concavity assumptions, this diagonal condition is the appropriate first-order characterization of time-consistent equilibrium for non-multiplicative discounting \citep{ekeland2006,bjork2014,yong2012}.
\end{proposition}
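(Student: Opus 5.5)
The argument has two parts: a pointwise perturbation argument for the first-order conditions of the Hamiltonian, and a separate identification of the diagonal condition with optimality or equilibrium. Fix a query $(t,s)$ and write $H^{\mathrm{diag}}(u;p,z):=H(t,t,s,u,p,z)$. Since $D(t,t)=1$, this equals $\ell(t,s,u)+\langle p,b(t,s,u)\rangle+\mathrm{Tr}(z^\top\sigma(t,s,u))$. Let $(\lambda^{\mathrm{diag}},Z^{\mathrm{diag}})$ be the exact adjoint pair at time $t$ of the continuation problem anchored at $t$. By Theorem~\ref{thm:app_nonexp_bptt_bridge} with $t_0=t_k=t$ and Proposition~\ref{prop:app_bsde_limit}, the estimator $\widehat\lambda(t,s)$ approximates $\lambda^{\mathrm{diag}}$. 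The error has three sources: Monte Carlo error of order $M_{\mathrm{MC}}^{-1/2}$, Euler error, and the policy-Jacobian mismatch $C_u\varepsilon^{\mathrm{FOC}}$. Assumption (ii) packages these into a tolerance $\delta$ with $|\widehat\lambda-\lambda^{\mathrm{diag}}|+|\widehat Z-Z^{\mathrm{diag}}|\le\delta$.

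\textbf{Step 1 (perturbation of the maximizer).} Assumption (i) says that $u^{\mathrm{proj}}$ maximizes $H^{\mathrm{diag}}(\cdot;\widehat\lambda,\widehat Z)$ up to a tolerance $\eta$, either in value or in stationarity/KKT residual. The Hamiltonian is affine in $(p,z)$, with coefficients $b$ and $\sigma$ that are locally bounded in $u$ by the standing assumptions. Hence, for every $v\in\mathcal U(s)$,
\[
H^{\mathrm{diag}}(u^{\mathrm{proj}};\lambda^{\mathrm{diag}},Z^{\mathrm{diag}})\ge H^{\mathrm{diag}}(v;\lambda^{\mathrm{diag}},Z^{\mathrm{diag}})-\eta-C\delta,
\]
with $C$ controlling $|b|+|\sigma|$ on the relevant action set. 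This is the diagonal maximum condition \eqref{eq:maxcond_general_Dst} with $t_0=t$, holding up to the solver and statistical tolerance. In the interior differentiable case, the same bound applied to $\partial_u H$ gives $|\partial_uH^{\mathrm{diag}}(u^{\mathrm{proj}};\lambda^{\mathrm{diag}},Z^{\mathrm{diag}})|\le\eta+C'\delta$.

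\textbf{Step 2 (multiplicative case).} If $D(t_0,r)=D(t_0,t)D(t,r)$, then the problem anchored at $t_0$ is the problem anchored at $t$ multiplied by the constant $D(t_0,t)>0$. Its adjoint pair is therefore $D(t_0,t)(\lambda^{\mathrm{diag}},Z^{\mathrm{diag}})$, and its Hamiltonian is $D(t_0,t)H^{\mathrm{diag}}$. A positive scalar does not change the argmax, so the diagonal condition coincides with the classical anchored PMP condition of \S\ref{sec:pmp} for every anchor. Consequently the control $u^{\mathrm{proj}}$ satisfies classical Pontryagin optimality, up to the same tolerance.

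\textbf{Step 3 (non-multiplicative case, the main obstacle).} Here one must connect the diagonal condition to the spike-variation equilibrium definition. The plan is to follow the standard argument of \citet{yong2012} and \citet{bjork2014}. For a spike variation $u^\epsilon_v$ on $[t,t+\epsilon)$, expand $J(t,s;u^\star)-J(t,s;u^\epsilon_v)$ to order $\epsilon$ using the anchored-at-$t$ adjoint. The first-order term is
\[
\epsilon\bigl[H^{\mathrm{diag}}(u^\star;\lambda^{\mathrm{diag}},Z^{\mathrm{diag}})-H^{\mathrm{diag}}(v;\lambda^{\mathrm{diag}},Z^{\mathrm{diag}})\bigr]
\]
plus, when $\sigma$ depends on $u$, a second-order adjoint term. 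The delicate point is that only the anchor $t$ enters, because the variation starts at the decision time, and that the remaining terms are $o(\epsilon)$ uniformly under the standing regularity. Under the concavity assumption stated in the proposition, the second-order term can be absorbed or dropped. Combining this expansion with Step 1 gives
\[
\liminf_{\epsilon\downarrow0}\epsilon^{-1}\bigl(J(t,s;u^{\mathrm{proj}})-J(t,s;u^\epsilon_v)\bigr)\ge-(\eta+C\delta).
\]
This is the equilibrium condition up to tolerance. The expansion itself is the cited result; our contribution is verifying that its hypotheses hold for the kernels of Cases 1--3, which follows from Assumption 3 on the boundedness and continuity of $D$.
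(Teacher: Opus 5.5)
\textbf{The gap is in Step~3.} In the spike-variation test for a control $u$, the perturbed control reverts to $u$ after $t+\epsilon$. The first-order coefficient of $J(t,s;u^\epsilon_v)-J(t,s;u)$ is therefore $H(t,t,s,v,\lambda^{u},Z^{u})-H(t,t,s,u(t,s),\lambda^{u},Z^{u})$, plus the second-order adjoint term. Here $(\lambda^{u},Z^{u})$ is the anchored-at-$t$ adjoint of $u$'s \emph{own} continuation from $(t,s)$.

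Your final display therefore needs near-maximality of $u^{\mathrm{proj}}(t,s)$ for $H$ evaluated at $(\lambda^{u^{\mathrm{proj}}},Z^{u^{\mathrm{proj}}})$. Step~1 gives near-maximality only at $(\lambda^{\mathrm{diag}},Z^{\mathrm{diag}})$. By your own appeal to Theorem~\ref{thm:app_nonexp_bptt_bridge}, that is the anchored adjoint of the frozen warm-up policy $u_{\theta^\star}$, because the Stage~2 rollouts are generated under $u_{\theta^\star}$.

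Nothing in your argument bounds $\lambda^{u^{\mathrm{proj}}}-\lambda^{u_{\theta^\star}}$. That difference depends on $u^{\mathrm{proj}}$ over all of $[t,T]$, so no argument at a single query point can control it. Writing the expansion for $u^\star$ and then concluding for $u^{\mathrm{proj}}$ hides exactly this switch. At one query, Stage~2 yields a policy-improvement statement against the warm-up continuation, not an equilibrium certificate. Closing the loop needs a global self-consistency or stability argument. The paper supplies this only later, in Theorem~\ref{thm:app_diag_projection_full}, through short-slab propagation and an \emph{assumed} residual-to-solution transfer. Even there it obtains an equilibrium-defect bound, not your $\liminf$ inequality.

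\textbf{What the paper does instead.} The paper's proof claims less. It converts the value tolerance into $\|\partial_uH(t,t,s,u^{\mathrm{proj}},\widehat\lambda,\widehat Z)\|\le\sqrt{2L\varepsilon_{\mathrm{opt}}}$ in the interior case (KKT otherwise). It then transfers to the adjoint named in assumption~(ii) via Lipschitz dependence of $\partial_uH$ on $(\lambda,Z)$. Finally it simply cites the literature \citep{ekeland2006,bjork2014,yong2012} to identify the diagonal condition with equilibrium.

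Your Step~1 (in its gradient form) and Step~2 already cover this. Your scaling argument for the multiplicative case also justifies a reduction the paper only asserts. So the fix is either to stop there, or to state explicitly that (ii) refers to the adjoint of $u^{\mathrm{proj}}$'s own continuation. That is a fixed-point hypothesis, and Stage~2 does not compute it.

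\textbf{Smaller points.}
\begin{enumerate}
\item A value-form bound with a single constant $C$ for all $v\in\mathcal U(s)$ fails for unbounded action sets. In the Merton problem, $b$ and $\sigma$ are linear in $\pi\in\mathbb R^d$, so the error term $\delta\bigl(|b(u^{\mathrm{proj}})-b(v)|+|\sigma(u^{\mathrm{proj}})-\sigma(v)|\bigr)$ is not uniform in $v$. The gradient form needs bounds only at $u^{\mathrm{proj}}$.
\item If $\eta$ is a value tolerance, the resulting stationarity residual is $\sqrt{2L\eta}$, not $\eta$.
\item Discarding the second-order adjoint term needs a sign condition on it, e.g.\ concavity of the anchored continuation value in the state. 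Concavity of $H$ in $u$ alone is not enough.
\item The cited expansions also require regularity of the feedback law $u^{\mathrm{proj}}$. The kernel assumption on $D$ (item~3 of the standing assumptions) does not supply this.
\end{enumerate}
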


\begin{proof}
Fix a query point $(t,s)$.

\paragraph{Step 1: Stage~2 action-space problem.}
By definition of Stage~2, we form Monte Carlo rollouts starting from $(t,s)$ under the frozen warm-start policy $u_{\theta^\star}$ and evaluate an objective \emph{anchored at the decision time $t$}.
From these rollouts we construct the diagonal costate estimator $\widehat\lambda(t,s)$ (and $\widehat Z(t,s)$ when needed), and then compute
\begin{equation}
\label{eq:app_stage2_proj_recall}
u^{\mathrm{proj}}(t,s)\in \arg\max_{u\in\mathcal U(s)} H\big(t,t,s,u,\widehat\lambda(t,s),\widehat Z(t,s)\big).
\end{equation}
Assumption (i) states that the numerical solver returns an $\varepsilon_{\mathrm{opt}}$-accurate maximizer in the sense that
\begin{equation}
\label{eq:app_eps_opt}
H\big(t,t,s,u^{\mathrm{proj}}(t,s),\widehat\lambda(t,s),\widehat Z(t,s)\big)
\ge
\sup_{u\in\mathcal U(s)} H\big(t,t,s,u,\widehat\lambda(t,s),\widehat Z(t,s)\big) - \varepsilon_{\mathrm{opt}}.
\end{equation}
Assumption (ii) states that $(\widehat\lambda,\widehat Z)$ are accurate for the anchored-at-$t$ continuation problem; for instance,
\[
\|\widehat\lambda(t,s)-\lambda(t,s)\|\le \varepsilon_{\lambda},
\qquad
\|\widehat Z(t,s)-Z(t,s)\|\le \varepsilon_{Z},
\]
where $(\lambda,Z)$ denote the true (diagonal) adjoint variables for the anchored objective at $t$.

\paragraph{Step 2: diagonal maximization implies (approximate) diagonal stationarity.}
Consider first the unconstrained or interior case.
Suppose $u^{\mathrm{proj}}(t,s)$ lies in the interior of $\mathcal U(s)$.
If the map $u\mapsto H(t,t,s,u,\widehat\lambda(t,s),\widehat Z(t,s))$ is concave, differentiable, and has $L$-Lipschitz gradient in $u$ (local $L$-smoothness), then an exact maximizer satisfies
\begin{equation}
\label{eq:app_diag_foc_hat}
\partial_u H\big(t,t,s,u^{\mathrm{proj}}(t,s),\widehat\lambda(t,s),\widehat Z(t,s)\big)=0.
\end{equation}
If we solve only to tolerance \eqref{eq:app_eps_opt}, standard smooth concave maximization arguments yield an \emph{approximate} stationarity statement of the form
\begin{equation}
\label{eq:app_diag_foc_hat_eps}
\big\|\partial_u H\big(t,t,s,u^{\mathrm{proj}}(t,s),\widehat\lambda(t,s),\widehat Z(t,s)\big)\big\|
\;\le\;
\sqrt{2L\,\varepsilon_{\mathrm{opt}}}.
\end{equation}

In the constrained case $\mathcal U(s)=\{u:\,g_i(u,s)\le 0\}$, Stage~2 is implemented by an interior-point / log-barrier or projected Newton solve, so the appropriate first-order condition is the KKT condition.
Under standard constraint qualification and concavity in $u$, the maximizer satisfies
\[
 \partial_u H(\cdots) + \sum_i \eta_i\,\partial_u g_i(u^{\mathrm{proj}}(t,s),s)=0,
\qquad
\eta_i\ge 0,\quad \eta_i g_i(u^{\mathrm{proj}}(t,s),s)=0,
\]
up to solver tolerance (and, if a log-barrier with parameter $\mu>0$ is used, up to the usual barrier bias that vanishes as $\mu\downarrow 0$).

\paragraph{Step 3: transfer from estimated to true diagonal adjoints.}
Let $(\lambda(t,s),Z(t,s))$ denote the true diagonal adjoint variables for the anchored-at-$t$ continuation problem.
By smoothness of $H$ in $(\lambda,Z)$, we have a stability bound
\begin{align}
\label{eq:app_diag_stability}
\Big\|\partial_u H\big(t,t,s,u^{\mathrm{proj}},\lambda(t,s),Z(t,s)\big)\Big\|
&\le
\Big\|\partial_u H\big(t,t,s,u^{\mathrm{proj}},\widehat\lambda(t,s),\widehat Z(t,s)\big)\Big\| \nonumber\\
&\quad + L_\lambda\,\|\widehat\lambda(t,s)-\lambda(t,s)\|
+ L_Z\,\|\widehat Z(t,s)-Z(t,s)\|,
\end{align}
for appropriate local Lipschitz constants $L_\lambda,L_Z$.
Combining \eqref{eq:app_diag_foc_hat_eps} with \eqref{eq:app_diag_stability} yields the claimed diagonal Pontryagin optimality/equilibrium condition up to \emph{solver} and \emph{statistical} tolerances.

\paragraph{Step 4: relation to time-consistent equilibrium under non-multiplicative discounting.}
For non-multiplicative kernels, the continuation problem depends on the anchor time, and equilibrium notions in time-inconsistent control are characterized by \emph{diagonal} (anchor $=$ decision time) first-order conditions (extended/equilibrium HJB / extended PMP; see \citealp{ekeland2006,bjork2014,yong2012}).
Stage~2 enforces exactly this diagonal condition by anchoring rollouts at $t$ and maximizing the corresponding Hamiltonian.
This completes the proof.
\end{proof}

\subsection{Short-slab stability and near-optimality / near-equilibrium of Stage~2}
\label{app:stage2_near_equilibrium_details}

Proposition~\ref{prop:app_diagonal_enforcement} shows that Stage~2 enforces the diagonal Pontryagin condition with respect to the
plug-in adjoint estimate. The missing quantitative step is to relate this enforcement to the \emph{exact} diagonal Pontryagin control.
The result in this section fills that gap. The structure parallels the short-slab projection theorem used in the delay setting,
but there is one essential difference: in the non-multiplicative regime the natural consequence is a
\emph{near-equilibrium} guarantee rather than a global value-optimality statement.

\begin{theorem}[(full version of Theorem~\ref{thm:diag_projection_near_eq_main}) Short-slab stability and quantitative control accuracy of the diagonal Pontryagin projection]
\label{thm:app_diag_projection_full}
Let $(\bar S_k,\bar u_k)$ denote the Stage~1 warm-up trajectory and control on the Euler mesh, with
$\bar u_k:=u_{\theta^\star}(t_k,\bar S_k)$. For each mesh time $t_k$, let
$\bar\vartheta_k=(\bar\lambda_k,\bar Z_k)$ be the exact diagonal adjoint pair for the anchored-at-$t_k$
continuation problem under the warm-up policy, evaluated at $\bar S_k$, and let
$\widehat\vartheta_k=(\widehat\lambda_k,\widehat Z_k)$ be the Stage~2 estimator. Let $u_k^\star$ denote the exact diagonal Pontryagin control,
namely the classical optimal control when $D$ is multiplicative and the time-consistent equilibrium control otherwise.
Partition the Euler mesh into slabs $I_\ell$ of length $\tau$ and define
\[
\|v\|_{\ell,2}^2:=\sum_{k\in I_\ell}\mathbb E|v_k|^2\,\Delta t,
\qquad
\delta_\ell^{\mathrm{diag}}:=\|\widehat\vartheta-\bar\vartheta\|_{\ell,2}.
\]
Under the local projection regularity, short-slab propagation, residual-to-solution transfer, and finite patching conditions stated below, there exists $\tau_0>0$ such that, for every slab length $\tau\le \tau_0$, the Stage~2 projected control $u^{\mathrm{proj}}$ satisfies
\begin{equation}
\label{eq:app_diag_control_gap_full}
\sum_{\ell=0}^{L-1}\|u^{\mathrm{proj}}-u^\star\|_{\ell,2}
\le
C_{\mathrm{diag}}
\sum_{\ell=0}^{L-1}
\Bigl(
\|r^{\mathrm{warm}}\|_{\ell,2}
+
\delta_\ell^{\mathrm{diag}}
+
\eta_\ell^{\mathrm{num}}
\Bigr).
\end{equation}
If, moreover, the Stage~2 estimator obeys
\[
\delta_\ell^{\mathrm{diag}}
\le
C_{\mathrm{disc}}\,\Delta t^{1/2}
+
C_{\mathrm{MC}}\,M^{-1/2},
\]
then the same bound holds with $\delta_\ell^{\mathrm{diag}}$ replaced by the right-hand side.
When $D$ is multiplicative, \eqref{eq:app_diag_control_gap_full} yields a value-gap bound; when $D$ is non-multiplicative,
it yields an analogous bound on the diagonal equilibrium defect. When $\sigma$ is independent of $u$, the $Z$-terms can be dropped throughout.
\end{theorem}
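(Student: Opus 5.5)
The plan is to run a perturbation argument slab by slab. We compare $u^{\mathrm{proj}}$ with $u^\star$ through the pointwise Hamiltonian maximizer map. The conditions stated below the theorem are the ingredients.

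\textbf{Step 1 (local projection regularity).} Write $\Phi_k(s,\vartheta):=\arg\max_{u\in\mathcal U(s)}H(t_k,t_k,s,u,\vartheta)$ for the diagonal maximizer. Assume uniform strong concavity of $u\mapsto H$ with modulus $\kappa>0$ near the relevant trajectories, together with the smoothness used in Step~3 of Proposition~\ref{prop:app_diagonal_enforcement}. A standard implicit-function/strong-monotonicity argument then makes $\Phi_k$ Lipschitz in $(s,\vartheta)$ with constant $L_\Phi\lesssim (L_\lambda+L_Z)/\kappa$. Under the KKT formulation the same holds through strong monotonicity of the variational inequality.

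We have $u^{\mathrm{proj}}_k=\Phi_k(\bar S_k,\widehat\vartheta_k)+e^{\mathrm{num}}_k$, where $e^{\mathrm{num}}$ is controlled by $\eta_\ell^{\mathrm{num}}$ via \eqref{eq:app_diag_foc_hat_eps}. We also have $u^\star_k=\Phi_k(S^\star_k,\vartheta^\star_k)$ by the diagonal maximum condition \eqref{eq:maxcond_general_Dst}. Splitting
\[
u^{\mathrm{proj}}_k-u^\star_k=\bigl[\Phi_k(\bar S_k,\widehat\vartheta_k)-\Phi_k(\bar S_k,\bar\vartheta_k)\bigr]+\bigl[\Phi_k(\bar S_k,\bar\vartheta_k)-\Phi_k(S^\star_k,\vartheta^\star_k)\bigr]+e^{\mathrm{num}}_k
\]
gives, in slab norm, a bound of $L_\Phi\delta_\ell^{\mathrm{diag}}+\eta_\ell^{\mathrm{num}}$ plus a term $L_\Phi\|(\bar S-S^\star,\bar\vartheta-\vartheta^\star)\|_{\ell,2}$.

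\textbf{Step 2 (residual-to-solution transfer).} This step bounds the warm-up gap to the exact diagonal solution. The pair $(\bar S,\bar u,\bar\vartheta)$ satisfies the forward Euler equation exactly. By Theorem~\ref{thm:nonexp_bptt_bridge_main}, it also satisfies the anchored adjoint drift identity, up to $\mathcal C^{\mathrm{FOC}}_k\Delta t+\rho^{\mathrm{disc}}_k$. Its control violates the maximum condition by $r^{\mathrm{warm}}_k:=\bar u_k-\Phi_k(\bar S_k,\bar\vartheta_k)$, and I define $r^{\mathrm{warm}}$ to absorb $r^{\mathrm{FOC}}$ via strong concavity. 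So the warm-up is an approximate solution of the diagonal forward--backward system (FBSDE/BSVIE with Hamiltonian closure), with residual $r^{\mathrm{warm}}$. The transfer condition posits that the linearized system is well posed on short time intervals.

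\textbf{Step 3 (short-slab propagation, the main obstacle).} On a slab of length $\tau$, standard forward SDE and BSDE a priori estimates give contraction-type bounds. The forward error is at most $C\tau^{1/2}$ times the control error plus the incoming state error. The adjoint error is at most $C\tau^{1/2}$ times the state and control errors plus the outgoing adjoint error at the slab end. Feeding these into Step~1 and choosing $\tau_0$ with $CL_\Phi\tau_0^{1/2}<1/2$ lets us absorb the coupling terms. The result is $\|u^{\mathrm{proj}}-u^\star\|_{\ell,2}\le C(\|r^{\mathrm{warm}}\|_{\ell,2}+\delta^{\mathrm{diag}}_\ell+\eta^{\mathrm{num}}_\ell)$ plus boundary-error terms. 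This is where the fully coupled forward--backward structure bites, and non-multiplicativity adds anchor dependence across slabs. I would first try to treat the anchor-$t_k$ kernel $D(t_k,\cdot)$ as a bounded continuous perturbation, using the boundedness assumption. The diagonal adjoints $\bar\vartheta_k$ then inherit uniform Lipschitz dependence on the slab data.

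\textbf{Step 4 (finite patching).} Sum the slab estimates over the $L=T/\tau$ slabs. Handle the interface terms by the finite patching condition: inter-slab propagation constants are uniformly bounded and finitely many, so they can be absorbed into $C_{\mathrm{diag}}$ through a discrete Gr\"onwall step. This gives \eqref{eq:app_diag_control_gap_full}.

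The refinement replacing $\delta^{\mathrm{diag}}_\ell$ follows by substitution. Bound the bias by Euler strong order $\Delta t^{1/2}$ and the Monte Carlo variance by $M^{-1/2}$.

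For the consequences, when $D$ is multiplicative, combine the control gap with a Lipschitz value bound (Pontryagin/duality expansion) to get the value gap. Otherwise, evaluate the spike-variation defect of Section~\ref{sec:pmp} at $u^{\mathrm{proj}}$ by a first-order expansion of $J(t,\cdot)$, which is linear in the control gap. If $\sigma$ is independent of $u$, $H$'s $u$-derivative does not involve $Z$, so every $Z$-term drops.
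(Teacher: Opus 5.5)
\textbf{Where the gap is.} Your Step~1 split uses the same pivot as the paper. The middle bracket $\Phi_k(\bar S_k,\bar\vartheta_k)-\Phi_k(S^\star_k,\vartheta^\star_k)$ is exactly $u^\dagger_k-u^\star_k$, where $u^\dagger_k:=\mathcal P_k(\bar S_k,\bar\vartheta_k)$ is the paper's ideal warm-up projection. The gap is in Steps~2--4. There you try to \emph{derive} $\sum_\ell\|u^\dagger-u^\star\|_{\ell,2}\lesssim\sum_\ell\|r^{\mathrm{warm}}\|_{\ell,2}$ from short-interval well-posedness of the linearized forward--backward system, followed by discrete Gr\"onwall patching. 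That patching does not close, for two reasons.

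First, errors travel in both directions. State errors enter slab $\ell$ from the left, through $\bar S-S^\star$ at $t_{k_\ell}$, driven by control errors on earlier slabs. Adjoint errors enter from the right, as your ``outgoing adjoint error at the slab end.'' The control error on each slab depends on both. So across slabs you face a two-point boundary problem, not a forward recursion. Short-time contraction of a coupled Hamiltonian FBSDE does not extend to $[0,T]$ without a global structural condition, such as joint monotonicity/concavity, a decoupling field, or a small horizon. Such systems can genuinely lose solvability or uniqueness on long horizons.

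Second, for non-multiplicative $D$ there is an extra obstruction. $\bar\vartheta_k$ and $\vartheta^\star_k$ solve adjoint BSDEs anchored at $t_k$ on all of $[t_k,T]$. Their values at a slab end are therefore \emph{off-diagonal} anchored adjoints, not the diagonal adjoints of the next slab. Controlling them needs estimates uniform over the two-parameter (BSVIE-type) family, and treating $D(t_k,\cdot)$ as a bounded perturbation does not provide that.

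The paper avoids all of this. It takes residual-to-solution transfer as the \emph{global} hypothesis $\sum_\ell\|u^\dagger-u^\star\|_{\ell,2}\le C_{\mathrm{trans}}\sum_\ell\|r^{\mathrm{warm}}\|_{\ell,2}$. The theorem then follows from $u^{\mathrm{proj}}-u^\star=(u^{\mathrm{proj}}-u^\dagger)+(u^\dagger-u^\star)$, with $C_{\mathrm{diag}}=C_{\mathrm{warm}}+C_{\mathrm{trans}}$. Under the theorem's hypotheses, your Steps~2--4 should simply be replaced by invoking that bound.

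\textbf{The term that actually needs proof.} The paper's real work is on $u^{\mathrm{proj}}-u^\dagger$, and this is where your Step~1 is too quick. The Stage~2 control is a feedback run along its own state $S^{\mathrm{proj}}$. The exact maximizer to compare with is therefore $\Phi_k(S^{\mathrm{proj}}_k,\widehat\vartheta_k)$, not $\Phi_k(\bar S_k,\widehat\vartheta_k)$. You are missing a term $(L_s/m_H)\|S^{\mathrm{proj}}-\bar S\|_{\ell,2}$, where $m_H$ is your $\kappa$. The paper handles it as follows:
\begin{itemize}
\item it bounds the state mismatch by short-slab propagation, $\|S^{\mathrm{proj}}-\bar S\|_{\ell,2}\le C_S\Gamma^{\mathrm{in}}_\ell+C_S\tau^{1/2}\|u^{\mathrm{proj}}-\bar u\|_{\ell,2}$;
\item it uses $\|u^\dagger-\bar u\|_{\ell,2}\le m_H^{-1}\|r^{\mathrm{warm}}\|_{\ell,2}$, which follows from strong concavity;
\item it absorbs the resulting $\alpha_\tau\|u^{\mathrm{proj}}-u^\dagger\|_{\ell,2}$ into the left side, with $\alpha_\tau=L_sC_S\tau^{1/2}/m_H<1$. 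This is where $\tau_0$ comes from.
\end{itemize}
Both $\widehat\vartheta$ and $\bar\vartheta$ refer to the \emph{frozen} warm-up policy, so no adjoint is propagated in this comparison. The inter-slab recursion $\Gamma^{\mathrm{in}}_{\ell+1}\le C_{\mathrm{prop}}(\Gamma^{\mathrm{in}}_\ell+\tau^{1/2}\|u^{\mathrm{proj}}-\bar u\|_{\ell,2})$ is then purely forward, and the discrete Gr\"onwall step is legitimate there.

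One smaller point: $\eta^{\mathrm{num}}$ is a gradient/KKT residual. It becomes a control error $\eta^{\mathrm{num}}_\ell/m_H$ through strong concavity, not through the $\sqrt{2L\varepsilon_{\mathrm{opt}}}$ bound you cited.
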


\paragraph{Diagonal projection map, warm-up projection, and slab norms.}
Fix an Euler mesh $t_k=t_0+k\Delta t$ and write
\[
H_k^{\mathrm{diag}}(s,u;\vartheta)
:=
H(t_k,t_k,s,u,y,z),
\qquad \vartheta=(y,z).
\]
For the pair $\vartheta=(y,z)$ we use any fixed product norm $|\vartheta|:=|y|+|z|_F$.
Define the pointwise diagonal projection map
\[
\mathcal P_k(s,\vartheta)
\in
\operatorname*{arg\,max}_{u\in\mathcal U(s)} H_k^{\mathrm{diag}}(s,u;\vartheta).
\]
Let $(\bar S_k,\bar u_k)$ be the Stage~1 warm-up state/control pair, with
$\bar u_k:=u_{\theta^\star}(t_k,\bar S_k)$.
For each mesh time $t_k$, let $\bar\vartheta_k=(\bar\lambda_k,\bar Z_k)$ denote the exact diagonal adjoint pair for the
anchored-at-$t_k$ continuation problem under the warm-up policy, evaluated at $\bar S_k$.
Let $\widehat\vartheta_k=(\widehat\lambda_k,\widehat Z_k)$ be the Stage~2 estimator used by the algorithm.
We define the \emph{ideal warm-up projection}
\[
u_k^{\dagger}:=\mathcal P_k(\bar S_k,\bar\vartheta_k).
\]
Let $u_k^\star$ denote the exact diagonal Pontryagin control; equivalently,
$u^\star$ is the classical optimal control in the multiplicative regime and the exact time-consistent equilibrium control in the non-multiplicative regime.
Partition the Euler mesh into slabs
\[
I_\ell := \{k_\ell,\dots,k_{\ell+1}-1\},
\qquad
k_{\ell+1}-k_\ell=m_\tau,
\qquad
\tau=m_\tau\Delta t,
\]
and define
\[
\|v\|_{\ell,2}^2:=\sum_{k\in I_\ell}\mathbb E|v_k|^2\,\Delta t,
\qquad
\|v\|_{\mathrm{slab},1}:=\sum_{\ell=0}^{L-1}\|v\|_{\ell,2}.
\]
We also write
\[
\delta_\ell^{\mathrm{diag}}:=\|\widehat\vartheta-\bar\vartheta\|_{\ell,2}
\qquad\text{and}\qquad
\Gamma_\ell^{\mathrm{in}}:=\bigl(\mathbb E|S_{k_\ell}^{\mathrm{proj}}-\bar S_{k_\ell}|^2\bigr)^{1/2},
\]
where $S^{\mathrm{proj}}$ denotes the state process driven by the Stage~2 feedback control.

\begin{assumption}[Local projection regularity and short-slab state propagation]
\label{ass:diag_projection_local}
There exist constants $m_H,L_s,L_{\vartheta},C_S,C_{\mathrm{prop}}>0$ such that the following hold on a working neighborhood of the reference trajectories.
\begin{enumerate}
    \item[(i)] (Uniform strong concavity in the control.)
    For every $k$, every admissible state $s$, every admissible adjoint pair $\vartheta$, and all $u_1,u_2\in\mathcal U(s)$,
    \[
    \bigl\langle \nabla_u H_k^{\mathrm{diag}}(s,u_1;\vartheta)-\nabla_u H_k^{\mathrm{diag}}(s,u_2;\vartheta),\,u_1-u_2\bigr\rangle
    \le -m_H |u_1-u_2|^2.
    \]

    \item[(ii)] (Lipschitz dependence on state and adjoint.)
    For every $k$, every admissible $s_1,s_2$, every admissible $\vartheta_1,\vartheta_2$, and every $u$,
    \[
    \bigl|\nabla_u H_k^{\mathrm{diag}}(s_1,u;\vartheta_1)-\nabla_u H_k^{\mathrm{diag}}(s_2,u;\vartheta_2)\bigr|
    \le L_s |s_1-s_2| + L_{\vartheta}|\vartheta_1-\vartheta_2|.
    \]

    \item[(iii)] (Short-slab propagation of state mismatch.)
    For any two admissible controls $u^1,u^2$ on a slab $I_\ell$,
    \[
    \|S^{u^1}-S^{u^2}\|_{\ell,2}
    \le
    C_S\,\Gamma_\ell^{\mathrm{in}}
    +
    C_S\,\tau^{1/2}\,\|u^1-u^2\|_{\ell,2},
    \]
    where $\Gamma_\ell^{\mathrm{in}}=(\mathbb E|S_{k_\ell}^{u^1}-S_{k_\ell}^{u^2}|^2)^{1/2}$.

    \item[(iv)] (Propagation to the next slab.)
    For consecutive slabs,
    \[
    \Gamma_{\ell+1}^{\mathrm{in}}
    \le
    C_{\mathrm{prop}}\Bigl(\Gamma_\ell^{\mathrm{in}}+\tau^{1/2}\,\|u^1-u^2\|_{\ell,2}\Bigr).
    \]
\end{enumerate}
\end{assumption}

\paragraph{Warm-up residual and numerical solver residual.}
Define the local diagonal warm-up residual by
\begin{equation}
\label{eq:diag_warm_residual}
r_k^{\mathrm{warm}}
:=
\operatorname{dist}\Bigl(0,\nabla_u H_k^{\mathrm{diag}}(\bar S_k,\bar u_k;\bar\vartheta_k)+N_{\mathcal U(\bar S_k)}(\bar u_k)\Bigr),
\end{equation}
with slab norm
\[
\|r^{\mathrm{warm}}\|_{\ell,2}^2:=\sum_{k\in I_\ell}\mathbb E|r_k^{\mathrm{warm}}|^2\,\Delta t.
\]
Assume that the numerical Stage~2 routine returns $u_k^{\mathrm{proj}}$ satisfying
\begin{equation}
\label{eq:diag_num_residual}
\operatorname{dist}\Bigl(0,\nabla_u H_k^{\mathrm{diag}}(S_k^{\mathrm{proj}},u_k^{\mathrm{proj}};\widehat\vartheta_k)+N_{\mathcal U(S_k^{\mathrm{proj}})}(u_k^{\mathrm{proj}})\Bigr)
\le
\eta_k^{\mathrm{num}},
\end{equation}
and define
\[
\eta_\ell^{\mathrm{num}}:=\left(\sum_{k\in I_\ell}\mathbb E|\eta_k^{\mathrm{num}}|^2\,\Delta t\right)^{1/2}.
\]

\begin{lemma}[Short-slab control gap around the ideal warm-up projection]
\label{lem:diag_short_slab_warm}
Assume Assumption~\ref{ass:diag_projection_local} and define
\[
\alpha_\tau := \frac{L_s C_S}{m_H}\,\tau^{1/2}.
\]
If $\alpha_\tau<1$, then the Stage~2 projected control satisfies
\begin{equation}
\label{eq:diag_short_slab_warm}
\|u^{\mathrm{proj}}-u^{\dagger}\|_{\ell,2}
\le
\frac{1}{1-\alpha_\tau}
\left[
\frac{L_s C_S}{m_H}\,\Gamma_\ell^{\mathrm{in}}
+
\frac{\alpha_\tau}{m_H}\,\|r^{\mathrm{warm}}\|_{\ell,2}
+
\frac{L_{\vartheta}}{m_H}\,\delta_\ell^{\mathrm{diag}}
+
\frac{1}{m_H}\,\eta_\ell^{\mathrm{num}}
\right].
\end{equation}
In particular, on the first slab (or whenever $\Gamma_\ell^{\mathrm{in}}=0$),
\begin{equation}
\label{eq:diag_short_slab_warm_noin}
\|u^{\mathrm{proj}}-u^{\dagger}\|_{\ell,2}
\le
\frac{1}{m_H(1-\alpha_\tau)}
\left[
\alpha_\tau\,\|r^{\mathrm{warm}}\|_{\ell,2}
+
L_{\vartheta}\,\delta_\ell^{\mathrm{diag}}
+
\eta_\ell^{\mathrm{num}}
\right].
\end{equation}
\end{lemma}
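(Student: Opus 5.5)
The plan is a pointwise strong-monotonicity comparison at each mesh index $k\in I_\ell$. We compare $u_k^{\mathrm{proj}}$ with the ideal warm-up projection $u_k^\dagger=\mathcal P_k(\bar S_k,\bar\vartheta_k)$ and then aggregate over the slab in the norm $\|\cdot\|_{\ell,2}$. The state mismatch is closed using Assumption~\ref{ass:diag_projection_local}(iii), and the smallness of $\alpha_\tau$ lets us absorb the resulting self-referential term.

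\textbf{Step 1 (pointwise variational inequality).} By \eqref{eq:diag_num_residual} there is a vector $e_k$ with $|e_k|\le\eta_k^{\mathrm{num}}$ and a normal vector $n_k\in N_{\mathcal U}(u_k^{\mathrm{proj}})$ such that
\[
e_k=\nabla_u H_k^{\mathrm{diag}}(S_k^{\mathrm{proj}},u_k^{\mathrm{proj}};\widehat\vartheta_k)+n_k .
\]
Since $u_k^\dagger$ is an exact maximizer, $0\in\nabla_u H_k^{\mathrm{diag}}(\bar S_k,u_k^\dagger;\bar\vartheta_k)+N_{\mathcal U}(u_k^\dagger)$. I will assume the admissible set is convex and, locally, independent of the state, so that both normal cones refer to the same set; a state-dependent $\mathcal U(s)$ would need an extra Lipschitz-set argument, which I would defer. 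Pairing the two inclusions with $u_k^{\mathrm{proj}}-u_k^\dagger$ and using monotonicity of the normal cone leaves only gradient differences. Inserting and subtracting $\nabla_u H_k^{\mathrm{diag}}(\bar S_k,u_k^{\mathrm{proj}};\bar\vartheta_k)$, strong concavity (i) and the Lipschitz bound (ii) give
\[
m_H|u_k^{\mathrm{proj}}-u_k^\dagger|\le L_s|S_k^{\mathrm{proj}}-\bar S_k|+L_\vartheta|\widehat\vartheta_k-\bar\vartheta_k|+\eta_k^{\mathrm{num}} .
\]
Squaring, taking expectations, weighting by $\Delta t$, and applying Minkowski's inequality yields
\[
m_H\|u^{\mathrm{proj}}-u^\dagger\|_{\ell,2}\le L_s\|S^{\mathrm{proj}}-\bar S\|_{\ell,2}+L_\vartheta\delta_\ell^{\mathrm{diag}}+\eta_\ell^{\mathrm{num}} .
\]

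\textbf{Step 2 (state mismatch and the warm-up residual).} Apply (iii) with $u^1=u^{\mathrm{proj}}$ and $u^2=\bar u$:
\[
\|S^{\mathrm{proj}}-\bar S\|_{\ell,2}\le C_S\Gamma_\ell^{\mathrm{in}}+C_S\tau^{1/2}\|u^{\mathrm{proj}}-\bar u\|_{\ell,2} .
\]
Split the control gap by the triangle inequality as $\|u^{\mathrm{proj}}-u^\dagger\|_{\ell,2}+\|u^\dagger-\bar u\|_{\ell,2}$. For the second piece, compare $\bar u_k$ and $u_k^\dagger$ at the same state $\bar S_k$ and the same adjoint $\bar\vartheta_k$. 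Here $\bar u_k$ has residual $r_k^{\mathrm{warm}}$ by \eqref{eq:diag_warm_residual} and $u_k^\dagger$ has residual zero, so the Step~1 argument with $L_s$ and $L_\vartheta$ terms absent gives $|u_k^\dagger-\bar u_k|\le r_k^{\mathrm{warm}}/m_H$. Hence $\|u^\dagger-\bar u\|_{\ell,2}\le\|r^{\mathrm{warm}}\|_{\ell,2}/m_H$.

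\textbf{Step 3 (absorption).} Write $X:=\|u^{\mathrm{proj}}-u^\dagger\|_{\ell,2}$ and divide the Step~1 bound by $m_H$. Substituting Step~2 gives
\[
X\le \tfrac{L_sC_S}{m_H}\Gamma_\ell^{\mathrm{in}}+\alpha_\tau X+\tfrac{\alpha_\tau}{m_H}\|r^{\mathrm{warm}}\|_{\ell,2}+\tfrac{L_\vartheta}{m_H}\delta_\ell^{\mathrm{diag}}+\tfrac1{m_H}\eta_\ell^{\mathrm{num}} .
\]
Since $\alpha_\tau<1$, moving $\alpha_\tau X$ to the left and dividing by $1-\alpha_\tau$ gives \eqref{eq:diag_short_slab_warm}. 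Setting $\Gamma_\ell^{\mathrm{in}}=0$ gives \eqref{eq:diag_short_slab_warm_noin}.

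\textbf{Main obstacle.} The delicate step is Step~1 in the constrained case: the normal-cone terms must cancel by monotonicity, which requires a common convex admissible set, and all quantities must stay inside the working neighborhood where (i)--(ii) hold. The finiteness of $X$ needed for the absorption step comes from square integrability of the controls and is routine.
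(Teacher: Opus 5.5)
Your proof is correct and follows the paper's argument. The steps match the paper's: the pointwise bound $m_H\|u^{\mathrm{proj}}-u^{\dagger}\|_{\ell,2}\le L_s\|S^{\mathrm{proj}}-\bar S\|_{\ell,2}+L_{\vartheta}\delta_\ell^{\mathrm{diag}}+\eta_\ell^{\mathrm{num}}$, then Assumption~\ref{ass:diag_projection_local}(iii) with $u^2=\bar u$, the inverse estimate $\|u^{\dagger}-\bar u\|_{\ell,2}\le\|r^{\mathrm{warm}}\|_{\ell,2}/m_H$, and absorption of $\alpha_\tau\|u^{\mathrm{proj}}-u^{\dagger}\|_{\ell,2}$.

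The one difference is that the paper reaches the first bound through the intermediate exact maximizer $u_k^{\sharp}=\mathcal P_k(S_k^{\mathrm{proj}},\widehat\vartheta_k)$, using an inverse estimate for $\eta^{\mathrm{num}}$ plus Lipschitz stability of $\mathcal P_k$; you instead use a single variational-inequality comparison. In that comparison, your inclusion $0\in\nabla_u H+N_{\mathcal U}$, inherited from the paper's residual definitions, only yields an upper bound if $N_{\mathcal U}$ is the inward cone. With the usual outward normal cone, a maximizer satisfies $\nabla_u H\in N_{\mathcal U}(u)$, so you should state that sign convention; the paper's proof relies on the same convention tacitly.
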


\begin{proof}
Let
\[
u_k^{\sharp}:=\mathcal P_k(S_k^{\mathrm{proj}},\widehat\vartheta_k)
\]
be the exact pointwise maximizer associated with the estimated diagonal adjoint.
By strong concavity and the residual bound \eqref{eq:diag_num_residual},
\begin{equation}
\label{eq:diag_num_inverse}
\|u^{\mathrm{proj}}-u^{\sharp}\|_{\ell,2}
\le
\frac{1}{m_H}\,\eta_\ell^{\mathrm{num}}.
\end{equation}
The same strong-concavity inverse estimate applied to the warm-up pair $(\bar S_k,\bar\vartheta_k)$ gives
\begin{equation}
\label{eq:diag_warm_inverse}
\|u^{\dagger}-\bar u\|_{\ell,2}
\le
\frac{1}{m_H}\,\|r^{\mathrm{warm}}\|_{\ell,2}.
\end{equation}
Moreover, by the pointwise Lipschitz stability of the projection map implied by Assumption~\ref{ass:diag_projection_local}(i)--(ii),
\[
\|u^{\sharp}-u^{\dagger}\|_{\ell,2}
\le
\frac{L_s}{m_H}\,\|S^{\mathrm{proj}}-\bar S\|_{\ell,2}
+
\frac{L_{\vartheta}}{m_H}\,\delta_\ell^{\mathrm{diag}}.
\]
Combining this with \eqref{eq:diag_num_inverse} yields
\begin{equation}
\label{eq:diag_tri_decomp}
\|u^{\mathrm{proj}}-u^{\dagger}\|_{\ell,2}
\le
\frac{1}{m_H}\,\eta_\ell^{\mathrm{num}}
+
\frac{L_s}{m_H}\,\|S^{\mathrm{proj}}-\bar S\|_{\ell,2}
+
\frac{L_{\vartheta}}{m_H}\,\delta_\ell^{\mathrm{diag}}.
\end{equation}
Now use Assumption~\ref{ass:diag_projection_local}(iii):
\[
\|S^{\mathrm{proj}}-\bar S\|_{\ell,2}
\le
C_S\,\Gamma_\ell^{\mathrm{in}}
+
C_S\,\tau^{1/2}\,\|u^{\mathrm{proj}}-\bar u\|_{\ell,2}.
\]
By the triangle inequality and \eqref{eq:diag_warm_inverse},
\[
\|u^{\mathrm{proj}}-\bar u\|_{\ell,2}
\le
\|u^{\mathrm{proj}}-u^{\dagger}\|_{\ell,2}
+
\|u^{\dagger}-\bar u\|_{\ell,2}
\le
\|u^{\mathrm{proj}}-u^{\dagger}\|_{\ell,2}
+
\frac{1}{m_H}\,\|r^{\mathrm{warm}}\|_{\ell,2}.
\]
Insert the last two displays into \eqref{eq:diag_tri_decomp} and absorb the resulting
$\alpha_\tau\|u^{\mathrm{proj}}-u^{\dagger}\|_{\ell,2}$ term onto the left-hand side.
This gives \eqref{eq:diag_short_slab_warm}; \eqref{eq:diag_short_slab_warm_noin} follows immediately when $\Gamma_\ell^{\mathrm{in}}=0$.
\end{proof}

\begin{corollary}[Global slab-wise patching around the ideal warm-up projection]
\label{cor:diag_warm_patching}
Assume the hypotheses of Lemma~\ref{lem:diag_short_slab_warm} on each slab $I_\ell$ and the propagation estimate in Assumption~\ref{ass:diag_projection_local}(iv).
Then there exists a constant $C_{\mathrm{warm}}>0$, depending only on
$T,\tau,m_H,L_s,L_{\vartheta},C_S,C_{\mathrm{prop}}$, such that
\begin{equation}
\label{eq:diag_global_warm_bound}
\sum_{\ell=0}^{L-1}\|u^{\mathrm{proj}}-u^{\dagger}\|_{\ell,2}
\le
C_{\mathrm{warm}}
\sum_{\ell=0}^{L-1}
\Bigl(
\|r^{\mathrm{warm}}\|_{\ell,2}
+
\delta_\ell^{\mathrm{diag}}
+
\eta_\ell^{\mathrm{num}}
\Bigr).
\end{equation}
\end{corollary}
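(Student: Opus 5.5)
The argument chains Lemma~\ref{lem:diag_short_slab_warm} across slabs, using the propagation estimate of Assumption~\ref{ass:diag_projection_local}(iv) to control the incoming mismatch $\Gamma_\ell^{\mathrm{in}}$. Fix $\tau\le\tau_0$ with $\alpha_\tau\le 1/2$, so that $(1-\alpha_\tau)^{-1}\le 2$. Write
\[
e_\ell:=\|u^{\mathrm{proj}}-u^{\dagger}\|_{\ell,2},
\qquad
b_\ell:=\|r^{\mathrm{warm}}\|_{\ell,2}+\delta_\ell^{\mathrm{diag}}+\eta_\ell^{\mathrm{num}}.
\]
Then \eqref{eq:diag_short_slab_warm} gives
\[
e_\ell\le A\,\Gamma_\ell^{\mathrm{in}}+B\,b_\ell,
\qquad
A:=\frac{2L_sC_S}{m_H},
\quad
B:=\frac{2}{m_H}\max\{1,L_\vartheta\}.
\]
The incoming mismatch satisfies $\Gamma_0^{\mathrm{in}}=0$, since both processes start from the same anchor state.

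Next, apply Assumption~\ref{ass:diag_projection_local}(iv) with $u^1=u^{\mathrm{proj}}$ and $u^2=\bar u$. Splitting $\|u^{\mathrm{proj}}-\bar u\|_{\ell,2}\le e_\ell+m_H^{-1}\|r^{\mathrm{warm}}\|_{\ell,2}$ by \eqref{eq:diag_warm_inverse} gives
\[
\Gamma_{\ell+1}^{\mathrm{in}}\le C_{\mathrm{prop}}\bigl(\Gamma_\ell^{\mathrm{in}}+\tau^{1/2}e_\ell+\tau^{1/2}m_H^{-1}b_\ell\bigr).
\]
Substituting the bound on $e_\ell$ yields the scalar recursion
\[
\Gamma_{\ell+1}^{\mathrm{in}}\le K\,\Gamma_\ell^{\mathrm{in}}+K'\,b_\ell,
\]
where $K=C_{\mathrm{prop}}(1+\tau^{1/2}A)$ and $K'=C_{\mathrm{prop}}\tau^{1/2}(B+m_H^{-1})$.

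A discrete Gronwall unrolling from $\Gamma_0^{\mathrm{in}}=0$ gives
\[
\Gamma_\ell^{\mathrm{in}}\le K'\sum_{j<\ell}K^{\ell-1-j}b_j\le K'\max\{1,K\}^{L}\sum_{j<\ell}b_j.
\]
Since $L=T/\tau$ is fixed once $\tau$ is fixed, $\max\{1,K\}^{L}$ is a finite constant depending only on the admissible parameters. Summing $e_\ell\le A\Gamma_\ell^{\mathrm{in}}+Bb_\ell$ over $\ell$ and bounding each $\sum_{j<\ell}b_j$ by $\sum_j b_j$ gives
\[
\sum_\ell e_\ell\le\bigl(B+AK'L\max\{1,K\}^{L}\bigr)\sum_\ell b_\ell .
\]
This is \eqref{eq:diag_global_warm_bound}, with $C_{\mathrm{warm}}$ depending on $T,\tau,m_H,L_s,L_\vartheta,C_S,C_{\mathrm{prop}}$ as claimed.

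The main obstacle is the incoming-mismatch bookkeeping. Assumption~(iv) is stated for a generic pair of controls, so care is needed that the $\bar u$ versus $u^{\mathrm{proj}}$ comparison uses the same reference $\bar S$ that enters Lemma~\ref{lem:diag_short_slab_warm}. One must also accept that $C_{\mathrm{warm}}$ may grow exponentially in $L$. This growth is harmless because $\tau$ is fixed and the corollary allows dependence on $T$ and $\tau$, but it does mean no uniformity in $\tau\to0$ is claimed. If the sum over $\ell$ is instead to be kept linear, I would first try showing $K\le 1+c\tau^{1/2}$ when $C_{\mathrm{prop}}$ is of the form $1+O(\tau)$. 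Failing that, the exponential constant stands as stated.
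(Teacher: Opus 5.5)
Your proposal is correct and takes the same route as the paper, whose proof is only a one-line sketch: iterate Lemma~\ref{lem:diag_short_slab_warm} slab by slab, control the incoming mismatch through Assumption~\ref{ass:diag_projection_local}(iv) combined with \eqref{eq:diag_warm_inverse}, and close with a discrete Gronwall unrolling from $\Gamma_0^{\mathrm{in}}=0$; your explicit constants, which grow exponentially in $L=T/\tau$, simply fill in that sketch. The only cosmetic point is that the extra restriction $\alpha_\tau\le 1/2$ is unnecessary, because under the corollary's hypothesis $\alpha_\tau<1$ you can keep the factor $(1-\alpha_\tau)^{-1}$ inside $A$ and $B$.
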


\begin{proof}
Iterate the local estimate \eqref{eq:diag_short_slab_warm} over slabs and use Assumption~\ref{ass:diag_projection_local}(iv) to control the incoming state mismatch recursively.
Because $\alpha_\tau<1$, the resulting discrete Gronwall argument closes and yields \eqref{eq:diag_global_warm_bound}.
\end{proof}

\begin{assumption}[Residual-to-solution transfer around the diagonal control]
\label{ass:diag_transfer}
There exists a constant $C_{\mathrm{trans}}>0$ such that the ideal warm-up projection and the exact diagonal Pontryagin control satisfy
\begin{equation}
\label{eq:diag_transfer_assumption}
\sum_{\ell=0}^{L-1}\|u^{\dagger}-u^\star\|_{\ell,2}
\le
C_{\mathrm{trans}}
\sum_{\ell=0}^{L-1}\|r^{\mathrm{warm}}\|_{\ell,2}.
\end{equation}
In the multiplicative regime $u^\star$ is the exact optimal control. In the non-multiplicative regime $u^\star$ is the exact diagonal equilibrium control.
\end{assumption}

\begin{proof}[Proof of Theorem~\ref{thm:app_diag_projection_full}]
By the triangle inequality,
\[
\sum_{\ell=0}^{L-1}\|u^{\mathrm{proj}}-u^\star\|_{\ell,2}
\le
\sum_{\ell=0}^{L-1}\|u^{\mathrm{proj}}-u^{\dagger}\|_{\ell,2}
+
\sum_{\ell=0}^{L-1}\|u^{\dagger}-u^\star\|_{\ell,2}.
\]
The first term is bounded by Corollary~\ref{cor:diag_warm_patching}; the second term is bounded by Assumption~\ref{ass:diag_transfer}.
This proves \eqref{eq:app_diag_control_gap_full} with
\[
C_{\mathrm{diag}}:=C_{\mathrm{warm}}+C_{\mathrm{trans}}.
\]
If the diagonal adjoint estimator satisfies
\[
\delta_\ell^{\mathrm{diag}}
\le
C_{\mathrm{disc}}\,\Delta t^{1/2}+C_{\mathrm{MC}}\,M^{-1/2},
\]
then substituting this bound into \eqref{eq:app_diag_control_gap_full} gives the stated rate form.

If $D$ is multiplicative, then $u^\star$ is the exact optimal control for the anchored objective, so any local Lipschitz bound of the form
\[
|J(t_0,s_0;u^1)-J(t_0,s_0;u^2)|
\le
L_J\sum_{\ell=0}^{L-1}\|u^1-u^2\|_{\ell,2}
\]
immediately yields the value-gap estimate
\begin{equation}
\label{eq:diag_value_gap_appendix}
0\le J(t_0,s_0;u^\star)-J(t_0,s_0;u^{\mathrm{proj}})
\le
L_J C_{\mathrm{diag}}
\sum_{\ell=0}^{L-1}
\Bigl(
\|r^{\mathrm{warm}}\|_{\ell,2}
+
\delta_\ell^{\mathrm{diag}}
+
\eta_\ell^{\mathrm{num}}
\Bigr).
\end{equation}

If $D$ is non-multiplicative, define the diagonal equilibrium defect by
\[
\mathcal E_{\mathrm{diag}}(u)
:=
\sum_{\ell=0}^{L-1}\|r^{\mathrm{diag}}(u)\|_{\ell,2},
\]
where
\[
r_k^{\mathrm{diag}}(u)
:=
\operatorname{dist}\Bigl(0,\nabla_u H_k^{\mathrm{diag}}(S_k^u,u_k;\vartheta_k^u)+N_{\mathcal U(S_k^u)}(u_k)\Bigr)
\]
and $\vartheta_k^u=(\lambda_k^u,Z_k^u)$ is the exact diagonal adjoint pair associated with the control $u$.
If the residual map $u\mapsto r^{\mathrm{diag}}(u)$ is locally Lipschitz around $u^\star$, then since
$r^{\mathrm{diag}}(u^\star)=0$ we obtain
\begin{equation}
\label{eq:diag_equilibrium_defect_appendix}
\mathcal E_{\mathrm{diag}}(u^{\mathrm{proj}})
\le
C_{\mathrm{eq}} C_{\mathrm{diag}}
\sum_{\ell=0}^{L-1}
\Bigl(
\|r^{\mathrm{warm}}\|_{\ell,2}
+
\delta_\ell^{\mathrm{diag}}
+
\eta_\ell^{\mathrm{num}}
\Bigr)
\end{equation}
for a suitable local Lipschitz constant $C_{\mathrm{eq}}$.
This is the natural near-equilibrium counterpart of the value-gap estimate in the non-multiplicative regime.
\end{proof}

Theorem~\ref{thm:app_diag_projection_full} immediately implies the compressed main-text statement in Theorem~\ref{thm:diag_projection_near_eq_main}.

\section{Stage~1 random-anchor surrogate and time-consistency remark}
\label{app:stage1_surrogate}

Stage~1 in PG-DPO maximizes the random-anchor surrogate
\begin{equation}
\label{eq:app_surrogate}
J^{\mathrm{sur}}(\theta)
:= \mathbb E_{(t_0,s_0)\sim\nu}\big[\widehat J(t_0,s_0;\theta)\big],
\end{equation}
via BPTT through rollouts.

\begin{proposition}[Unbiased random-anchor gradient estimator]
\label{prop:app_unbiased}
Sampling $(t_0,s_0)\sim\nu$ and backpropagating $\widehat J(t_0,s_0;\theta)$ yields an unbiased estimator of $\nabla_\theta J^{\mathrm{sur}}(\theta)$.
\end{proposition}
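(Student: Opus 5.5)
The plan is to write the surrogate as an expectation over two independent sources of randomness and interchange differentiation with integration. Concretely, I will realize each rollout as a deterministic function of the anchor and the noise: fix $(t_0,s_0)\sim\nu$ and an independent vector of standard normals $\xi=(\xi_0,\dots,\xi_{N-1})$, and set $\Delta W_k=\sqrt{\Delta t}\,\xi_k$. By \eqref{eq:app_em} and \eqref{eq:app_return}, $\widehat J(t_0,s_0;\theta)=\Phi(\theta;t_0,s_0,\xi)$ for a map $\Phi$ built by finitely many compositions of $b,\sigma,\ell,g,D$ and $u_\theta$. The key point of this reparametrization is that the sampling law of $(t_0,s_0,\xi)$ does not depend on $\theta$. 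Consequently,
\[
J^{\mathrm{sur}}(\theta)=\int \Phi(\theta;t_0,s_0,\xi)\,\nu(dt_0,ds_0)\,\gamma(d\xi),
\]
with $\gamma$ the standard Gaussian law on $\mathbb R^{qN}$.

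First, I will check that $\theta\mapsto\Phi(\theta;\omega)$ is $C^1$ for each fixed $\omega=(t_0,s_0,\xi)$. This holds because it is a finite composition of $C^1$ maps: the standing assumptions give $b,\sigma$ that are $C^1$ in $(s,u)$, costs that are $C^1$, and a policy that is differentiable. I must additionally assume that $u_\theta$ is $C^1$ in $\theta$, which is implicit in "differentiable policy class" for neural networks. Moreover, BPTT computes exactly $\nabla_\theta\Phi(\theta;\omega)$ by the chain rule, in the same way as Proposition~\ref{prop:app_bptt_recursion} but with parameter sensitivities accumulated along the graph. This establishes that the estimator being used is $\nabla_\theta\Phi$ evaluated at a sample of $\omega$.

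Second, I will justify $\nabla_\theta\int\Phi=\int\nabla_\theta\Phi$ by dominated convergence, using the mean value theorem. The requirement is that, on a neighborhood $B$ of $\theta$, $\sup_{\theta'\in B}|\nabla_\theta\Phi(\theta';\omega)|\le G(\omega)$ with $G\in L^1(\nu\otimes\gamma)$. I expect this to be the main obstacle, since the paper's assumptions control Jacobians in $s$ but say nothing uniform in $\theta$. My approach is to propagate the forward sensitivity $\partial_\theta S_k$ through the Euler recursion:
\[
\partial_\theta S_{k+1}=(I+\partial_s b\,\Delta t+\partial_s\sigma\,\Delta W_k)\partial_\theta S_k+(\partial_u b\,\Delta t+\partial_u\sigma\,\Delta W_k)(\partial_s u_\theta\,\partial_\theta S_k+\partial_\theta u_\theta).
\]
I will then bound this by polynomials in $|\xi|$ and $|s_0|$, assuming locally uniform polynomial growth of $\partial_\theta u_\theta$ and $\partial_s u_\theta$ and a finite polynomial moment of $s_0$ under $\nu$. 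Since $\Delta t$ is fixed and $N$ is finite, the product of $N$ such factors has polynomial growth in $|\xi|$. Gaussian moments of every order are finite, and the polynomial growth of $\partial\ell$ and $\nabla g$ then gives an integrable envelope. Because $D$ is bounded and does not depend on $\theta$, it causes no difficulty.

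Finally, combining the two steps gives
\[
\mathbb E_{\nu\otimes\gamma}\big[\nabla_\theta\widehat J(t_0,s_0;\theta)\big]=\nabla_\theta J^{\mathrm{sur}}(\theta).
\]
The minibatch average in Algorithm~\ref{alg:pgdpo_concise} is unbiased by linearity. I will state the extra $\theta$-regularity and moment conditions explicitly as the hypotheses under which the proposition is proved.
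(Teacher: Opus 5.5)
Your proposal is correct, and it is more complete than the paper's argument. The paper's entire proof is the sentence that the claim is ``immediate from linearity of expectation and i.i.d.\ sampling of anchors.'' Linearity only shows that a minibatch average of per-sample gradients has the same mean as a single one. The real content of the proposition is the interchange $\mathbb E[\nabla_\theta \widehat J]=\nabla_\theta \mathbb E[\widehat J]$, which the paper takes for granted. The paper's phrasing also passes over the fact that $\widehat J(t_0,s_0;\theta)$ is random through the Brownian increments, not only through the anchor. Your reparametrization $\Delta W_k=\sqrt{\Delta t}\,\xi_k$ treats both sources of randomness at once and isolates why the pathwise (BPTT) estimator is unbiased: the sampling law $\nu\otimes\gamma$ does not depend on $\theta$. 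What remains is the interchange, which you settle by a mean-value/dominated-convergence envelope built from the forward sensitivity recursion.

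The paper's version buys brevity and needs nothing beyond its standing list, but it is only formal. Yours is an actual proof, at the price of extra hypotheses: $C^1$ dependence of $u_\theta$ on $\theta$ with locally uniform polynomial growth of $\partial_\theta u_\theta$ and $\partial_s u_\theta$, and moment conditions on $\nu$. Note also that the standing assumptions do not literally give joint differentiability of $\ell$ in $(s,u)$ or polynomial bounds on $\partial_s\ell$, $\partial_u\ell$, $\nabla g$. These belong on your list too, since $\partial_\theta$ of the running reward produces $\partial_u\ell\,\partial_\theta u_\theta$.

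Two small points to tidy:
\begin{itemize}
\item $\Delta t=(T-t_0)/N$ is not fixed; it varies with the sampled anchor. Argue instead that it is bounded by $T/N$, which keeps your constants uniform in $t_0$.
\item The polynomial degree of your envelope in $|s_0|$ can depend on the growth exponents and on $N$. So require moments of $\nu$ of that order (or simply compact support of $\nu$), rather than just ``a finite polynomial moment.''
\end{itemize}
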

\begin{proof}
Immediate from linearity of expectation and i.i.d.\ sampling of anchors.
\end{proof}

\begin{remark}[Why Stage~1 alone does not imply time-consistent equilibrium (non-multiplicative $D$)]
\label{rem:app_not_equilibrium}
When discounting is non-multiplicative, time-consistent equilibrium conditions are \emph{diagonal} conditions with anchor $=$ decision time (extended HJB / equilibrium control; see \citealp{ekeland2006,bjork2014,yong2012}).
Stationarity of the \emph{averaged} surrogate \eqref{eq:app_surrogate} generally enforces a weighted mixture of anchor-dependent stationarity residuals and does not, in general, imply the diagonal equilibrium condition.
Accordingly, we use Stage~1 as a robust warm-start that produces a stable differentiable rollout policy, while equilibrium/optimality is enforced by Stage~2.
\end{remark}

\section{Baseline Definitions and Interpretation Guide}
\label{app:baseline_guide}

We compare PG-DPO against baselines that differ in their required level of
model information and in how they use optimality structure.  The purpose of
these comparisons is not to claim that all baselines have identical information
access, but to separate three effects: model-free policy learning, direct
simulator-level policy optimization, and global equation-driven solution
methods.

\paragraph{PPO.}
PPO~\citep{schulman2017proximal} is included as a representative
\emph{model-free actor--critic baseline}.  Its critic is trained through
TD/Bellman-error-style updates, and the policy is updated through clipped policy
optimization.  This baseline tests whether the model-based information used by
PG-DPO translates into a practical performance gain over a standard model-free
RL method.  In Cases~2--3, PPO is given an explicit time input and is trained
using the same decision-time-anchored Monte Carlo objective used for evaluation.
Nevertheless, PPO does not explicitly enforce the diagonal, time-consistent
stationarity condition, which is the relevant first-order structure in the
time-inconsistent regimes.

\paragraph{DPO (Stage~I).}
DPO, or Direct Policy Optimization, denotes the Stage~I warm-up policy
$u_{\theta^\star}$ before the Pontryagin projection.  It is the most direct
Bellman-free comparator to PG-DPO because it optimizes the same simulator-level
Monte Carlo objective but does not use the Stage~II structural correction.  This
baseline also has the same level of simulator information as PG-DPO, making it
the cleanest ablation for isolating the contribution of the Pontryagin
projection.  Since its gradients are obtained by differentiating through the
simulated objective, DPO can be substantially more informative and lower-variance
than score-function-based unbiased policy-gradient estimators. 

Stage~I DPO also serves as a strong substitute for a pure Monte Carlo
policy-gradient baseline.  It is Bellman-free and does not use critic learning
or Pontryagin projection, but its gradients are obtained by differentiating
through the simulated objective rather than by score-function estimators.
Therefore, its gradient signals are typically more informative and lower
variance than those of unbiased likelihood-ratio policy-gradient methods.  In
our additional experiments, Stage~I DPO can be tuned to approach PG-DPO-level
accuracy in the multiplicative benchmark, where maximizing the simulated
objective is aligned with the relevant notion of optimality.  However, in the
non-multiplicative benchmark, Stage~I DPO remains close to PPO-level accuracy.
This supports the interpretation that strong direct policy optimization can work
well when objective maximization and optimality are aligned, but is insufficient
when the time-inconsistent equilibrium condition introduces a diagonal
stationarity requirement that is not enforced by objective maximization alone.

\paragraph{PINN and Deep BSDE/BSVIE.}
PINN~\citep{raissi2019pinn} and Deep BSDE/BSVIE~\citep{han2018deepbsde} are
included as \emph{global equation-driven} baselines.  These methods require
access not only to the model dynamics, but also to the corresponding
model-derived equations, such as PDE, BSDE, or BSVIE characterizations.  They
then fit a global surrogate object over the state space and recover the control
through first-order optimality conditions.  Therefore, these baselines evaluate
whether PG-DPO can remain competitive while bypassing global function fitting
and avoiding accessing model-information of the full equation-level.

\paragraph{Reference policies and omitted grid-based DP.}
The label \emph{Ground Truth} refers to numerical or analytic reference policies:
the standard HJB reference in the multiplicative regime of Case~1, and the
extended/equilibrium HJB reference in the time-inconsistent regimes of
Cases~2--3.  These references are used only for evaluation.  We omit grid-based
dynamic programming because its computational cost scales exponentially with
the state dimension, making it impractical for the multidimensional settings
considered in our experiments.

\section{Stage 2 Reduces the Hamiltonian Residual Despite Weak Stage-I Optimality}
\label{app:hamiltonian_residual}

We report two curves in Figure~\ref{fig:hamiltonian_residual}:
(i) \emph{Stage~1 (warm-up)} Hamiltonian residual under the parametric policy $u_\theta$; and
(ii) \emph{Stage~2 (Adjoint-MC projection)} Hamiltonian residual under the projected control $\hat{u}$ obtained from the
BPTT-estimated adjoint.
A key observation is that even when Stage~1 is trained only coarsely (i.e., yields a weakly optimal and not yet
PMP-consistent policy), the Stage~2 projection step still produces a pronounced reduction in the Hamiltonian residual.
This indicates that the projection acts as a structure-enforcing correction: it can substantially tighten the PMP
stationarity condition beyond what is achieved by additional warm-up alone, thereby empirically supporting that
BPTT-based adjoint estimation combined with projection enforces the Bellman-free time-consistent optimality condition in practice.
\begin{figure}
    \centering
    \includegraphics[width=0.5\linewidth]{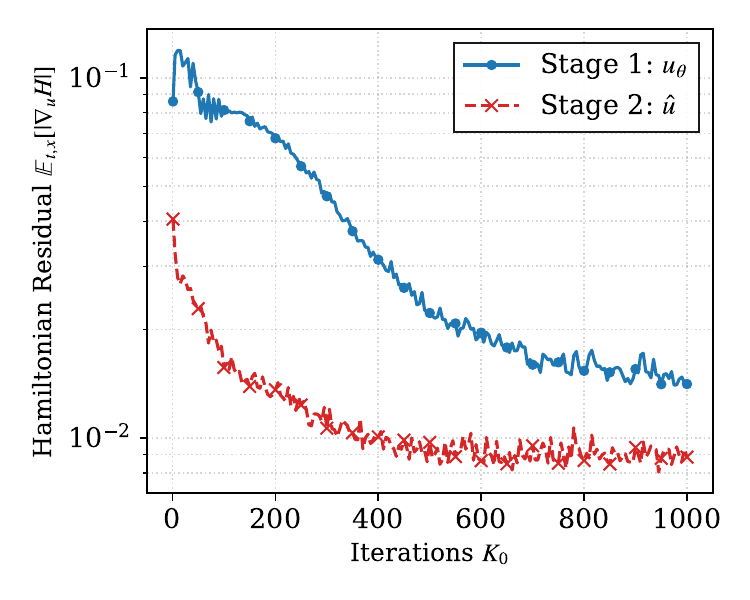}
    \caption{
  \textbf{Hamiltonian stationarity residual across iterations.}
  We plot the expected Hamiltonian residual $\mathcal{R}=\mathbb{E}[\lVert \nabla_u H\rVert_1]$
  (log scale) during Stage~1 warm-up and after Stage~2 Adjoint-MC projection, while targeting Case~1 task \ref{subsec:case1}.}
  \label{fig:hamiltonian_residual}
\end{figure}

\section{Scalability of PG-DPO}
\label{app:dimension_scalability}

We further examine how the numerical accuracy of each method changes as the
dimension increases. For each dimension
$d \in \{10,25,50,75,100\}$, we generate a corresponding multi-asset market
instance and evaluate the learned policies on the same $(\tau,W)$ grid used in
the main experiments.  The analytic log-utility solution is used as the reference
policy.  We report the grid-wise $L_1$ mean and standard deviation for both the
portfolio policy $\pi$ and the consumption rate $c$.  All results are shown on a
logarithmic scale.

\begin{figure}[t]
    \centering
    \includegraphics[width=\textwidth]{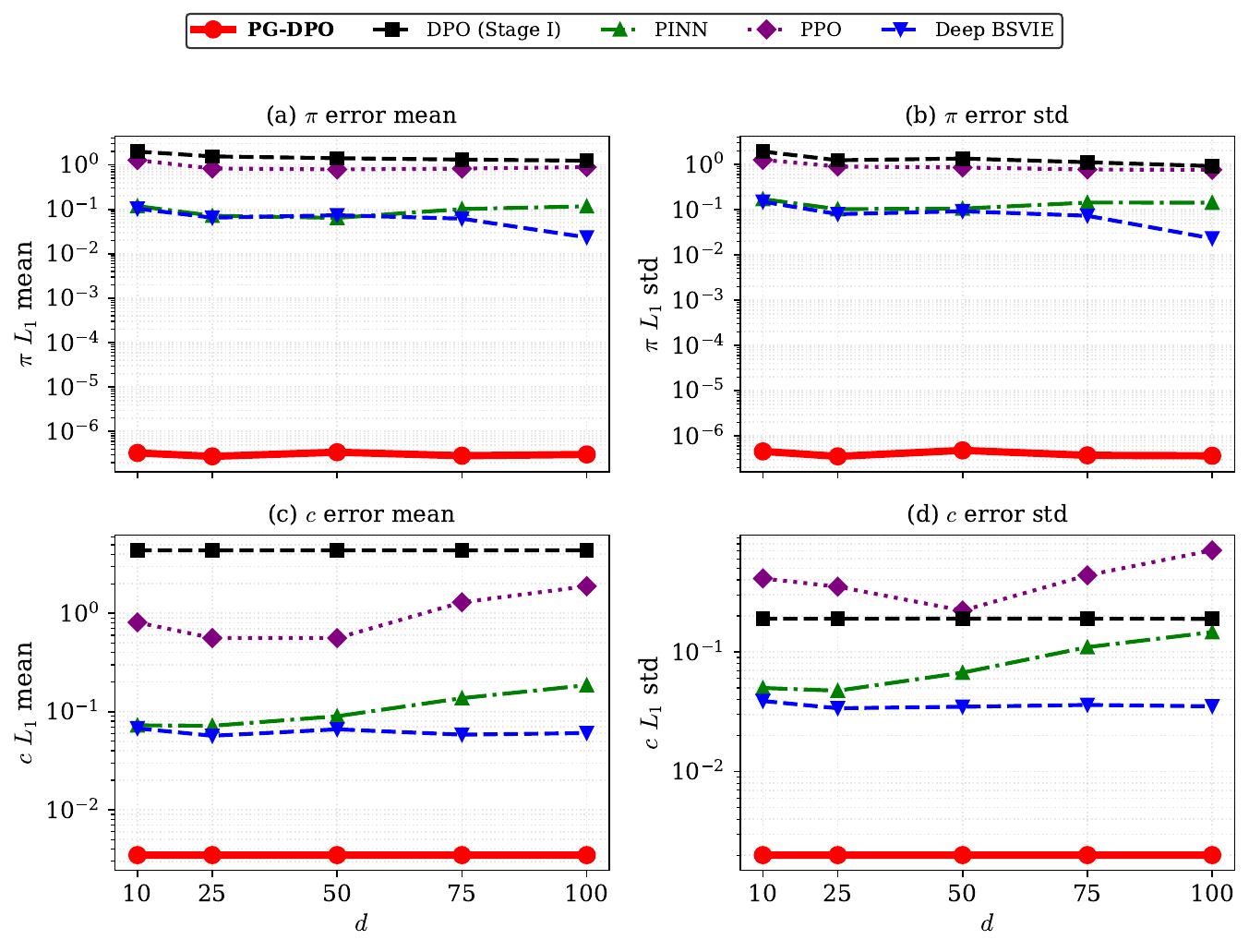}
    \caption{
    Dimension-sweep accuracy comparison.  The horizontal axis denotes the
    portfolio dimension $d$, and the vertical axis reports the $L_1$ error
    against the analytic solution on a logarithmic scale.  Panels (a) and (b)
    show the mean and standard deviation of the portfolio-policy error, while
    panels (c) and (d) show the corresponding consumption-rate errors.
    PG-DPO remains nearly flat as $d$ increases and stays several orders of
    magnitude below the purely amortized Stage-I policy and the learning-based
    baselines.  This indicates that the Pontryagin projection substantially
    improves dimension-wise accuracy once the warm-up policy provides a usable
    local initialization.
    }
    \label{fig:dimension_sweep}
\end{figure}

Figure~\ref{fig:dimension_sweep} shows a clear separation between the projected
PG-DPO policy and the other baselines.  The Stage-I policy, denoted by
DPO (Stage I), has relatively large errors across all dimensions, which suggests
that amortized policy learning alone does not reliably recover the analytic
structure in this high-dimensional control problem.  By contrast, the PG-DPO
projection reduces both $\pi$ and $c$ errors by several orders of magnitude and
does not exhibit visible degradation over the tested range of dimensions.

The comparison also highlights a structural difference between the baselines.
PINN and Deep BSVIE achieve moderate errors for the portfolio component, but
their consumption errors remain substantially larger than those of PG-DPO.
PPO is less stable in the consumption component, especially in the standard
deviation metric, which is consistent with the difficulty of learning continuous
stochastic-control policies without directly enforcing the first-order optimality
conditions.  Overall, the dimension sweep supports the claim that the
Pontryagin projection is not merely a low-dimensional correction: it provides a
dimension-robust local refinement mechanism for this benchmark family.

\section{Computational Efficiency.}
\label{app:runtime_analysis}
Table~\ref{tab:runtime_analysis} summarizes the wall-clock runtime of the proposed PG-DPO inference in Stage 2.
Our Method is computationally extremely efficient, requiring only \textbf{0.018 seconds} on a standard CPU and \textbf{0.03 seconds} on a GPU per query.
This implies that our framework can support high-frequency decision-making (e.g., $>50$Hz) on commodity hardware without requiring specialized accelerators.

\begin{table}[t!]
\centering
\caption{Computational runtime analysis per query state $(t, s)$. We report the total wall-clock time for Stage 2 inference (including BPTT and action synthesis) across different hardware configurations, demonstrating sub-second latency even on CPUs.}
\label{tab:runtime_analysis}
\vspace{2mm} 
\begin{small} 
\begin{tabular}{llccc}
\toprule
\textbf{Hardware} & \textbf{Configuration} & \textbf{Horizon ($N'$)} & \textbf{Batch ($M_{MC}$)} & \textbf{Time / Query (s)} \\
\midrule
\multirow{3}{*}{CPU (Intel Xeon)} & Base & 16 & 256 & 0.018 \\
 & Scalability Test A & 50 & 1024 & 0.25 \\
 & Scalability Test B & 100 & 4096 & 1.80 \\
\midrule
\multirow{3}{*}{GPU (NVIDIA A100)} & Base & 16 & 256 & 0.03 \\
 & Scalability Test A & 50 & 1024 & 0.04 \\
 & Scalability Test B & 100 & 4096 & 0.17 \\
\bottomrule
\end{tabular}
\end{small}
\end{table}

As shown in Table~\ref{tab:runtime_analysis}, leveraging the massive parallelism of modern GPUs allows PG-DPO to maintain a \textbf{sub-second latency (0.17s)} even in these intensive settings. This confirms that the inference cost scales linearly with problem complexity ($\mathcal{O}(M \cdot N)$), making the method feasible for both lightweight deployment on CPUs and high-precision tasks on GPUs.


\section{Additional Experimental Details and Baseline Fairness (Case 1)}
\label{app:fairness_case1}

\subsection{Compute budget and hyperparameters (Case 1)}
\label{app:budget_case1}

Table~\ref{tab:budget_case1} reports the exact training budgets and hyperparameters used in Case 1.
These values are fixed \emph{a priori} and applied consistently across random seeds.

\begin{table}[t]
\centering
\caption{\textbf{Case 1 budgets and hyperparameters.} We report (i) training update counts,
(ii) batch sizes, (iii) rollout lengths, and (iv) major optimizer settings.}
\label{tab:budget_case1}
\small
\begin{tabular}{l p{0.79\linewidth}}
\toprule
Method & Budget / hyperparameters \\
\midrule

PG-DPO (Stage 1 warm-up) &
Adam steps: $500$; batch trajectories: $256$ per step; rollout steps $M=64$;
learning rate $10^{-3}$; grad clip $1.0$.
Variance reduction: antithetic ($\pm$) and Richardson extrapolation enabled; control variate enabled
with EMA coefficient $0.98$ and coefficient clipping $10.0$. \\[2pt]

PG-DPO (Stage 2 projection / costate) &
For each time grid row ($N_T=64$): costate repeats $256$ with sub-batch $256$.
Each repeat uses antithetic pairing ($\pm$) when estimating pathwise sensitivities. \\[2pt]

PPO &
Iterations: $1000$; parallel environments $256$; rollout steps $M=64$ (total env-steps: $1000\times 256\times 64=16.384$M).
Update epochs: $5$; minibatch size: $2048$; learning rate $3\cdot 10^{-4}$;
GAE $\lambda=0.95$; clipping $\epsilon=0.2$; value coefficient $0.5$; entropy coefficient $0.0$. \\[2pt]

PINN (KAN) &
Adam steps $5000$ with batch $4096$; L-BFGS steps $1000$ (strong Wolfe line search);
learning rate $5\cdot 10^{-4}$; terminal penalty weight $20$; causal weighting parameter $100$. \\[2pt]

Deep BSDE &
Adam steps $3000$; batch $256$; time steps $64$; learning rate $10^{-4}$. \\
\bottomrule
\end{tabular}
\end{table}

\paragraph{Network architectures.}
We list the exact architectures used in Case 1 to prevent capacity mismatch claims:
\begin{itemize}
\item PG-DPO policy network: MLP with input dimension 2 and hidden width 128 for 2 layers (tanh activations), output dimension $m$.
\item PPO policy: Gaussian MLP with shared trunk (2$\to$164$\to$164$\to$164), mean head $\mu(s)\in\mathbb{R}^m$ and global log-std parameter $\log\sigma\in\mathbb{R}^m$.
\item PPO value network: MLP 2$\to$164$\to$164$\to$1 (tanh).
\item PINN (KAN): 3 Chebyshev KAN layers with hidden width 161 and degree 5.
\item Deep BSDE: two MLPs (gradient network and $Y_0$ network) with 3 hidden layers of width 164 (tanh).
\end{itemize}
\subsection{PPO objective and implementation details}
\label{app:ppo_details}

\paragraph{Anchored reward shaping and $\gamma=1$.}
We define the per-step reward
\begin{align}
  r_k \;=\; \mathbf{1}[t_k<T]\cdot D(t_0,t_k)\cdot \ell(u_k)\cdot \Delta t,
\end{align}
and add the terminal reward at the final step:
\begin{align}
  r_{M-1} \;\leftarrow\; r_{M-1} + D(t_0,T)\,g(X_T).
\end{align}
Because discounting is already included via $D(t_0,\cdot)$ in $r_k$, PPO uses an undiscounted return with $\gamma=1$.
This removes ambiguity about how non-exponential discounting interacts with RL discount factors.

\paragraph{GAE and targets.}
Let $V_\phi(s_k)$ denote the value prediction.
We compute generalized advantage estimates with $\gamma=1$:
\begin{align}
  \delta_k &= r_k + V_\phi(s_{k+1}) - V_\phi(s_k), \\
  A_k &= \delta_k + \lambda A_{k+1}, \qquad \lambda=0.95,
\end{align}
and the value target is $\hat{R}_k = V_\phi(s_k) + A_k$.
We normalize advantages within the collected batch.

\paragraph{PPO surrogate.}
The policy is Gaussian, $\pi_\theta(u|s)=\mathcal{N}(\mu_\theta(s),\mathrm{diag}(\sigma_\theta^2))$ with learnable $\log\sigma$.
The clipped objective is
\begin{align}
  L^{\text{clip}}(\theta) = \mathbb{E}\Big[\min\big(\rho_k(\theta)A_k,\,
  \mathrm{clip}(\rho_k(\theta),1-\epsilon,1+\epsilon)A_k\big)\Big],
  \quad \rho_k(\theta)=\frac{\pi_\theta(u_k|s_k)}{\pi_{\theta_{\text{old}}}(u_k|s_k)},
\end{align}
with $\epsilon=0.2$. The total loss is
\begin{align}
  \mathcal{L}(\theta,\phi)= -L^{\text{clip}}(\theta)
  + c_v\,\mathbb{E}\big[(V_\phi(s_k)-\hat{R}_k)^2\big] - c_e\,\mathbb{E}\big[\mathcal{H}(\pi_\theta(\cdot|s_k))\big],
\end{align}
with $c_v=0.5$ and $c_e=0.0$.

\paragraph{Evaluation policy.}
At evaluation time, PPO uses the deterministic mean action $u=\mu_\theta(s)$ (no sampling),
evaluated on the same $(t,x)$ grid as other methods.


\end{document}